\def\eqref#1{equation~\ref{#1}}
\def\1{\bm{1}}
\DeclareMathAlphabet{\mathsfit}{\encodingdefault}{\sfdefault}{m}{sl}
\SetMathAlphabet{\mathsfit}{bold}{\encodingdefault}{\sfdefault}{bx}{n}
\newcommand{\sigmoid}{\sigma}
\providecommand{\norm}[1]{\left\lVert#1\right\rVert}
\DeclareMathOperator*{\argmax}{arg\,max}
\DeclareMathOperator*{\argmin}{arg\,min}
\newcommand{\piThetaT}[1][t]{\pi_{\theta_{#1}}}
\newcommand{\VPhi}[1][t]{V_{\alpha, \phi}^{\piThetaT,\piThetaT}}
\newcommand{\QPhi}[1][t]{Q_{\alpha}^{\piThetaT,\piThetaT}}
\newcommand{\APhi}[1][t]{A_{\alpha}^{\piThetaT,\piThetaT}}
\newcommand{\innerprod}[2]{\left\langle{#1},{#2}\right\rangle}
\newcommand{\bc}[1]{\left\{{#1}\right\}}
\newcommand{\br}[1]{\left({#1}\right)}
\newcommand{\bs}[1]{\left[{#1}\right]}
\newcommand{\initial}{\nu_1}
\newcommand{\method}{\texttt{MPO}}
\newcommand{\oomdmethod}{\texttt{OMPO}}
\newcommand{\greentick}{\textcolor{green}{\ding{51}}}
\newcommand{\redcross}{\textcolor{red}{\ding{55}}}
\newtheorem{theorem}{Theorem}
\newtheorem{assumption}{Assumption}
\newtheorem{lemma}{Lemma}
\newtheorem{remark}{Remark}
\newtheorem{definition}{Definition}
\newtheorem{example}[theorem]{Example}
\providecommand{\rebuttal}[1]{\textcolor{black}{#1}}
\crefname{ineq}{inequ.}{inequ.}
\crefname{equation}{Eq.}{Eqs.}
\crefname{theorem}{Thm.}{Thm.}
\crefname{proposition}{Prop.}{Prop.}
\crefname{claim}{Claim}{Claims}
\crefname{algorithm}{Alg.}{Alg.}
\crefname{definition}{Def.}{Def.}
\crefname{lemma}{Lemma}{Lemmas}
\crefname{example}{Example}{Examples}
\crefname{appendix}{Appx.}{Appx.}
\crefname{figure}{Fig.}{Fig.}
\crefname{table}{Tab.}{Tab.}
\crefname{section}{Sec.}{Sec.}
\crefname{assumption}{Asm.}{Asm.}
\definecolor{ytcolor}{rgb}{1.0, 0.49, 0.0}
\newcommand{\yongtao}[1]{\textcolor{ytcolor}{(yt: #1)}}
\newcommand{\yihang}[1]{\textcolor{purple}{(Yihang: #1)}}
\newcommand{\cmark}{\ding{51}}%
\newcommand{\xmark}{\ding{55}}%
\definecolor{tabblue}{HTML}{1F77B4}
\definecolor{taborange}{HTML}{FF7F0E}
\definecolor{tabred}{HTML}{D62728}
\definecolor{tabcyan}{HTML}{17BECF}
\definecolor{tabgreen}{HTML}{2CA02C}
\newcommand\scalemath[2]{\scalebox{#1}{\mbox{\ensuremath{\displaystyle #2}}}}
\title{Multi-Step Alignment as Markov Games: \\ An Optimistic Online Mirror Descent Approach with Convergence Guarantees}
\author{%
 Yongtao Wu\footnotemark[1]
  \ \ 
    \footnotemark[2]
      \qquad{} 
  \\
  \And
 Luca Viano\footnotemark[1]  \ \ \footnotemark[2]
 \qquad{} 
 \\
  \And
 Yihang Chen\footnotemark[3] 
 \qquad{} \\
  \And
 Zhenyu Zhu\footnotemark[2] \\ 
  \And
  Kimon Antonakopoulos \footnotemark[2] \\ 
  \And
  Quanquan Gu\footnotemark[3] \ \ \footnotemark[4] \\
  \And
  Volkan Cevher\footnotemark[2] \ \ \footnotemark[4] \\
}
\begin{document}
\newcommand\marklessfootnote[1]{
    \addtocounter{footnote}{1} 
    \footnotetext{#1}
}
\maketitle
 {\def\thefootnote{}\footnotetext{† EPFL ‡ UCLA * Equal contribution § Equal mentorship
}}

\begin{abstract}
Reinforcement Learning from Human Feedback (RLHF) has been highly successful in aligning large language models with human preferences. While prevalent methods like DPO have demonstrated strong performance, they frame interactions with the language model as a bandit problem, which limits their applicability in real-world scenarios where multi-turn conversations are common. Additionally, DPO relies on the Bradley-Terry model assumption, which does not adequately capture the non-transitive nature of human preferences. In this paper, we address these challenges by modeling the alignment problem as a two-player constant-sum Markov game, where each player seeks to maximize their winning rate against the other across all steps of the conversation. 
Our approach Optimistic Multi-step Preference Optimization (\oomdmethod{}) is built upon the optimistic online mirror descent algorithm~\citep{rakhlin2013online,joulani17a}. Theoretically, we provide a rigorous analysis for the convergence of \oomdmethod{}  and show that \oomdmethod{} requires $\mathcal{O}(\epsilon^{-1})$ policy updates to converge to an $\epsilon$-approximate Nash equilibrium. We also validate the effectiveness of our method on multi-turn conversations dataset and math reasoning dataset.

\end{abstract}

\section{Introduction}
 

In recent years, the integration of large language models (LLMs)~\citep{brown2020language,achiam2023gpt,team2023gemini,dubey2024llama} into various applications has highlighted the need for advanced preference alignment methods~\citep{ziegler2019fine,stiennon2020learning,bai2022training,ouyang2022training,rafailov2023direct,guo2025deepseek}. As models increasingly engage in complex decision making or reasoning scenarios,
the ability to align their outputs with user preferences
requires a learning algorithm that satisfies the following desiderata.
\begin{itemize}[leftmargin=*]
\item \textbf{Desiderata 1: Multi-step learning with intermediate preference signal.} In multi-round conversations, alignment must occur at each turn to meet user needs. Similarly, in mathematical reasoning with chain-of-thought prompting, step-by-step validation is essential to ensure accuracy in the final result. Unfortunately, most existing works on reinforcement learning from human feedback (RLHF) focus on one-step preference~\citep{rafailov2023direct,meng2024simpo,munos2024nash,azar2024general,zhang2024iterative,wu2024self}.
 In addition, most of the multi-step works~\citep{wang2023rlhf,shani2024multi,swamyminimaximalist} assume that the preferences are revealed only at the terminal state,  neglecting intermediate preferences.
\item \textbf{Desiderata 2: General preferences.} The learning algorithm can handle general, non-transitive preference models, bypassing the Bradley-Terry assumption~\citep{bradley1952rank}, which assigns a score for each answer based on its preference. This assumption of the model cannot capture the non-transitive preference, which is often observed in the averaged human preferences from the population~\citep{tversky1969intransitivity,gardner1970mathematical}.
\item \textbf{Desiderata 3: Convergence guarantees.} It has reliable and robust convergence guarantees in the multi-turn setting. Recent work \cite{shani2024multi} considers an $\alpha$-regularized preference problem and exploits its strong convexity to derive convergence bounds. Unfortunately, these bounds are not very informative when the regularization strength $\alpha$ tends to $0$. It remains open to prove a convergence rate which does not deteriorate for vanishing $\alpha$.
Moreover, a non vacuous upper bound on the number of policies updates should depend on the number of sentences $\abs{\mathcal{Y}}$ at most logarithmically.
\end{itemize}

\renewcommand{\arraystretch}{1.8} 
\begin{table}[t]
    \caption{\label{tab:related} Comparison between the literature of learning from a general preference oracle, which may violate the Bradley–Terry assumption. $^\dagger$ denotes that this rate applies for convergences to the Nash equilibrium (NE) of the regularized game, obtained by adding a penalty in the form $\alpha D(\cdot,\pi_{\mathrm{ref}})$, where $D$ denotes the KL divergence. IPS stands for intermediate preference signal. $^\star$ denotes that the last iterate convergence is asymptotic only.  Detailed related work can be found in \cref{sec:relatedwork}.}
    \resizebox{\textwidth}{!}{%
    \begin{tabular}{|c|c|c|c|c|c|c|}
        \hline
        \textbf{Algorithm}  &   \textbf{IPS} & \textbf{Multi Step} & \textbf{Updates for $\epsilon$-NE}    & \textbf{Without $\alpha$-strong convexity}   & \textbf{ $\alpha$ } & \textbf{Last Iterate Guarantees} \\ \hline
        SPPO \cite{wu2024self} & \redcross &\redcross & $\mathcal{O}(\varepsilon^{-2})$ & \greentick & $-$ & \redcross \\ \hline
        SPO \cite{swamyminimaximalist} & \redcross &\greentick & $\mathcal{O}(\varepsilon^{-2})$ & \greentick & $-$ & \redcross \\ \hline
        MTPO \cite{shani2024multi} & \redcross &\greentick & $\mathcal{O}(\alpha^{-2}\varepsilon^{-1})^\dagger$ & \redcross & $0.0025$ & \greentick \\ \hline
        Nash-MD \cite{munos2024nash} & \redcross &\redcross & $\mathcal{O}(\alpha^{-2}\varepsilon^{-1})^\dagger$ & \redcross & $0.008$ & \greentick \\ \hline
EGPO \cite{zhou2025extragradient} & \redcross &\redcross &       $\mathcal{O}(\abs{\mathcal{Y}}\varepsilon^{-1})$ & \greentick & $-$ & \greentick \\ \hline
ONPO \cite{zhang2025improving} & \redcross &\redcross & $\mathcal{O}(\varepsilon^{-1})$ & \greentick & $-$ & \redcross \\ \hline
MMD \cite{wang2024magnetic} & \redcross &\redcross & asymptotic & \greentick & $-$ & \greentick$^\star$ \\ \hline
\textbf{OMPO (Ours)} & \greentick & \greentick &  $\mathcal{O}(\varepsilon^{-1})$ & \greentick & $-$ & \greentick$^\star$ \\ \hline
    \end{tabular}}
\end{table}
\renewcommand{\arraystretch}{1.0} 

In this paper, we present the first algorithm achieving the three desiderata at once by formulating multi-step general preference optimization within the framework of two-player Markov games~\citep{shapley1953stochastic}. In a two-player Markov game, each player seeks to maximize their winning rate against the other across all steps of the conversation. 

Moreover, for the multi-turn learning from preference, it is enough to consider a Markov game where each player has their own state and the transition dynamics do not depend on the state of the other player. Under this setting, we can leverage techniques from the linear programming literature in Markov decision processes \cite{Manne:1960} to formulate the multi-step problem as a bilinear problem over the space of the occupancy measures.

 We then apply the optimistic online mirror descent algorithm~\citep{rakhlin2013online,joulani17a} to obtain fast convergence guarantees. In particular, we show that it is possible to find an $\varepsilon$-Nash equilibrium of this game in $\mathcal{O}(\varepsilon^{-1})$ gradients updates. Moreover, leveraging Lagrangian duality, we show that the optimistic online mirror descent update can be implemented in a projection free manner, making it suitable for a practical implementation. 
 
 We name the derived algorithm Optimistic Multi-step Preference Optimization (\oomdmethod{}).
Numerical results demonstrate that \oomdmethod{} attains considerable improvements on multi-turn conversation datasets and math reasoning datasets. 
Our contribution is compared to the recent literature on the same topic in \cref{tab:related}.
\section{Problem setting: Multi-step RLHF as two-player Markov games}
\label{sec:preliminary}
\subsection{Notation}
We define the prompt to the language model as $x$ and the answer from the language model as $a$. For a multi-turn conversation with turn $H$, the prompts and the answers are denoted by $x_h \text{ and } a_h, \forall h \in [H]$.
The concatenation of a prompt $x$ and an answer $a$ is denoted by $[x,a]$ and can be generalized to the concatenation of multiple prompts and answers, e.g., $[x_1,a_1,\dots,x_{H},a_{H}]$. 

For any two prompt action sequences, e.g., $y = [x_1,a_1, \dots, x_{H},a_{H} ]$ and $y^\prime = [x^\prime_1,a^\prime_1, \dots , x^\prime_H,a^\prime_H]$, we define a preference oracle as $o(y \rebuttal{\succ} y^\prime) \in \{0,1\}$, which can provide preference feedback with 0-1 scores, where 1 means the conversation $y$ is preferred and 0 otherwise.
We denote $\mathbb{P}(y\succ y^\prime) = \mathbb{E}[o(y \succ y^\prime)]$ as the probability that the conversation $y$ is preferred over $y^\prime$. Moreover, we have $\mathbb{P}(y\succ y^\prime) = 1 - \mathbb{P}(y^\prime \succ y)$.

An autoregressive language model is denoted by $\pi(a|x)$, which receives input $x$ and generates answer $a$. We denote the KL divergence of two probability distributions $p$ and $q$ by $D(p , q)$. The Bregman Divergences between two points are denoted by $\mathbb{D}(p , q)$.
The sigmoid function is defined by $\sigma(z):=\frac{1}{1+e^{-z}}$. Moreover, we use capital letters to denote random variables: for example, $s$ denotes a specific state, while $S$ represents a state sampled from a certain distribution. Detailed definitions for the notations are summarized in \cref{sec:appendix_symbol}. 


\subsection{Problem formulation of multi-step RLHF}
In this section, we introduce the problem setting for multi-step RLHF. Specifically, we can cast the multi-step alignment process as an episodic finite-horizon Markov Decision Process (MDP). An MDP is a tuple $\mathcal{M}= (\mathcal{S},\mathcal{A},f,r,\initial,H)$, where $\mathcal{S}$ is the state space, $\mathcal{A}$ is the action space, $H$ is the horizon (total steps),  the initial state distribution $\initial$ is a distribution over the state space $\mathcal{S}$. 
A potentially non-stationary policy $\pi: \mathcal{A} \times [H] \rightarrow \Delta_{\mathcal{A}}$ is a mapping from states (sentences) and stages to distribution over actions. We define the policy set as $\Pi$.

Sampling an episode is done according to the following protocol.
At the initial step, we sample the prompt $X_1 \sim \initial$ and define the initial state equal to the prompt itself, i.e. $S_1 = X_1$.
For each step $h > 1$, a new action $A_h \sim \pi_h(\cdot|S_h)$ is sampled from the policy and the next prompt is sampled according to the transition function $f$, that is $X_{h+1} \sim f(\cdot|S_{h},A_{h})$,
which is equivalent to $S_{h+1} \sim f(\cdot|S_{h},A_{h})$. The equivalence comes from the fact $S_{h+1} =[S_{h},A_{h},X_{h+1}]$ by using the concatenation operator between sentences.
The episodes end after $H$ steps. 

Our setting covers a number of alignment problems, and we list some examples below.

\begin{example}[Single-step alignment]
\label{example:1}
In single-step alignment, a language model receives one prompt and outputs one answer. Our framework covers the single-step alignment by dissecting the answer into single tokens. Specifically, we set $X_1$ as the prompt, $X_2,\dots,X_{H+1}$ as empty sentences, and the answer $A_h$ at each turn consists of only one token. Then the horizon $H$ is the number of tokens in the answer. The transition between each state is deterministic. 
\end{example}

\begin{example}[Chain-of-thought reasoning alignment] In the chain-of-thought reasoning, the horizon $H$ denotes the number of reasoning steps, where $X_1$ is the initial prompt and $X_2,\dots,X_{H+1}$ are empty. 
Each $A_h$ corresponds to a reasoning step. The transition between each state is deterministic.
\end{example}

\begin{example}[Multi-turn conversation alignment] In multi-turn conversation, the horizon $H$ denotes the total number of turns in the conversation. In the $h$-th turn, $X_h$ is the prompt, and $A_h$ is the answer. The prompt in the terminal state, $X_{H+1}$, is an empty sentence.
The transition between each state can be deterministic or stochastic.
\end{example}

 Next, we define the pair-wise reward function of two state-action pairs $(s,a) \in \mathcal{S}\times\mathcal{A}$ and $(s',a') \in \mathcal{S}\times\mathcal{A}$ as the preference of two trajectories: $ r(s,a,s',a') = \mathbb{P}([s,a] \succ [s^\prime,a^\prime])\,.$

Our goal is to identify the Nash equilibrium of the following two-player constant-sum Markov game: 
\begin{equation}
\begin{split}
     (\pi^*, \pi^*)
    = 
    \arg \max_{\pi\in\Pi}\min_{\pi'\in\Pi}
    \mathbb{E}_{S_1 \sim \initial, \pi, \pi'}
    \Big[ 
    \sum_{h=1}^{H} r(S_h,A_h, S_h^\prime,A_h^\prime)
    \Big],
\end{split}
\label{minmax:eachpreference} \tag{Game}
\end{equation}
where the two state action sequences are generated with the above protocol $\bc{(S_h, A_h)}^{H}_{h=1}$ and $\bc{(S'_h, A'_h)}^{H}_{h=1}$. We enforce $S_1^\prime=S_1$ to guarantee the two agents start from the same prompt. 

For the reader’s convenience, we elaborate further on \ref{minmax:eachpreference} in \cref{sec:motication_turnreward}, discussing its interpretation in terms of the $\max\min$ operator, the role of the input $X_1$, the time horizon $H$, the availability of $\mathbb{P}$, and minimal examples that illustrate the advantages of general preferences and intermediate rewards.




\subsection{Useful facts in Markov games}
Next, we present some additional quantities and notation which help in dealing with Markov games. We define the \textit{pair-wise} state and state action value functions as follows
\begin{equation*}
\begin{split}
& V_h^{\pi,\pi^\prime}(s,s^\prime) = 
\mathbb{E}_{\pi,\pi'}
\Big[ 
\sum_{\tau=h}^{H} r(S_{\tau},A_{\tau},
S_{\tau}^\prime,A_{\tau}^\prime)
| S_h =s, S^\prime_h = s'\Big]
\,, \\
& Q^{\pi,\pi^\prime}_h(s,a,s^\prime,a^\prime) = 
\Big[ 
\sum_{\tau=h}^{H} r(S_{\tau},A_{\tau},
S_{\tau}^\prime,A_{\tau}^\prime)
| S_h=s, S^\prime_h =s', A_h=a, A'_h=a'\Big],
\end{split}
\end{equation*}
where  $A_{\tau} \sim \pi_{\tau}(\cdot|S_{\tau})$, $A_{\tau}^\prime \sim \pi_{\tau}^\prime(\cdot|S_{\tau}^\prime)$, $S_{\tau+1} \sim f(\cdot|S_{\tau},A_{\tau})$, and $S_{\tau+1}^\prime \sim f(\cdot|S_{\tau}^\prime,A_{\tau}^\prime)$. We will often denote $V^{\pi,\pi'}_1$ without the subscript, i.e., as $V^{\pi,\pi'}$. Moreover, notice that we consider potentially non-stationary policies. In particular, $\pi_h$ denotes the probability of choosing actions at stage $h$ and $\pi = (\pi_1, \dots, \pi_H)$ denotes the global policy that samples actions according to $\pi_h$ at stage $h$.
\begin{remark}Readers might be surprised on reading a double state dependence in the definition of the state value function. Indeed, in standard literature of two-player Markov games, the tuple $s,s'$ is considered as a joint common state. Therefore, the agents generate the next actions sampling from policies \emph{conditioned on the joint state} ($\pi_h(\cdot|s_h,s_h')$). This protocol is not suitable for the conversation task in which each agent (LLM) should generate the next action conditioned only on its own state (conversation up to stage $h$). This motivates our choice of not representing $s,s'$ as a common joint state.
\end{remark}
Having introduced the value functions, we can rewrite \ref{minmax:eachpreference} in terms of state value functions as follows:
\begin{equation}
     (\pi^*, \pi^*)
    = 
    \arg \max_{\pi\in\Pi}\min_{\pi'\in\Pi}
    \mathbb{E}
    \Big[ 
    \sum_{h=1}^{H} r(S_h,A_h, S_h^\prime,A_h^\prime)
    \Big]
   = 
     \arg \max_{\pi}\min_{\pi'}\mathbb{E}_{S_1 \sim \initial}
     V^{\pi,\pi^\prime}(S_1,S_1) \,.
\label{equ:minmaxgame_0}
\end{equation}
Moreover, we will use the following compact inner product notation $\mathbb{E}_{S_1 \sim \initial}
     V^{\pi,\pi^\prime}(S_1,S_1) = \innerprod{\initial}{V^{\pi,\pi^\prime}}$. Given the above notation, we can formalize our objective. We look for a policy $\pi$ satisfying the following definition of approximate equilibrium.
\begin{definition}[\textbf{$\epsilon$-approximate Nash equilibrium}]
    A policy $\pi$ is said to be an approximate Nash equilibrium if 
    it holds 
    that:
    \begin{equation*}
\innerprod{\initial}{ V^{\pi,\pi}} - \min_{\bar{\pi}\in\Pi} \innerprod{\initial}{ V^{\pi, \bar{\pi}}} \leq \epsilon, \quad \text{and} \quad \max_{\bar{\pi}\in\Pi}\innerprod{\initial}{ V^{\bar{\pi}, \pi}} - \innerprod{\initial}{ V^{\pi,\pi}} \leq \epsilon.
\end{equation*}
\end{definition}
\subsection{The occupancy measure view}
As mentioned, our algorithm \oomdmethod{} will operate over the occupancy measure space defined as follows. Given a  policy $\pi$, let us consider a trajectory $\bc{(S_h,A_h)}^H_{h=1}$ generated as $S_1\sim \initial, A_h \sim \pi_h(\cdot|S_h)$, $S_{h+1} \sim f(\cdot|S_h,A_h)$ for all $h \geq 1$. Then, the single player occupancy measure of $\pi$, denoted as $d^\pi_h \in \Delta_{\mathcal{S}\times\mathcal{A}}$, is defined at stage $h$ as $d_h^\pi(s,a) = \mathrm{Pr}(S_h=s,A_h=a).$
    
    We also define the occupancy measure conditioned on a particular initial state $d_{h |{s}_1}^\pi(s,a) = \mathrm{Pr}(S_h=s,A_h=a | S_1 = {s}_1).$ 
    In addition, given the policies $\pi,\bar{\pi}$ and corresponding rollouts $\bc{(S_h,A_h)}^H_{h=1}$ and $\bc{(S'_h,A'_h)}^H_{h=1}$ from the same initial state $S_1=S_1^\prime$, the \emph{joint} occupancy measure of $(\pi,\bar{\pi})$ at stage $h$ is defined as $d^{\pi, \bar{\pi}}_h(s,a,s^\prime,a^\prime) =  \mathrm{Pr}(S_h=s,A_h=a,S_h^\prime=s^\prime,A_h^\prime=a^\prime)$. 

 The usefulness of the occupancy measures is that the expected value function at the initial state can be represented as an inner product between the reward function and the joint occupancy measure, i.e., $\innerprod{\initial}{V^{\pi,\bar{\pi}}} = 
\sum^H_{h=1}\left\langle r, d_h^{\pi,\bar{\pi}}\right\rangle$.
Moreover, given the structure of the game where the sequences of sentences and answers are generated independently by the two agents given an initial state $s_1 \in \mathcal{S}$, the joint occupancy measure at each step can be factorized as the product of the two agents occupancy measures given a particular $s_1$. In particular, we have $d_{h |s_1}^{\pi,\bar{\pi}}(s,a,s',a' ) = d_{h | s_1}^{\pi}(s,a)  \cdot d_{h| s_1}^{\bar{\pi}}(s',a' )$ for all $h,s,a,s',a'$.
This makes possible to write the objective in a bilinear form, that is, $\innerprod{\initial}{V^{\pi,\bar{\pi}}} = \mathbb{E}_{S_1\sim\initial} \bs{\sum_{h,s,a,s',a'} d_{h | S_1}^{\pi}(s,a)  r(s,a,s',a') d_{h  | S_1}^{\bar{\pi}}(s',a')}$. 

Moreover, we can characterize the set of the occupancy measures via $\abs{\mathcal{S}}$ dimensional affine constraints. In particular, for each possible initial state $s_1$, the set $$\mathcal{F}_{s_1} = \bigg\{ d = (d_1, \dots, d_H) : \sum_a d_{h+1}(s,a) = \sum_{s',a'} f(s|s',a') d_h(s',a'), d_1(s)= \mathds{1}\bc{s=s_1} \bigg\}$$ describes the possible occupancy measures in the sense that for any element $d = (d_1, \dots, d_H) \in \mathcal{F}_{s_1}$ there exists a policy $\pi \in \Pi$ such that $d_{h|s_1}^\pi = d_{h}$ for all $h \in [H]$. This is an elementary fact about MDP whose proof can be found in \cite{Puterman:1994}. 
$\mathcal{F}$ is the product set of the Bellman flow constraints for a particular initial state, i.e. $\mathcal{F} = \times_{s_1 \in \mathrm{supp}(\initial)}\mathcal{F}_{s_1}$. 

With this notation in place we can write the following program, which corresponds to \ref{minmax:eachpreference} lifted to the space of  occupancy measures.
\begin{align}
(d^\star, d^\star) = \argmax_{d\in\mathcal{F}} \min_{d'\in\mathcal{F}} \mathbb{E}_{S_1 \sim \initial}\sum^H_{h=1}\sum_{s,a,s',a'}  d_h(s,a | S_1) r(s,a,s',a') d_h'(s',a'|S_1)\,.
\label{occgame} \tag{Occ-Game }
\end{align}

The policy pair $(\pi^\star, \pi^\star)$ solution of \ref{minmax:eachpreference} can be retrieved from the occupancy measure pair $(d^\star, d^\star)$ as $\pi^\star(a|s) = \frac{d^\star(s,a)}{\sum_a d^\star(s,a)}$. The advantage of the reformulation is that the program over occupancy measures is linear with affine constraints while \ref{minmax:eachpreference} is non convex non concave.

Moreover, lifting the problem to the occupancy measures turns out to be fundamentally important for enabling each agent to learn a policy conditioned only on their own state. This is different from the standard literature on Markov Games \citep{daskalakis2020independent,wei2021last,alacaoglu2022natural}, which assumes that both agents share a common state.

Our idea, described in details in the next section, is to apply the optimistic algorithm from \citet{joulani17a} to the reformulation of \ref{minmax:eachpreference} over occupancy measures. We present the resulting algorithm, i.e., \oomdmethod{}, in \cref{alg:forb}.
\if 0
\yihang{Does $d^{\pi,\bar{\pi}} = d^\pi\cdot d^{\bar{\pi}}$ holds? Used in Lemma 2. Admittedly, we have $\mathbb{P}(s_h=s,a_h=a,s_h^\prime=s^\prime,a_h^\prime=a^\prime) = \mathbb{P}(s_h=s,a_h=a) \mathbb{P}(s_h^\prime=s^\prime,a_h^\prime=a^\prime) $ by the independence.}\textcolor{blue}{Well spotted, we have a problem in the infinite horizon, I am changing the proofs for the finite horizon setting.}
\fi

\section{Algorithm and convergence guarantees}
In this section, we detaile our algorithm summarized in \cref{sec:forb}. The derivation is based on the optimistic online descent method
applied on the reformulation of the optimization problem in the occupancy measures space. In particular, we will use that optimistic online mirror descent (Optimistic OMD)  with one projection \citep{joulani17a}.
For a bilinear function $g: \mathcal{Z}\times\mathcal{W}\rightarrow \mathbb{R}$ such that $g(z,w) = \innerprod{z}{Aw}$, optimistic OMD can be used to compute a saddle point $\min_{z\in\mathcal{Z}} \max_{w\in\mathcal{W}} g(z,w) $. In particular, the iterates for the $z$ player implemented with the Bregman divergence $\mathbb{D}$ induce by a Legendre potential  with step size $\beta$ iterates as follows
\[
z_{t+1} = \argmin_{z \in \mathcal{Z}} \beta \innerprod{2 A w_t - A w_{t-1}}{z} - \mathbb{D}(z,z_t)\,.
\]
The idea of optimism ~\citep{popov1980modification,chiang2012online,rakhlin2013online} 
has been used to obtain better regret bounds for slow changing loss sequences in online learning
or to achieve minmax optimal rates in saddle point optimization. Our application falls into the latter category.

Specifically, we derive our algorithm given in \Cref{alg:forb} applying optimistic gradient descent ascent on the bilinear problem \ref{occgame}. The next section provides the convergence guarantees for our method.

\begin{algorithm}[t]
\caption{\oomdmethod{} (Theory Version) 
}
\label{alg:forb}
\begin{algorithmic}[1]
    \STATE \textbf{input}:
    occupancy measure of reference policy $\pi^1$ denoted as $d^1$,
    preference oracle $\mathbb{P}$ (i.e. reward function $r$),
    learning rate $\beta$, Bregman divergence $\mathbb{D}$,
 iteration $T$
    \FOR{$t=1,2,\dots, T $}
       \STATE 
      \hspace{-1em}
         \vspace{-5mm}
       \begin{align*}
          &  d_{h}^{t+1} = \argmax_{d \in \mathcal{F}} 
            \beta \left\langle{d}, 
            2 \mathbb{E}_{S',A' \sim d_{h}^t}r(\cdot,\cdot,S',A')
            - \mathbb{E}_{S',A' \sim d_{h}^{t-1}}r(\cdot,\cdot,S',A')
            \right \rangle - \mathbb{D}(d, d_h^t).
        \end{align*}
    \ENDFOR
    \STATE  $\pi_h^{\mathrm{out}}(a|s) = \frac{\bar{d}_h(s,a)}{\sum_a\bar{d_h}(s,a)}$ with $\bar{d}_h = T^{-1} \sum^T_{t=1} d_h^t$ for all $h \in [H]$.
    \STATE \textbf{Output : } $\pi^{\mathrm{out}} $
  \end{algorithmic}
\end{algorithm}

\subsection{Convergence guarantees of optimistic multi-step preference optimization (\oomdmethod{})}
\label{sec:forb}
As the next theorem shows, in the ideal case where the updates can be computed exactly, \cref{alg:forb} finds an $\epsilon$-approximate Nash equilibrium using fewer updates compared to a naive application of natural actor critic in this setting (see \cref{alg:mspo_theory} in \cref{sec:nac}) and to \citet[Alg. 1]{swamyminimaximalist}. The proof can be found at \cref{proof:converge_fast}.
 \begin{theorem} [Convergence of \oomdmethod{}] Consider~\cref{alg:forb} and let us assume that the occupancy measure of the reference policy $d^1$ is uniformly lower bounded by $\underline{d}$. Moreover, let $\mathbb{D}$ be $1/\lambda$ strongly convex, i.e. $\mathbb{D}(p||q) \geq \frac{\norm{p-q}^2_1}{2\lambda}$. Then, by setting $T = \frac{10 H \log \underline{d}^{-1} }{\beta \epsilon}$ and $\beta \leq \frac{1}{\sqrt{2 \lambda}}$, we ensure that $(\pi^{\mathrm{out}}, \pi^\mathrm{out})$, i.e., the output of~\cref{alg:forb} is an $\epsilon$-approximate Nash equilibrium. Therefore, we need at most $\frac{10 H \log \underline{d}^{-1}  }{\beta \epsilon}$ policy updates.
\label{thm:converge_fast}
\end{theorem}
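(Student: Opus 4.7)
The plan is to recognize Algorithm \ref{alg:forb} as a self-play instantiation of optimistic online mirror descent applied to the bilinear saddle-point problem \ref{occgame}. Because the preference oracle satisfies $r(s,a,s',a') + r(s',a',s,a) = 1$, the game is symmetric, so it suffices to track a single sequence $\{d_h^t\}_{t \geq 1}$ used by both players and invoke the no-regret guarantee of optimistic OMD against any fixed comparator. The linearity of the value function in the occupancy measures, i.e., $\innerprod{\initial}{V^{\pi,\pi'}} = \sum_h \langle r, d_h^{\pi,\pi'}\rangle$ together with the conditional factorization $d_{h\mid s_1}^{\pi,\pi'}(s,a,s',a') = d_{h\mid s_1}^\pi(s,a)\cdot d_{h\mid s_1}^{\pi'}(s',a')$, will convert regret bounds on occupancies into the duality gap used in the $\epsilon$-Nash definition.

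First, I would invoke the standard regret bound for optimistic OMD with one projection~\citep{joulani17a} stage by stage and initial state by initial state. For each $h \in [H]$, $s_1 \in \mathrm{supp}(\initial)$, and comparator $d_h^\diamond \in \mathcal{F}_{s_1}$, setting $g_h^t := \mathbb{E}_{(S',A') \sim d_h^t} r(\cdot,\cdot,S',A')$, the update of Algorithm \ref{alg:forb} delivers
\begin{align*}
\sum_{t=1}^{T} \langle d_h^\diamond - d_h^t, g_h^t \rangle \;\leq\; \frac{\mathbb{D}(d_h^\diamond, d_h^1)}{\beta} \;-\; \frac{1}{2\lambda\beta}\sum_{t=1}^T \|d_h^{t+1}-d_h^t\|_1^2 \;+\; \beta \sum_{t=1}^T \|g_h^t - g_h^{t-1}\|_\infty^2 .
\end{align*}
Since $r \in [0,1]$ implies $\|g_h^t - g_h^{t-1}\|_\infty \leq \|d_h^t - d_h^{t-1}\|_1$, the step-size condition $\beta \leq 1/\sqrt{2\lambda}$ combined with $1/\lambda$-strong convexity of $\mathbb{D}$ forces the two sums to telescope into non-positive terms. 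This reduces the regret to $\mathbb{D}(d_h^\diamond, d_h^1)/\beta$.

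Second, I would convert this per-stage regret bound into a bound on the duality gap of the average iterate $\bar d_h = T^{-1}\sum_t d_h^t$. Using symmetry, the self-play argument of \citet{rakhlin2013online} yields
\begin{align*}
\max_{\pi'} \innerprod{\initial}{V^{\pi',\pi^\mathrm{out}}} \;-\; \min_{\pi'} \innerprod{\initial}{V^{\pi^\mathrm{out},\pi'}} \;\leq\; \frac{2}{\beta T}\,\mathbb{E}_{S_1 \sim \initial} \sum_{h=1}^H \max_{d_h^\diamond \in \mathcal{F}_{S_1}} \mathbb{D}(d_h^\diamond, d_h^1).
\end{align*}
Each Bregman divergence term is at most $\log \underline{d}^{-1}$ because $d_h^1(s,a) \geq \underline{d}$ uniformly; summing over $h$ yields the factor $H \log \underline{d}^{-1}$. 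Plugging in $T = 10 H \log \underline{d}^{-1} / (\beta \epsilon)$ upper bounds the gap by $\epsilon/5$, which is $\leq \epsilon$. Finally, the map $\pi^\mathrm{out}_h(a|s) = \bar d_h(s,a)/\sum_a \bar d_h(s,a)$ is the standard occupancy-to-policy correspondence~\citep{Puterman:1994}, so $\bar d_h = d_h^{\pi^\mathrm{out}}$ and both inequalities in the definition of an $\epsilon$-approximate Nash equilibrium follow.

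The main obstacle I anticipate is bookkeeping: the two agents evolve independent states from the same $S_1$, so the regret analysis must be carried out per initial state and per stage before being recombined via $\mathbb{E}_{S_1 \sim \initial}$ and $\sum_{h=1}^H$ without introducing spurious powers of $H$ or $|\mathcal{S}|$. In particular, matching the optimistic-OMD stability terms against the reward Lipschitzness requires exploiting the factorization $d_{h\mid s_1}^{\pi,\pi'} = d_{h\mid s_1}^\pi \cdot d_{h\mid s_1}^{\pi'}$ inside each stage, so that the predicted-gradient error $g_h^t - g_h^{t-1}$ is controlled linearly by $\|d_h^t - d_h^{t-1}\|_1$ rather than by a cross term involving gradients at other stages. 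Verifying this decoupling across stages is what makes the final bound scale only linearly in $H$, and is the delicate step of the argument.
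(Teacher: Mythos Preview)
Your proposal is correct and follows essentially the same approach as the paper: optimistic OMD on the bilinear occupancy-measure game, with the gradient-variation term $\|g_h^t-g_h^{t-1}\|_\infty\le\|d_h^t-d_h^{t-1}\|_1$ absorbed by the stability term under $\beta\le 1/\sqrt{2\lambda}$, followed by averaging to obtain the duality gap. The only organizational difference is that you invoke the constant-sum self-play reduction (what the paper proves separately as Theorem~\ref{thm:same_updates}) at the outset to track a single sequence, whereas the paper carries two players $\phi,\psi$ through the three-point-identity calculation and collapses them at the end; both routes yield the same telescoping and the same $O(H\log\underline{d}^{-1}/(\beta T))$ bound.
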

In addition, not only \citet[Alg. 1]{swamyminimaximalist} but also \oomdmethod{} can be implemented using only one player since in a constant sum game, the max and min player produce the same iterates. The result is formalized as follows and the proof is deferred to \cref{proof:same_updates}.

\begin{theorem}\label{thm:same_updates}
Consider a constant sum two-player Markov game with reward such that $r(s,a,s',a') = 1 - r(s',a',s,a)$, then for each $s_1 \in \mathrm{supp}(\initial)$ the updates for $d$ in \cref{alg:forb} coincides with the updates for the min player that uses the updates
       \begin{align*}
          &  d_h^{t+1} = \argmax_{d \in \mathcal{F}} 
            \beta \left\langle{d}, 
            2 \mathbb{E}_{S',A' \sim d_h^t}r(\cdot,\cdot,S',A')
            - \mathbb{E}_{S',A' \sim d_h^{t-1}}r(\cdot,\cdot,S',A')
            \right \rangle - \mathbb{D}(d, d_h^t).
        \end{align*}
\end{theorem}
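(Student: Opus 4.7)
The plan is to start from the natural optimistic OMD update that a min player would implement on \ref{occgame} and show, via the constant-sum identity $r(s,a,s',a') = 1 - r(s',a',s,a)$, that it has exactly the same form as the max player's update used in Algorithm \ref{alg:forb}. Since \ref{occgame} decomposes as $\mathbb{E}_{S_1 \sim \initial}$ of per-$s_1$ bilinear subproblems over $\mathcal{F}_{s_1}$, and both the Bregman regularizer and the OMD step split in the same way, the whole argument localizes to a single $s_1 \in \mathrm{supp}(\initial)$, which matches the theorem's ``for each $s_1$'' phrasing.

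First I would compute the min-player gradient. For each stage $h$ and point $(s',a')$, the derivative of the per-$s_1$ bilinear objective with respect to $d'_{h|s_1}(s',a')$ equals $\sum_{s,a} d^t_{h|s_1}(s,a)\, r(s,a,s',a') = \mathbb{E}_{S,A \sim d^t_{h|s_1}}[r(S,A,s',a')]$. The natural optimistic OMD update for the min player at stage $h$ is therefore
$$d'^{t+1}_{h|s_1} = \argmin_{d' \in \mathcal{F}_{s_1}} \beta \bigl\langle d',\, 2\mathbb{E}_{S,A \sim d^t_{h|s_1}}[r(S,A,\cdot,\cdot)] - \mathbb{E}_{S,A \sim d^{t-1}_{h|s_1}}[r(S,A,\cdot,\cdot)] \bigr\rangle + \mathbb{D}(d', d'^t_{h|s_1}).$$
Applying the symmetry identity inside the expectations replaces the linear term's coefficient by $\mathbf{1} - \bigl(2\mathbb{E}_{S,A \sim d^t_{h|s_1}}[r(\cdot,\cdot,S,A)] - \mathbb{E}_{S,A \sim d^{t-1}_{h|s_1}}[r(\cdot,\cdot,S,A)]\bigr)$.

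Next I would argue that the $\mathbf{1}$ component is a constant on $\mathcal{F}_{s_1}$. Telescoping the Bellman flow constraint $\sum_a d'_{h+1}(s,a) = \sum_{s',a'} f(s\mid s',a') d'_h(s',a')$ and summing over $s$ gives $\sum_{s,a} d'_{h+1}(s,a) = \sum_{s',a'} d'_h(s',a')$; combined with $\sum_{s,a} d'_1(s,a) = 1$, this yields $\sum_{s,a} d'_h(s,a) = 1$ for every $h$, so $\langle d', \mathbf{1} \rangle$ is independent of $d' \in \mathcal{F}_{s_1}$ and can be dropped from the $\argmin$. Flipping the sign of the remaining linear term then converts the $\argmin$ with $+\mathbb{D}$ into an $\argmax$ with $-\mathbb{D}$, yielding exactly the update displayed in the theorem statement. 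A straightforward induction on $t$, using the same initialization $d^1$ on both sides, finally gives $d'^t_{h|s_1} = d^t_{h|s_1}$ for all $t \geq 1$, $h \in [H]$, and $s_1 \in \mathrm{supp}(\initial)$.

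The main obstacle is bookkeeping: correctly tracking the sign of the Bregman term when switching between $\argmin$ and $\argmax$ (using that $\mathbb{D}$ is convex in its first argument, guaranteed by the $1/\lambda$-strong convexity assumed in Theorem \ref{thm:converge_fast}), and verifying that the per-$s_1$ decomposition is respected both by $\mathcal{F} = \times_{s_1}\mathcal{F}_{s_1}$ and by the chosen Bregman regularizer, so that the symmetry argument truly localizes and the inductive equality propagates step-by-step. Once these two points are in place, the reduction is a one-line manipulation.
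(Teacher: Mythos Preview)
Your proposal is correct and follows essentially the same route as the paper: both arguments proceed by induction on $t$ from a common initialization, use the antisymmetry $r(s,a,s',a') = 1 - r(s',a',s,a)$ to flip the linear term in the min player's update, drop the resulting constant $\langle d',\mathbf{1}\rangle$ via the normalization of occupancy measures on $\mathcal{F}_{s_1}$, and conclude that the rewritten $\argmin$ coincides with the max player's $\argmax$. Your write-up is in fact a bit more explicit than the paper's on two points the paper glosses over: (i) you verify the normalization $\sum_{s,a} d'_h(s,a)=1$ by telescoping the Bellman flow constraints, whereas the paper simply invokes ``Normalization of $\phi$''; and (ii) you spell out the per-$s_1$ localization via $\mathcal{F} = \times_{s_1}\mathcal{F}_{s_1}$, which the paper leaves implicit in this proof.
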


For the first iteration, we initialize $d_h^0$ to be equal to $d_h^1$ for all $h$. 
Moreover, the next theorem shows that the last iterate converges asymptotically. The proof is deferred to \cref{sec:prooflastiterate}.
\begin{theorem} \label{thm:lastiterate}
Assume that $\bc{d^t}^{\infty}_{t=1}$ are the iterates generated by \Cref{alg:forb} with $\beta \leq 1/\sqrt{2 \lambda}$ and that there exists a NE $d^\star$ such that $d^\star(s,a) > 0$. Then, their limit exists. Then $\bc{d^t}^{\infty}_{t=1}$ converges to the set of Nash equilibria of \ref{occgame}.
\end{theorem}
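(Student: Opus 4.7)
The plan is to adapt the standard last-iterate analysis of optimistic online mirror descent on convex--concave bilinear saddle-point problems (cf.\ \citet{rakhlin2013online,joulani17a,wei2021last}) to the lifted game \ref{occgame}. Because the game is constant-sum and, by \cref{thm:same_updates}, both players perform identical updates, it suffices to track a single sequence $\bc{d^t}\subset \mathcal{F}$. I will fix a Nash equilibrium $d^\star$ with $d^\star(s,a)>0$ (available by assumption), so that $\mathbb{D}(d^\star,\cdot)$ is finite and differentiable at the iterates, and introduce the Lyapunov functional
\[
\Phi_t \;:=\; \sum_{h=1}^{H}\mathbb{D}\br{d_h^\star,d_h^t}\;+\;c\sum_{h=1}^{H}\mathbb{D}\br{d_h^{t-1},d_h^t},
\]
for a constant $c>0$ depending on $\beta$ and $\lambda$ that will be fixed in the energy-decrease step.

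The first substantive step is a descent inequality
\[
\Phi_{t+1}\;\leq\;\Phi_t\;-\;\gamma\sum_{h=1}^{H}\norm{d_h^{t+1}-d_h^t}_1^{2}
\]
for some $\gamma>0$. This will follow from the three-point identity applied to the first-order optimality condition of the argmax in \cref{alg:forb} together with the bilinearity of the payoff and the uniform bound $\norm{r}_\infty\leq 1$. The stepsize condition $\beta\leq 1/\sqrt{2\lambda}$, combined with the $1/\lambda$-strong convexity of $\mathbb{D}$ in $\norm{\cdot}_1$, is precisely what permits absorbing the mixed first-/second-order terms with a strictly positive $\gamma$; this is the bilinear specialization of the descent inequality of \citet{joulani17a}. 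From this recursion I obtain (i) uniform boundedness of $\mathbb{D}(d^\star,d^t)$, hence of $\bc{d^t}$ by strong convexity, and (ii) telescoping $\sum_t\norm{d^{t+1}-d^t}_1^{2}<\infty$, so in particular $d^{t+1}-d^t\to 0$.

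Because $\mathcal{F}$ is a compact polytope, I then extract a subsequence $d^{t_k}\to d^\infty\in\mathcal{F}$, and by (ii) also $d^{t_k\pm 1}\to d^\infty$. Passing to the limit along $\bc{t_k}$ in the first-order optimality condition that defines $d^{t_k+1}$, namely $\innerprod{d-d_h^{t_k+1}}{\beta(2\,\mathbb{E}_{S',A'\sim d_h^{t_k}}r-\mathbb{E}_{S',A'\sim d_h^{t_k-1}}r)-\nabla_1\mathbb{D}(d_h^{t_k+1},d_h^{t_k})}\leq 0$ for every $d\in\mathcal{F}$, and invoking continuity of the payoff operator and of $\nabla\mathbb{D}$ at $d^\infty$, I recover the variational inequality that characterizes a Nash equilibrium of \ref{occgame}, so $d^\infty$ is an NE. To upgrade subsequential convergence to convergence of the full sequence, I will re-anchor the Lyapunov functional at $d^\infty$: defining $\widetilde{\Phi}_t$ as $\Phi_t$ but with $d^\star$ replaced by $d^\infty$, the identical descent recursion yields $\widetilde{\Phi}_{t+1}\leq\widetilde{\Phi}_t$; along $\bc{t_k}$ one has $\widetilde{\Phi}_{t_k}\to 0$ by construction, so monotonicity forces $\widetilde{\Phi}_t\to 0$, and strong convexity of $\mathbb{D}$ then gives $d^t\to d^\infty$.

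The main obstacle is precisely this re-anchoring argument: the descent inequality above used that $\mathbb{D}(d^\star,\cdot)$ is finite with a well-defined gradient along the iterates, a property guaranteed by the hypothesis $d^\star(s,a)>0$. Repeating the descent with $d^\infty$ as reference requires the same regularity at $d^\infty$. For Bregman divergences finite on all of $\mathcal{F}$ (e.g.\ the squared Euclidean distance) this is automatic, but for the entropic mirror map (KL) one must show that $d^\infty$ inherits sufficient positivity from $d^\star$. The standard remedy is a boundary-avoidance argument: the uniform upper bound on $\mathbb{D}(d^\star,d^t)$ together with $d^\star(s,a)>0$ prevents any coordinate that is positive at $d^\star$ from collapsing to zero in the limit, which supplies the positivity needed at $d^\infty$ and closes the argument.
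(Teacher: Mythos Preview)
Your plan is essentially the same as the paper's: derive an energy inequality from the first-order optimality condition of the single-call optimistic update, deduce summability of $\norm{d^{t+1}-d^t}_1^2$, extract a subsequential limit by compactness of $\mathcal{F}$, pass to the limit in the variational inequality to certify that the limit is an NE, and then re-anchor the energy at this limit to upgrade subsequential to full-sequence convergence.

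Two differences are worth flagging. First, the paper's Lyapunov carries the linear correction term $-\beta\innerprod{\theta_h^t-\theta_h^{t-1}}{\phi_h-\phi_h^t}$ (and its $\psi$-counterpart) rather than your $c\,\mathbb{D}(d_h^{t-1},d_h^t)$ term; consequently the paper obtains only a quasi-F\'ejer recursion $E^t\leq E^{t-1}+\beta(L^{t-1}-L^t)-(\text{nonneg.})$ with summable perturbations $L^t$, rather than the strict monotone descent you aim for. Both forms are standard for single-call optimistic schemes and reach the same conclusion, but your claimed strict descent $\Phi_{t+1}\leq\Phi_t$ with this particular Lyapunov requires a slightly different chaining of the cross terms than the paper's derivation, so be careful to verify it under exactly the stepsize $\beta\leq 1/\sqrt{2\lambda}$. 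Second, for the boundary issue the paper takes a shortcut: it modifies the updates to project onto $\mathcal{F}\cap\{d\geq d_{\min}\}$ (with $d_{\min}$ the assumed lower bound on $d^\star$), which keeps $\nabla\omega$ continuous on the whole domain and makes both the VI limit-passing and the re-anchoring automatic. Your a-posteriori boundary-avoidance argument (bounded $\mathbb{D}(d^\star,d^t)$ with $d^\star>0$ forces the iterates uniformly away from zero, hence $d^\infty>0$) is more faithful to the stated algorithm; just make explicit that entropic mirror-descent iterates lie in the relative interior from the outset, so that the three-point identity and the descent inequality are already valid \emph{before} you have the uniform lower bound in hand.
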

\subsection{Efficient implementation}

We can avoid the projection over the set $\mathcal{F}$ by implementing this update on the policy space (see  Appendix~\ref{app:implement_forb}). We achieve such results following the techniques developed in \citet{bas2021logistic,viano2022proximal} for specific choices of the Bregman divergence $\mathbb{D}$. In particular for the relative entropy, $\mathbb{D}(p,q) = \sum^H_{h=1} \sum_{s,a} p_h(s,a) \log \br{\nicefrac{p_h(s,a) q_h(s)}{q_h(s,a) p_h(s)}}$ which is $1$-strongly convex, we show in \ref{app:relativeent} that the update in \cref{alg:forb} can be implemented as follows
\begin{align*}
&\pi^{t+1}_h(\cdot|s) \propto \pi^t_h(\cdot|s) \odot \exp \br{\beta_h Q^{t}_h(s,\cdot)}, \quad Q^{t}_h(s,a) = \widetilde{r}^t(s,a) + \mathbb{E}_{s'\sim f(\cdot|s,a)} V^{t}_h(s'),
\end{align*}
\begin{equation*}
\scalemath{0.9}{
\widetilde{r}^t(s,a) = \sum_{s',a' } (2d_h^t(s',a') - d_h^{t-1}(s',a')) r(s,a,s',a'),
            \quad V^{t}_h(s)= \frac{1}{\beta_h} \log \sum_a \pi^t_h(a|s)\exp( \beta_h Q^{t}_h(s,a)),
}
\end{equation*}
where $\beta_h = \frac{\beta}{H-h+1}$. These updates for the value functions are known as soft-Bellman equation \cite{ziebart2010modeling}. The reward $\widetilde{r}^t$ has this particular form because of the particular optimistic mirror descent update that we are performing. 

\paragraph{Approximating the value function updates} Unfortunately these updates suffer from numerical instabilities in practice but for $\beta \rightarrow 0 $ we have that the regularized value functions $Q^t_h$ and $V^t_h$ tends to the standard state action and state value function respectively . Indeed as shown in the next theorem we have that $V^t_h(s) \rightarrow \innerprod{\pi^t_h(a|s)}{Q^t_h(s,a)}$ for $\beta \rightarrow 0$.

\begin{theorem}\label{thm:approx}
Let us denote $\beta_h = \frac{\beta}{H-h+1}$ and let us assume that the values $Q^t_h$ generated by the soft Bellman equations in \cref{thm:updateV_bis} are uniformly upper bounded by $Q_{\max}$, and let us choose $\beta_h \leq \frac{1}{Q_{\max}}$ for all $h\in[H]$. Then, it holds that
\begin{align*}
\innerprod{\pi^t_h(\cdot|s)}{Q^t_h(s,\cdot)} \leq \frac{1}{\beta_h} \log \sum_a \pi^t_h(a|s) \exp(\beta_h Q^t_h(s,a)) \leq \innerprod{\pi^t_h(\cdot|s)}{Q^t_h(s,\cdot)} + \beta_h Q^2_{\max}\,.
\end{align*}
\end{theorem}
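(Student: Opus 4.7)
The statement sandwiches a log-sum-exp between its mean and a variance-type overshoot, so I would treat it as a one-shot moment-generating function estimate applied pointwise at each fixed $s$, $h$, $t$. No induction, recursion, or appeal to the dynamics is needed; the values $Q^t_h$ enter only through the hypothesis $0 \le Q^t_h(s,a) \le Q_{\max}$ and the step-size constraint $\beta_h \le 1/Q_{\max}$. Introducing the shorthand $X = Q^t_h(s,A)$ with $A \sim \pi^t_h(\cdot\mid s)$, the claim reduces to
\[
\beta_h\, \mathbb{E}[X] \;\leq\; \log \mathbb{E}\!\left[ e^{\beta_h X}\right] \;\leq\; \beta_h\, \mathbb{E}[X] + \beta_h^{2}\, Q_{\max}^{2}.
\]

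For the lower bound, I would simply invoke Jensen's inequality on the convex function $\exp$: $\mathbb{E}[e^{\beta_h X}] \geq e^{\beta_h \mathbb{E}[X]}$, and take logarithms.

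For the upper bound, I would center $X$ about its mean and use the elementary inequality $e^{y} \leq 1 + y + y^{2}$ for $|y| \leq 1$. Under the hypotheses $X \in [0, Q_{\max}]$ and $\beta_h Q_{\max} \le 1$, the centered variable $y := \beta_h(X - \mathbb{E}[X])$ satisfies $|y| \leq \beta_h Q_{\max} \leq 1$, so I can apply this inequality. Taking expectations cancels the linear term and leaves
\[
\mathbb{E}\!\left[e^{\beta_h (X - \mathbb{E}[X])}\right] \;\leq\; 1 + \beta_h^{2}\, \mathrm{Var}(X) \;\leq\; 1 + \beta_h^{2}\, Q_{\max}^{2},
\]
where I use $\mathrm{Var}(X) \leq Q_{\max}^{2}$ since $X \in [0, Q_{\max}]$. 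Taking logarithms, using $\log(1+u) \leq u$, and dividing through by $\beta_h$ gives exactly the claimed right-hand side.

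\textbf{Main obstacle.} The only mildly delicate point is justifying $|y| \le 1$ to enable the quadratic Taylor majorant $e^{y} \le 1 + y + y^{2}$; this is precisely where the step-size assumption $\beta_h \le 1/Q_{\max}$ and the nonnegativity of $Q^t_h$ (inherited from the reward structure $r \in [0,1]$ together with the soft-Bellman recursion, possibly to be noted as a remark) are both needed. A Hoeffding-lemma style argument would give the sharper constant $\beta_h Q_{\max}^{2}/8$, but the weaker bound stated in the theorem already suffices to conclude that the soft value $\frac{1}{\beta_h} \log \sum_a \pi^t_h(a\mid s) \exp(\beta_h Q^t_h(s,a))$ converges to $\langle \pi^t_h(\cdot\mid s), Q^t_h(s,\cdot)\rangle$ as $\beta \to 0$, which is the intended consequence for the approximation scheme in practice.
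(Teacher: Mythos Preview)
Your proof is correct and follows essentially the same approach as the paper: Jensen's inequality for the lower bound, and the elementary majorants $e^{x}\le 1+x+x^{2}$ (valid for $x\le 1$) followed by $\log(1+u)\le u$ for the upper bound. The only difference is that the paper applies the exponential bound directly to $x=\beta_h Q^t_h(s,a)$ without centering, which is slightly shorter but otherwise identical in spirit and in the constants obtained.
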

Therefore, in practical implementation with small $\beta$ it is reasonable to approximate the regularized state action value functions with the standard single player state action value functions for the reward function $\widetilde{r}^t$ denoted with $Q^\pi_{h,\widetilde{r}}(s,a) = \mathbb{E}_\pi\bs{\sum^H_{\tau=h} \widetilde{r}^t(S_\tau,A_\tau) | S_h = s, A_h = a}$.
Moreover, given the definition of $\widetilde{r}^t$, we can write $Q^{\pi^t}_{h,\widetilde{r}}$ as function of the joint action value functions as follows: \[Q^{\pi^t}_{h,\widetilde{r}}(s,a) = 2 \mathbb{E}_{S',A'\sim d^t_h }Q^{\pi^t, \pi^t}_h(s,a,S',A') - \mathbb{E}_{S',A'\sim d^{t-1}_h }Q^{\pi^t, \pi^{t-1}}_h(s,a,S',A'). \]
In practice,  the dynamics are unknown, so we use a standard Monte Carlo to approximate the state action value functions. For the first term, we sample $K$ pairs of trajectories from the same LLM ( with policy $\pi^t_h$) denoted $\bc{\br{S^k_\tau, A^k_\tau}}^{H,K}_{\tau=1,k=1}$ and  $\bc{\br{S'^{,k}_\tau, A'^{,k}_\tau}}^{H,K}_{\tau=1,k=1}$ respectively. For the second term, we have to produce $K$ trajectories from the old policy $\pi^{t-1}$, let us denote this rollouts as $\bc{\br{S^{\dagger,k}_\tau, A^{\dagger,k}_\tau}}^{H,K}_{\tau=1,k=1}$ . At this point, we can produce the estimator whose unbiasedness is easy to be verified ( i.e. $\mathbb{E}\bs{\widehat{Q^t_h}(s,a)} = Q^{\pi^t}_{h,\widetilde{r}}(s,a) $).
\begin{equation}
\widehat{Q^t_h}(s,a) = \frac{1}{K}\sum^K_{k=1} \sum^H_{\tau=h} \br{2\mathbb{P}([S^k_\tau,A^k_\tau]\succ[S'^{,k}_\tau, A'^{,k}_\tau])
- \mathbb{P}([S^k_\tau,A^k_\tau]\succ[S^{\dagger,k}_\tau, A^{\dagger,k}_\tau])}\mathds{1}_{\bc{S^k_1 = s, A^k_\tau=a}} 
\label{eq:Q_ompo_est}.
\end{equation}
 At the initial, iteration, when $\pi^{t-1}$ is undefined we use the estimator $\widehat{Q^1_h}(s,a) = \frac{1}{K}\sum^K_{k=1} \sum^H_{\tau=1}\mathbb{P}([S^k_\tau,A^k_\tau]\succ[S'^{,k}_\tau, A'^{,k}_\tau])
\mathds{1}_{\bc{S^k_1 = s, A^k_\tau=a}} $.
 
\paragraph{Approximating the policy update} The last obstacle for a practical implementation is the normalization constant in the policy update, which is intractable in practice. To circumvent this problem, we use the approach suggested in \cite{wu2024self}, which treats the log of the unknown normalization constant as a tunable parameter.

The detailed pseudocode of our practical implementation is in \cref{alg:forbprac}.
First, recall that our goal update with the estimated state action value function $\pi^{t+1}_h(\cdot|s) \propto \pi^t_h(\cdot|s) \odot \exp \br{\beta_h \widehat{Q^{t}_h}(s,\cdot)}$ could be implemented exactly as in the next equation if the state dependent normalization constant $Z_h^t(s)$ was computationally tractable, i.e.,
          $\pi_h^{t+1}(a|s) = \frac{\pi_h^t(a|s) \exp\{{\beta \widehat{Q_h^{t}}(s,a)}\}}{Z_h^t(s)}\,$.
          \label{eq:update_equalompo}
This equation can be expressed equivalently as follows $
          \log \frac{\pi_h^{t+1}(a|s)}{\pi_h^t(a|s)} = \beta \widehat{Q_h^{t}}(s,a) - \log Z_h^t(s) $.
Therefore, following~\citet{wu2024self}, we approximate the above equality with the following regression problem:
 \begin{align*}
      \pi^{t+1}
          = \argmin_{\pi\in\Pi}  \mathbb{E}          _{\substack{ S \sim \initial \\
          A \sim \pi(\cdot | S)
          }}
          \bigg[ \sum^H_{h=1} \br{\log \frac{\pi_h^{t+1}(A|S)}{\pi_h^t(A|S)} - \beta \widehat{Q_h^{t}}(S,A) + \log Z_h^t(S)}^2 \bigg]\,.
     \end{align*}
 Finally, to ensure computationally tractability  we replace $\log Z_h^t(s)$ with $\beta\frac{H-h+1}{2}$ in all states $s$.
Such heuristic is motivated by the following observation: If the preference between $a_h$ and $a_h^\prime$ in \cref{equ:q_estimate} results in a tie, then with such $\log Z_h^t(s)$, the solution of \cref{equ:q_estimate} is $\pi^{t+1} = \pi^t$, leaving the model unchanged.
In summary, we provide a practical version of \oomdmethod{} in \cref{alg:forbprac}. For simplicity, we used a stationary policy whioch is a good approximation for large $H$ and we find to be sufficient to obtain convincing results.

\section{Experiments}
In this section, we provide several numerical results while additional detail on the dataset, experimental set-up, and ablation studies are deferred to \cref{sec:addexp}.
Beyond comparing \oomdmethod{} with recent algorithms from the literature we compare with a simpler multi step method based on actor critic dubbed \method{}. We provide the derivation in \cref{sec:nac}. This comparison serves to assess the importance of the formulation over occupancy measures and of the optimism in the policy update.
\label{sec:exp}
\subsection{\rebuttal{Tabular experiment}}
\rebuttal{
First, we consider a synthetic experiment in which the state action functions can be computed exactly for both \oomdmethod{} and \method{}. We generate $10$ random gridworlds with a number of states and actions sample uniformly from the intervals $[1,100]$ and $[2,10]$. We plot the exploitability computed as $\max_{\pi} 
 \innerprod{\initial}{V^{\pi, \pi^k} - V^{\pi^k\pi^k}},$ which is a standard metric to evaluate the distance from a Nash equilibrium. In particular, when $(\pi^k,\pi^k)$ is a Nash equilibrium, the exploitability is $0$. We can see that \oomdmethod{} achieves very low exploitability after $100$ updates while $2000$ updates are needed by \method{}. In this case, where the $Q$ functions can be computed exactly, we can appreciate the faster convergence rate of \oomdmethod{} as described by \Cref{thm:converge_fast}.
}


\subsection{Experiment on multi-turn conversation dataset}

\begin{table*}[!t]
\caption{Evaluation results on MT-bench-101 dataset. Mistral-7B-Instruct is selected as the base model. We can observe that both of the proposed algorithms \method{} and \oomdmethod{} considerably outperform the baseline in terms of the score (the higher the better).}
\centering
\setlength\tabcolsep{4pt}
\renewcommand{\arraystretch}{1.5}
\resizebox{1\textwidth}{!}{
\begin{tabular}{c|c|c|cc|cc|cc|cc|cc|cc}
\toprule
\multirow{3}{*}{\textbf{Model}} & \multicolumn{1}{c}{\textbf{}} & \multicolumn{5}{|c}{\textbf{Perceptivity}} & \multicolumn{6}{|c}{\textbf{Adaptability}} & \multicolumn{2}{|c}{\textbf{Interactivity}} 
\\
 & \multicolumn{1}{c}{\textbf{Avg.}} & \multicolumn{1}{|c}{\textbf{CM}}     & \multicolumn{1}{|c}{\textbf{SI}} & \multicolumn{1}{c}{\textbf{AR}} & \multicolumn{1}{|c}{\textbf{TS}} & \multicolumn{1}{c}{\textbf{CC}} & \multicolumn{1}{|c}{\textbf{CR}} & \multicolumn{1}{c}{\textbf{FR}} & \multicolumn{1}{|c}{\textbf{SC}} & \multicolumn{1}{c}{\textbf{SA}} & \multicolumn{1}{|c}{\textbf{MR}} & \multicolumn{1}{c}{\textbf{GR}} & \multicolumn{1}{|c}{\textbf{IC}} & \multicolumn{1}{c}{\textbf{PI}} \\
 \midrule
Base (Mistral-7B-Instruct)             &  6.223 & 
7.202 & 7.141 & 7.477 & 7.839 & 8.294 & 6.526 & 6.480 & 4.123 & 4.836 & 4.455 & 5.061 & 5.818 & 5.641                            \\  \midrule  DPO  (iter=1)                  &     6.361 & 7.889 & 6.483 & 7.699 & 8.149 & 8.973 & 7.098 & 7.423 & 3.448 & \textbf{6.123} & 3.421 & 4.492 & 5.639 & 5.858    
\\  
 DPO  (iter=2)    &  6.327 & 7.611 & 6.206 & 8.106 & 8.052 & 9.111 & 6.670 & 7.153 & 3.494 & 5.884 & 3.360 & 4.691 & 5.837 & 6.078  \\
DPO  (iter=3)    & 5.391 & 6.019 & 4.521 & 6.890 & 6.631 & 8.177 & 5.437 & 5.723 & 3.448 & 5.295 & 3.142 & 4.015 & 5.256 & 5.529  \\ 
SPPO  (iter=1)                  & 6.475 & 7.432 & 7.464 & 7.714 & 8.353 & 8.580 & 6.917 & 6.714 & 4.136 & 5.055 & 4.403 & 5.400 & 6.036 & 5.966     \\  SPPO  (iter=2)      &6.541 & 7.516 & 7.496 & 7.808 & 8.313 & 8.731 & 7.077 & 6.867 & 4.136 & 5.281 & 4.488 & 5.477 & 6.098 & 5.751   \\  
 SPPO  (iter=3)      &6.577 & 7.575 & 7.547 & 7.944 & 8.365 & 8.797 & 7.040 & 6.865 & 4.442 & 5.185 & 4.346 & 5.394 & 6.092 & 5.906  \\  
Step-DPO   (iter=1)                 &    6.433 & 7.463 & 7.054 & 7.790 & 8.157 & 8.593 & 6.827 & 6.748 & 4.234 & 4.849 & 4.236 & 5.519 & 5.982 & 6.171                          \\  
Step-DPO   (iter=2)     &           6.553 & 7.616 & 7.043 & 7.925 & 8.147 & 8.662 & 6.790 & 6.878 & 4.331 & 5.048 & 4.366 & \textbf{5.734} & \textbf{6.391} & 6.254 \\ 
Step-DPO   (iter=3)     &6.442 & 7.665 & 7.023 & 7.767 & 8.016 & 8.589 & 6.723 & 6.581 & 4.305 & 5.014 & 4.153 & 5.453 & 6.202 & \textbf{6.257} \\
\midrule
\method{} (iter=1)     &      6.630 & 7.624 & \textbf{7.846} & 8.085 & 8.398 & 8.947 & 7.105 & 7.286 & 4.208 & 4.993 & 4.377 & 5.264 & 6.179 & 5.873 \\
\method{}  (iter=2)               &   6.735 & 7.838 & 7.723 & 8.196 & \textbf{8.590} & 9.027 & 7.347 & 7.209 & 4.240 & 5.137 & 4.469 & 5.531 & 6.181 & 6.061                       \\ 
\method{}  (iter=3)    &
6.733 & \textbf{7.868} & 7.686 & \textbf{8.289} & 8.510 & 9.078 & 7.330 & 7.529 & \textbf{4.461} & 4.829 & 4.225 & 5.366 & 6.198 & 6.155
\\
\oomdmethod  (iter=2)    &6.736 & 7.733 & 7.723 & 8.257 & 8.478 & 9.122 & 7.300 & 7.421 & 4.123 & 5.288 & \textbf{4.506} & 5.513 & 6.179 & 5.923\\
\oomdmethod  (iter=3) &  \textbf{6.776} & 7.649 & 7.792 & 8.281 & 8.578 & \textbf{9.136} & \textbf{7.424} & \textbf{7.635} & 4.377 & 5.308 & 4.312 & 5.455 & 6.187 & 5.954\\
\bottomrule
\end{tabular}}
\label{tab:mtbench}
\end{table*}

\begin{figure*}
    \centering
    \subfigure[Results in the tabular experiments.]{\includegraphics[width=0.29\textwidth]{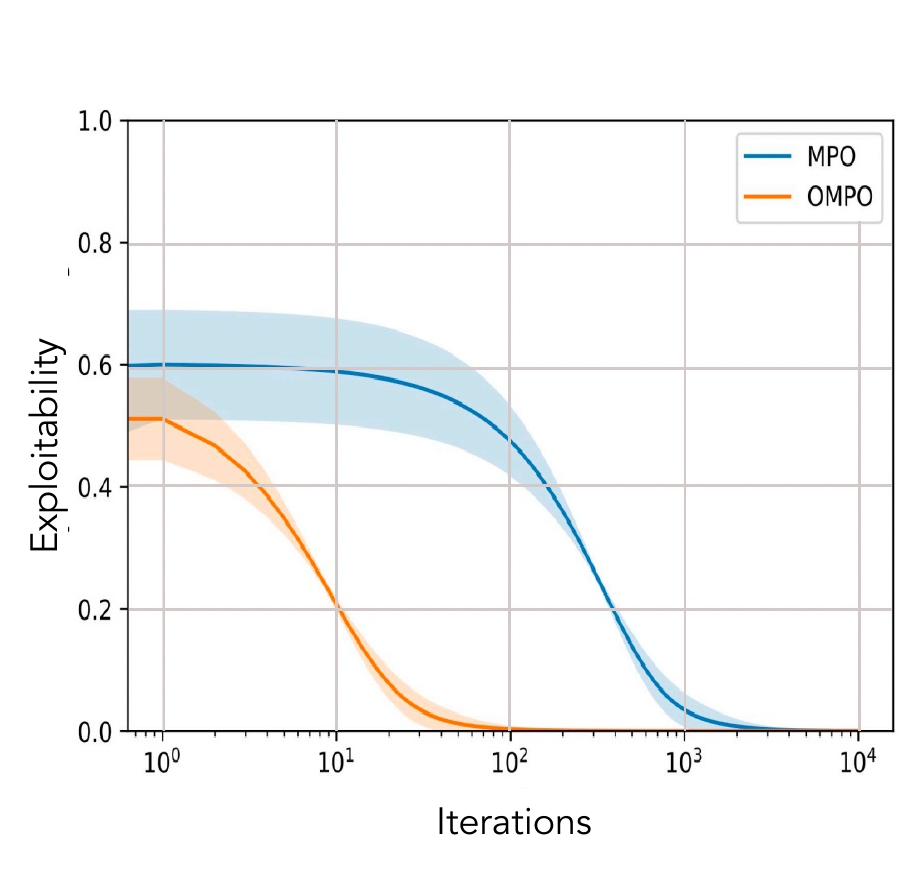}
    }
    \subfigure[Radar chart on different categories.]{\includegraphics[width=0.38\textwidth]{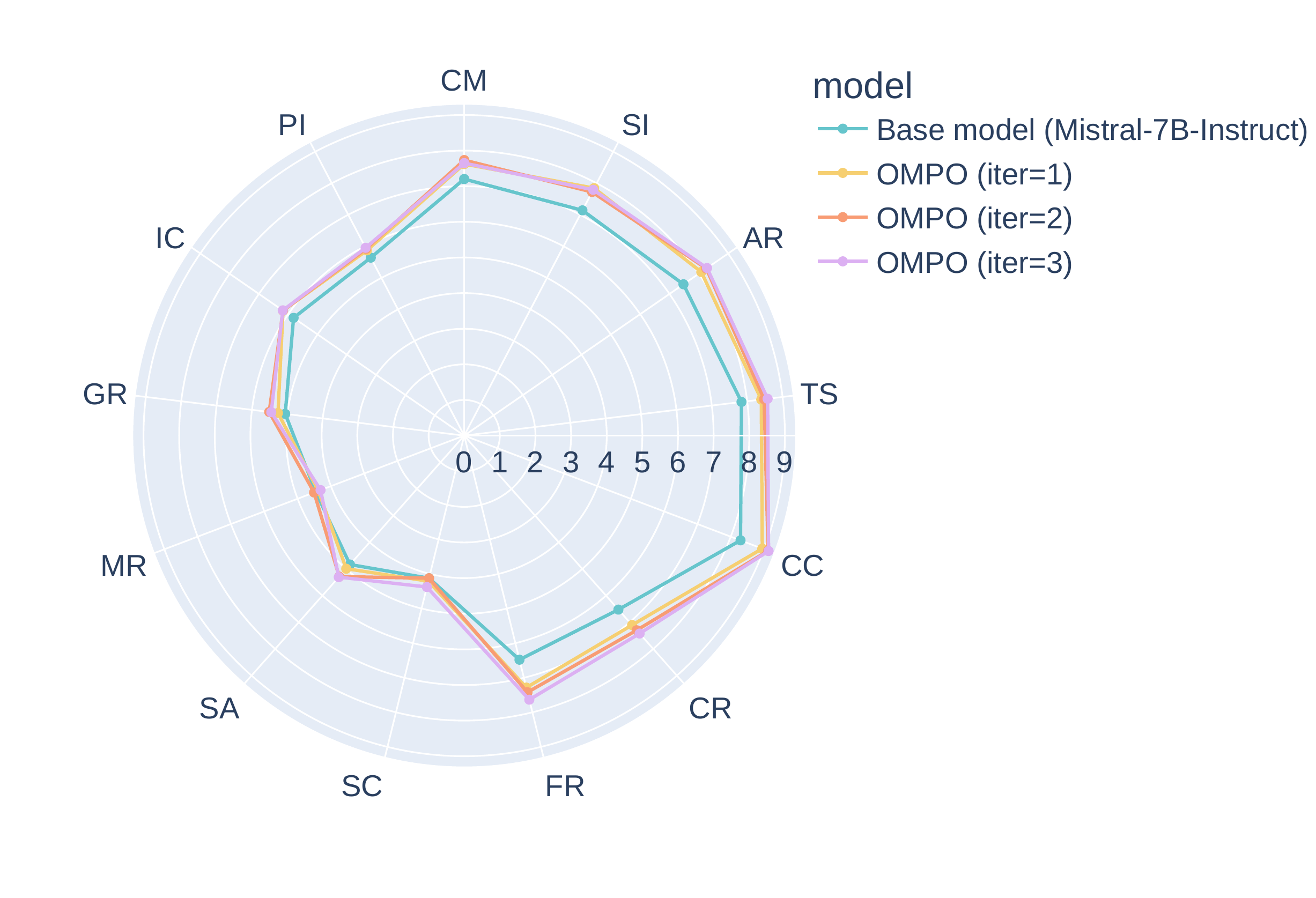}
        \label{subfig:radar}
    }
    \subfigure[Winning rate against the base model.]{
\includegraphics[width=0.28\textwidth]{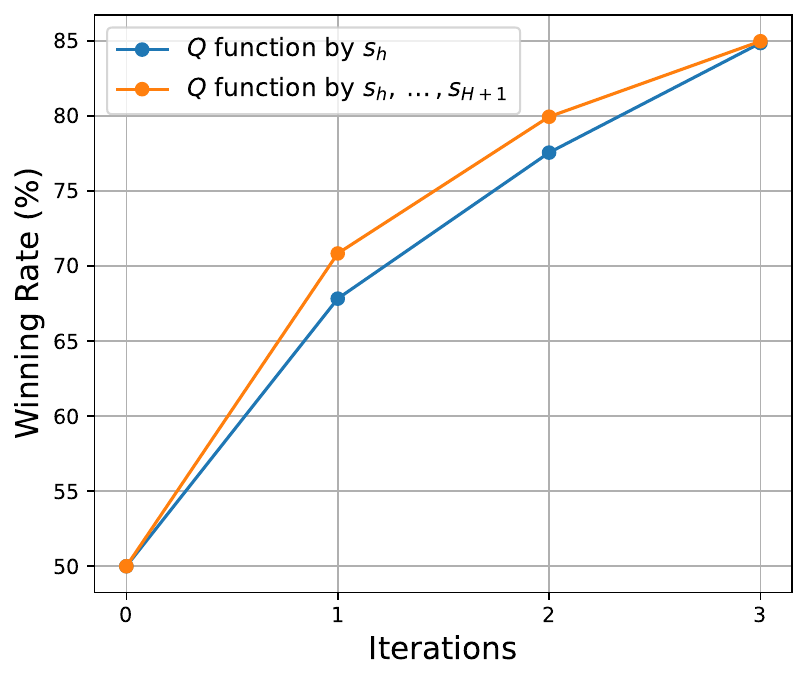}
     \label{subfig:winrate}
    }
    \vspace{-4mm}
\caption{(a): Results in the tabular experiments. Curves are averages across $10$ different randomly generated environments. The error bars report one standard deviation. (b): Result of \oomdmethod{} on the MT-bench-101 dataset;
(c) Winning rate against the base model with different approximations for the $Q$ functions. \rebuttal{When optimizing $a_h$ at the $h$ step, only considering the preference of $s_{h}$ is sufficient compared to using $s_{h},\dots,s_{H+1}$.}}
\label{fig:3fig}
\end{figure*}

In this section, we test the proposed algorithms with multi-turn conversations in MT-bench-101~\citep{bai2024mt}.
We choose Mistral-7B-Instruct-v0.2 as the base model~\citep{jiang2023mistral}.
We use a pre-trained PairRM
\footnote{{https://huggingface.co/llm-blender/PairRM}}
as the preference oracle. Specifically, given two conversations $[s_h,a_h]$ and $[s_h^\prime,a_h^\prime]$, PairRM will return a score that indicates the probability that $[s_h,a_h]$ is better than $[s_h^\prime,a_h^\prime]$, which can be used to considered as the preference oracle $\mathbb{P}$ defined in the previous section.
We select iterative DPO~\citep{dong2024rlhf}, iterative SPPO~\citep{wu2024self}, and iterative Step-DPO as our baselines. For both iterative DPO and iterative SPPO, we sample $K=5$ complete conversations starting from $s_1$, and estimate the winning rate
$\mathbb{P}([s_{H+1}^k,a_{H+1}^k] \succ (s_{H+1}^{k^\prime},a_{H+1}^{k^\prime}] )\,\forall k,k^\prime \in [K]$. Then we select both the best and worst conversations according to their winning rates against others, which is defined as $\frac{1}{K}\sum_{k^\prime=1}^K \mathbb{P}([s_{H+1}^k,a_{H+1}^k] \succ [s_{H+1}^{k^\prime},a_{H+1}^{k^\prime}] ) $ for the conversation $[s_{H+1}^k,a_{H+1}^k]$. Such a pair is used to train DPO while the winning rate is used to train SPPO. 
For both Step-DPO, \method{}, and \oomdmethod{}, we do the same strategy with starting at $s_h$. In \method{} and \oomdmethod{}, we estimate ${Q}(s_h, a_h, s_h, a_h^\prime)$ by  $ \mathbb{P}([s_h, a_h]\succ [s_h, a_h^\prime])$ to enhance the efficiency.
For \oomdmethod{}, the $Q^{\pi^t,\pi^{t-1}}$ term is estimated by calculating the winning rate between two answers (the best and the worst) generated by the current policy $\pi^t$ and the five answers previously generated by $\pi^{t-1}$. Each round of dialogue is rated on a scale of 1 to 10 by GPT-4o mini, with the mean score reported for each dialogue. All methods are run for a total of 3 iterations. The results are summarized in \cref{tab:mtbench}, showing significant improvements over the baselines with the proposed \method{} and \oomdmethod{} approaches. In~\cref{subfig:radar}, we present the Radar chart on different categories and we can see that the proposed \oomdmethod{} leads to improvements generally along the iterations. \cref{subfig:winrate} shows that using the entire trajectory to estimate the $Q$ function can lead to subtle improvement at the first two iterations while it finally achieves a similar winning rate when compared to the one that only use one step. \looseness-1

\subsection{\rebuttal{Experiment on math reasoning dataset}}
\rebuttal{
As discussed in \cref{sec:preliminary}, our framework can also cover the alignment of chain-of-thought reasoning. In this section, we validate the proposed methods in 
two widely used math reasoning datasets: MATH~\cite{hendrycksmath2021} and GSM8K~\cite{cobbe2021training}. We use Qwen2-7B-Instruct as the base model and follow the same evaluation procedure as in \citet{lai2024step}. We adopt the dataset for alignment from \citet{lai2024step}, which contains 10795 samples of augmented mathematical problems from MetaMath~\citep{yu2024metamath} and MMIQC~\citep{liu2024augmenting}\footnote{\rebuttal{https://huggingface.co/datasets/xinlai/Math-Step-DPO-10K}}. For both \method{} and \oomdmethod{}, we select the Llama-3-based model\footnote{\rebuttal{https://huggingface.co/RLHFlow/pair-preference-model-LLaMA3-8B}} as the preference oracle. For Step-DPO, we implement two versions. The first version is using the Llama-3-based model as the preference oracle and follows the same procedure as \method{} and \oomdmethod{}. The second version is using the checkpoint provided in \citet{lai2024step}.
The result is provided in \cref{tab:mathgsm}, showing that the proposed methods achieve performance comparable to Step-DPO~\citep{lai2024step}.
}


\begin{table}[!t]
    \centering
    \caption{ \rebuttal{Performance of math reasoning on MATH and GSM8K dataset across various models.
    \method{} and \oomdmethod{} achieve performance comparable to Step-DPO~\citep{lai2024step} without requiring the ground truth label of the dataset during fine-tuning while \citet{lai2024step} requires. Additionally, \method{} and \oomdmethod{} only need access to a Llama-3-based reward model (RM) to compare two answers whereas Step-DPO \cite{lai2024step} requires GPT-4 to locate and identify the incorrect reasoning step in an answer, which is a considerably more difficult task than comparison.
    }}
        \resizebox{1\columnwidth}{!}{%
\begin{tabular}{c|c|c|c|c|c}
\toprule
 Method & 
 \makecell{Additional info on\\incorrect step}
 &  \makecell{Auxiliary Autoregressive \\ Language Model}
   &  Average  &  GSM8K &     Math              \\
\midrule
 Base (Qwen2-7B-Instruct)   & - &-& 0.7049 &  0.8559  & 0.5538   \\
Step-DPO~\citep{lai2024step}   & \textcolor{tabgreen}{\textbf{\cmark}} &\textcolor{tabgreen}{\textbf{\cmark}} 
 (Require GPT-4) &0.7258 &  0.8680 & \textbf{0.5836}    \\
  Step-DPO        & 
  \textcolor{red}
  {\textbf{\xmark}} &\textcolor{red}
  {\textbf{\xmark}} (Require Llama-3 RM)
  &0.7184 & 0.8749  & 0.5618  \\   
\method{}    &\textcolor{red}
  {\textbf{\xmark}} &\textcolor{red}
  {\textbf{\xmark}} (Require Llama-3 RM)& 0.7260&  0.8734 & 	0.5786   \\
\oomdmethod{}  &\textcolor{red}
  {\textbf{\xmark}}  &\textcolor{red}
  {\textbf{\xmark}} (Require Llama-3 RM)&	\textbf{0.7283}  &  \textbf{0.8779}	& 0.5786     \\
\bottomrule
\end{tabular}%
}
\label{tab:mathgsm}
\end{table}

\section{Conclusion}
\label{sec:conclusion}
This work presents a novel framework to enhance the preference alignment of LLMs in multi-step setting by casting the alignment process as a two-player Markov game. In particular, we provided a new formulation of the problem on the occupancy measure space and propose an optimistic mirror descent ascent scheme to solve it. There are many exciting open directions that we outline in \cref{app:open}.

\section*{Acknowledgements}
Part of this work was done during Yongtao Wu and Zhenyu Zhu's visit at UCLA. This work was supported by Hasler Foundation Program: Hasler Responsible AI (project number 21043).
This work was supported by the Swiss National Science Foundation (SNSF) under grant number 200021\_205011.
This work is funded (in part) through a PhD fellowship of the Swiss Data Science Center, a joint venture between EPFL and ETH Zurich.
This research was sponsored by the Army Research Office and was accomplished under Grant Number W911NF-24-1-0048. 
Gu is partially supported by the National Science Foundation IIS-2008981, CPS-2312094, DMS-2323113, IIS-2403400 and the Sloan Research Fellowship.

\bibliography{neurips_2025}
\bibliographystyle{plainnat}


\clearpage
\appendix
\newpage
\appendix
\section*{Contents of the Appendix}
The Appendix is organized as follows:
\begin{itemize}[leftmargin=*]
  \vspace{-0.5em}
\item
    In \cref{sec:appendix_symbol}, we summarize the symbols and notation used in this paper.
\item In \cref{sec:relatedwork} we provide a complete overview of the related works, elaborate on the difference and contribution of our work, and provide preliminaries on single-step RLHF.
\item In \cref{app:code} we provide the omitted pseudocode of the practical implementation of \oomdmethod{}.
\item We describe \method{} with natural actor-critic in \cref{sec:nac}.
\item In \cref{sec:proof}, we provide the proofs for the theoretical results.
\item \cref{app:implement_forb} shows the implementation of Algorithm~\ref{alg:forb} with updates over policies.
\item \cref{sec:addexp} provides supplementary material on the numerical experiments.
\item We provide more discussion on \cref{minmax:eachpreference} in \cref{sec:motication_turnreward}.
\item In \cref{app:open} and \cref{sec:impact}, we outline few open directions and discuss broader impact of this work.
\end{itemize}

\begin{table}[ht]
\caption{Core symbols and notations used in this paper.}
\label{table:symbols_and_notations}
\small
\centering
\begin{tabular}{c | c | c}
\toprule
Symbol & Dimension(s) $\&$ range & Definition \\
\midrule
$x_h$ & - & Prompt at step $h$
\\
$a_h$ & - & Specific Answer (action) at step $h$
\\
$A_h$ & - & An answer (action) sample from a certain distribution at step $h$
\\
$s_h$ & - & Specific state at step $h$ 
\\
$S_h$ & - & A state sampled from a certain distribution at step $h$ 
\\
$s_1(s_h)$ & - &  The only initial state that can lead to $s_h$
\\
$\pi$ &  & Language model (policy)
\\
$\initial$ &  & Initial distribution of state $s_1$
\\
 $d_h^\pi(s,a)$ & $[0,1]$ &  Occupancy measure of $\pi$ at stage $h$
\\
$f$ &  & Transition function  
\\
$\mathrm{Pr}(s_h=s,a_h=a)$ &  $[0,1]$ & Joint probability of $s_h=a$ and $a_h=a$
\\
$o$ &  $\{0,1\}$ & Preference oracle \\
$\mathbb{P}([s,a]\succ[s^\prime,a^\prime)]$ & $[0,1]$ & Winning probability of $[s,a]$ against $[s^\prime,a^\prime)]$
\\
$D(p , q)$ &  & KL divergence of two probability distributions $p$ and $q$ 
\\ 
$\mathbb{D}(p,  q)$ &  &
Bregman Divergences between two points $q$ and $p$
\\
$\mathcal{D}_t$ &  & Dataset buffet at iteration $t$
\\
$\Delta_{\mathcal{X}}$ & $[0,1]^{\abs{\mathcal{X}}}$ & Set of probability distributions over the set $\mathcal{X}$
\\
$\odot$ & -  &  Hadamard product between two vectors
\\
\midrule
$\mathcal{O}$, $o$, $\Omega$ and $\Theta$ & - & Standard Bachmann–Landau order notation\\
\midrule
\end{tabular}
\end{table}
We additionally use a compact notation for representing the Bellman flow constraints. We denote by $E \in \mathbb{R}^{\abs{\mathcal{S}}\times\abs{\mathcal{A}}\abs{\mathcal{S}}}$ the matrix such that $(E z)(s, a) = z(s)$ for all vectors $z \in \mathbb{R}^{\abs{\mathcal{S}}}$. Additionally, we denote by $F$ the matrix such that $(F z)(s,a) = \sum_{s'}f(s'|s,a)z(s')$ for all vectors $z \in \mathbb{R}^{\abs{\mathcal{S}}}$.

\section{Symbols and notation}
\label{sec:appendix_symbol}
We include the core symbols and notation in \cref{table:symbols_and_notations} to facilitate the understanding of our work.
\section{Related work}
\label{sec:relatedwork}
In this section, we present an overview of the related literature,discussion on the differences with related literatures, and preliminary on single-step RLHF.
\subsection{Overview of related work}
\textbf{RLHF under Bradley-Terry model.}
Over the years, significant strides have been made towards developing RLHF algorithms from various perspectives under the Bradley-Terry (BT) model~\cite{bradley1952rank}. Earlier RLHF pipelines usually included supervised fine-tuning, learning a reward model, and reinforcement learning optimization with PPO~\citep{ziegler2019fine,stiennon2020learning,bai2022training,ouyang2022training}. Due to the instability and scaling issues of such a pipeline, direct alignment methods such as DPO have been proposed to bypass the training of the reward model~\citep{rafailov2023direct}. Several follow-up methods, such as generalized preference optimization~\citep{tang2024generalized}, use offline preference data to directly optimize pairwise preferences against a fixed opponent. A number of works have proposed reference-model-free method~\citep{meng2024simpo,hong2024orpo}. In \citet{meng2024simpo}, the impact of sequence length is mitigated by averaging the likelihood over the length of the sequence.
In the multi-step scenario, several multi-step variants of DPO are introduced in the math reasoning task. \citet{lu2024step} initiate from an intermediate step in a correct reasoning process and increase the temperature to produce a flawed reasoning path leading to an incorrect answer. Meanwhile, \citet{lai2024step} leverage GPT-4 to detect the first incorrect step in a multi-step reasoning trajectory, then regenerate from that point to obtain the correct path. Together, these serve as the pair of samples for DPO.


\textbf{RLHF under general preferences.}
The reward model in the BT model inherently implies transitivity in preferences. However, human preferences, especially the resulting averaged human preferences from populations, are usually nontransitive~\citep{tversky1969intransitivity,gardner1970mathematical}. To this end, \citet{azar2024general} outline a general framework for RLHF starting from general preference optimization and shows that DPO is a special case with the assumption of BT model. They further proposed IPO without such an assumption. Subsequently, \citet{munos2024nash} try to solve the alignment of non-transitive general preferences using two-player Nash learning in a bandit setting. In their work, preferences are regularized through KL divergence to a reference policy, and they prove the convergence of the last iterative. In \citet{swamyminimaximalist}, multi-step alignment is considered while preference signals are only applied at the final step. \citet{swamyminimaximalist} do not demonstrate the effectiveness of this framework in large language models.
\citet{wu2024self} propose SPPO, studying bandit alignment under general preferences. They introduce a novel loss function that increases the log-likelihood of the selected response while decreasing that of the rejected response, in contrast to DPO. 
\citet{rosset2024direct} start with the Nash learning framework and propose Online DPO, which is an iterative version of DPO. 
\citet{wang2023rlhf} provide theoretical analysis on multi-step RLHF under general preference while practice application is not explored. In \citet{wang2023rlhf}, the preference signal is given for the entire trajectory of an MDP while in this paper it is step-wise. 
\citet{shani2024multi} study multi-step alignment under general preferences. However, unlike their approach where only preferences at the final states are considered, our work is built on a two-player Markov game which assumes that human preference is received at each step. 
Additionally, we leverage the optimistic online gradient descent to achieve a better convergence rate than \citet{wang2023rlhf,shani2024multi}, and utilize Monte Carlo estimation with a small-scale pairwise reward model, avoiding the need for an additional function approximator for the critic network. Our contribution is compared to the recent literature on the same topic in \cref{tab:related}.

\textbf{Two-player markov game \& optimistic online gradient descent.} Two-player Markov games have been widely studied since the seminal work \citep{shapley1953stochastic}. Particularly relevant to our work is the research line on policy gradient algorithms for two-player Markov games such as \citet{daskalakis2020independent,wei2021last,alacaoglu2022natural}.
Our \oomdmethod{} is strictly related to the idea of optimistic online gradient descent \citep{popov1980modification,chiang2012online,rakhlin2013online} originally proposed in online learning to achieve small regret in case of slow varying loss sequences. Our update that uses only one projection per update was proposed in \citet{joulani17a}. The name of our method is due to a similar algorithm introduced in the context of variational inequalities by \citet{malitsky2020forward}.

\textbf{Token-level preference optimization.} A line of work formulates the alignment of contextual bandit problems in LLMs (Example.\ref{example:1}) from token-level MDPs perspective~\citep{rafailov2024r, zengtoken, liu2024tis}. In \citet{rafailov2024r}, by defining the reward at each token before the terminal token as the generation likelihood and using the maximum entropy RL objective, the authors derive the original objective of DPO from a new perspective that incorporates token-level rewards. \citet{zengtoken} assume that the reward for a response can be decomposed into token-level rewards at each token. Then they design a token-level objective function based on Trust Region Policy Optimization, adding token-level KL divergence constraints to the DPO objective in the final algorithm. More recently, ~\citet{liu2024tis} study how the difference in average rewards between chosen and rejected responses affects the optimization stability, designing a new algorithm where importance sampling weights are assigned to each token-level reward.
There are two main differences between the multi-step alignment approach in our work and those in previous work. First, while \citet{rafailov2024r, zengtoken, liu2024tis} develop alignment methods based on the Bradley-Terry model with transitive rewards, our framework is motivated by a two-player game with relative rewards. Secondly, although \citet{rafailov2024r, zengtoken, liu2024tis} formulate the alignment process as an MDP, their final objective is tailored to a contextual bandit problem in LLMs. In contrast, our objective is designed for a multi-step alignment problem, suited for multi-turn conversation or chain-of-thought reasoning.
\subsection{Discussion on the difference from SPPO}
Next, we elaborate on the difference with SPPO~\citep{wu2024self} below: Firstly, the theoretical analysis of the proposed MPO differs from that of SPPO due to differences in the settings. SPPO considers the contextual bandit problem and builds its analysis based on the game matrix from~\citet{freund1999adaptive}. In our case, however, we frame the problem as a Markov game and employ a distinct theoretical analysis apart from~\citet{freund1999adaptive}. Specifically, in our proof, we (i) use the performance difference lemma to rewrite the global regret as weighted average of local regrets and (ii) control the local regrets with multiplicative weights updates. Secondly, a new algorithm, OMPO, is developed in this work with a novel theoretical guarantee. In the case where the horizon $H = 1$, the update of OMPO reduces to $$\pi^{t+1}(a|s) \propto \pi^t(a|s) \exp{[\beta (2\mathbb{P}(a\succ \pi^t(\cdot|s)) - \mathbb{P}(a\succ \pi^{t-1}(\cdot|s)))]},$$ while the update of SPPO is $$ \pi^{t+1}(a|s) \propto \pi^t(a|s) \exp{[\beta (\mathbb{P}(a\succ \pi^t(\cdot|s)))]}.$$ As a result, OMPO enables \( \mathcal{O}(\epsilon^{-1}) \) policy updates to converge to an \( \epsilon \)-approximate Nash equilibrium instead of \( \mathcal{O}(\epsilon^{-2}) \), according to our theoretical analysis.

\subsection{Preliminary on single-step RLHF}
\label{prelimi:singlestep}
In this section, we review the earlier methods in single-step RLHF. Classical RLHF methods~\citep{ziegler2019fine,ouyang2022training} assume that the preference oracle can be expressed by an underlying Bradley-Terry (BT) reward model~\citep{bradley1952rank}, i.e., 
$$\mathbb{P}([x_1,a_1]\succ [x_1,a_1']) = \sigmoid(r(x_1,a_1)-r(x_1,a_1'))\,.$$ Thus, one can first learn a reward model and optimize the policy based on the following KL-constrained RL objective with PPO:
$$
\pi^\star = \argmax_{\pi} \mathbb{E}
_{X_1 \sim \initial,A_1 \sim \pi(\cdot|X_1)}
(r(X_1,A_1) -\beta D(\pi(\cdot|X_1) , 
\pi_{\rm ref}
(\cdot|X_1)) )\,,
$$
where $\beta$ is a parameter controlling the deviation from the reference model $\pi_{\rm ref}$. Another line of work, e.g., DPO~\citep{rafailov2023direct}, avoids explicit reward modeling and optimizes the following objective over pair-wise preference data ${(X_1,A_1^w,A_1^l})$.
$$
\pi^\star = \argmax_{\pi} \mathbb{E}
_{(X_1,A_1^w,A_1^l) \sim \mathcal{D} }
\Bigg[\log \sigmoid \left(\beta \log\frac{\pi (A_1^w|X_1)}{\pi_{\rm ref}(A_1^w|X_1)} -
\beta \log\frac{\pi (A_1^l|X_1)}{\pi_{\rm ref}(A_1^l|X_1)} 
\right)
\Bigg]\,.
$$
More recently, several studies ~\citep{swamyminimaximalist,munos2024nash,wu2024self,zhang2024iterative,rosset2024direct} have circumvented the Bradley-Terry (BT) assumption by directly modeling the general oracle $\mathbb{P}$, avoiding the reliance on the reward model which is transitive. Specifically, the goal is to identify the Nash equilibrium (or von Neumann winner) of the following two-player constant-sum game:
\begin{equation*}
\begin{split}
     (\pi^*, \pi^*)
    = &
    \arg \max_{\pi}\min_{\pi'}
    \mathbb{E}_{X_1 \sim \initial,A_1 \sim \pi(\cdot|X_1),A_1^\prime \sim \pi^\prime(\cdot|X_1)}
     \mathbb{P}([X_1,A_1] \succ [X_1,A_1^\prime])\,.
\end{split}
\end{equation*}

\section{Pseudocode omitted from the main text}
\label{app:code}
In \cref{alg:pracompo}, we provide the pseudocode for the practical version of \oomdmethod{}.
\begin{algorithm}[!h]
\caption{\oomdmethod{} (Practical version)}
\label{alg:forbprac}
\begin{algorithmic} 
    \STATE \textbf{input}:
    reference policy $\pi^1$,
    preference oracle $\mathbb{P}$,
    learning rate $\beta$, number of generated samples $K$, 
    horizon $H$, total iteration $T$, tunable bias term $\tau$.
    \FOR{$t=1,2,\dots, T $}
        \STATE Sample $S_1^1 \sim \initial$.
        \FOR{$h=1,2,\dots, H$}
        \STATE Generate responses $A^1_h \sim   \pi^t(\cdot|S_h^1)$.
        \ENDFOR
        \STATE Clear the dataset buffer $\mathcal{D}_t$.
        \FOR{$
        {h=1,2,\dots, H}$}
        \STATE Set $S_{h}^{K} =\dots=S_{h}^{2}=S_{h}^{1} $.
        \STATE Generate $K-1$ conversations by sampling $A_{\hat{h}}^{2:K} \sim  \pi^t(\cdot|S_{\hat{h}}^{2:K})$ for $\hat{h}$ $\in [h,H]$. 
        
        \STATE  Estimate $\widehat{Q^t_h}$ via \cref{eq:Q_ompo_est}.
        \STATE Add
        $
        \{
        (S_h^1, A_h^k)
        \}
        _{k\in[K]}
        $
        into  $\mathcal{D}_t$.
        \ENDFOR
   \begin{align*} \pi^{t+1}
            \leftarrow 
            \argmin_{\pi\in\Pi}
            \sum_{S,A \in \mathcal{D}_t}
            \bigg(
            \log \pi(A|S) - \log \pi^t(A|S)
            -
            \beta
        \widehat{Q}^t_1(S,A)
            +\beta\frac{H-h+1}{2}
            \bigg)^2.
    \end{align*}
    \ENDFOR
    \STATE \textbf{output}: $\pi^{T+1}$
  \end{algorithmic}
\label{alg:pracompo}
\end{algorithm}

\section{\method{} with natural actor-critic}
\label{sec:nac}
This section presents our first method to find an approximate solution to \ref{minmax:eachpreference}. 
In order to find an $\epsilon$-approximate Nash equilibrium, the MPO method builds upon the next lemma which decomposes the difference of two value functions to the $Q$ function at each step. Lemma~\ref{lemma:valuediff} is the extension of \citet{kakade2002approximately} to the multi-agent setting where the dynamics are controlled independently by each player but the reward depends on the joint-state action tuple. In \citet{kakade2002approximately}, the $Q$ function is a function of only one state-action pair while in our setting the $Q$ function is based on two state-action pairs. 
 \begin{lemma}[Value difference lemma (\rebuttal{Adapted from \citet{kakade2002approximately}})]
 \label{lemma:valuediff}
\if 0
For any initial state $s_1 \in \initial$, we have:
\begin{equation}
    \begin{split}
        V^{\pi, \overline{\pi}}(s_1,s_1) - 
        V^{\pi^\prime, \overline{\pi}}(s_1,s_1)
        =
        \mathbb{E}
        _{\substack{(
        (\overline{s}_h,\overline{a}_h) \sim \overline{\pi}, f
        \\  
        s_h^\prime \sim \pi^\prime, f
        }} 
        \sum_{h=1}^H \langle \pi(\cdot|s_h^\prime)-\pi^\prime(\cdot|s_h^\prime), 
        Q^{\pi,\overline{\pi}}(s_h^\prime,\cdot,\overline{s}_h,\overline{a}_h)
        \rangle\,.
    \end{split}
\end{equation}
\fi
For a finite horizon MDP with initial distribution $\initial$ it holds that:
\begin{equation*}
\resizebox{1\textwidth}{!}{%
    $
    \begin{split}
  \innerprod{\initial}{V^{\pi, \bar{\pi}} -V^{\pi^\prime, \bar{\pi}}} 
=\mathbb{E}_{S_1\sim\initial}\sum^H_{h=1}\mathbb{E}_{S \sim d_h^\pi |S_1} 
\bigg[ 
\left\langle
\mathbb{E}_{S',A' \sim d_h^{\bar{\pi}}|S_1}
Q_h^{\pi', \bar{\pi}}(S,\cdot,S',A')
,
{\pi_h(\cdot|S,S_1) - \pi_h'(\cdot|S,S_1)}
\right\rangle
\bigg]\,.
\end{split}
$
}
\end{equation*}
\end{lemma}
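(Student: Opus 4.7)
The plan is to reduce this two-player identity to the classical single-agent performance difference lemma of \citet{kakade2002approximately} by exploiting the fact that, conditionally on the shared initial state $S_1 = S_1'$, the trajectories of the two players are independent (their policies and transitions act on disjoint randomness). Concretely, for every $h$ and every $s_1 \in \mathrm{supp}(\initial)$ the joint occupancy factorizes as $d_h^{\pi, \bar{\pi}}|_{s_1}(s,a,s',a') = d_h^{\pi}|_{s_1}(s,a) \cdot d_h^{\bar{\pi}}|_{s_1}(s',a')$, as already noted in the occupancy-measure discussion of \cref{sec:preliminary}.

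Given this factorization, I would introduce the effective, opponent-averaged single-agent reward
\begin{equation*}
\tilde{r}_h^{\bar{\pi}, s_1}(s,a) := \mathbb{E}_{S',A' \sim d_h^{\bar{\pi}}|s_1}\bigl[r(s,a,S',A')\bigr].
\end{equation*}
With this definition, for any fixed $\bar{\pi}$ and $s_1$, the pairwise value $V^{\pi,\bar{\pi}}(s_1, s_1)$ coincides with the single-agent (non-stationary) value of $\pi$ in the MDP whose stage-$h$ reward is $\tilde{r}_h^{\bar{\pi}, s_1}$. Applying the standard performance difference lemma in this reduced MDP yields
\begin{equation*}
V^{\pi, \bar{\pi}}(s_1, s_1) - V^{\pi', \bar{\pi}}(s_1, s_1) = \sum_{h=1}^H \mathbb{E}_{S \sim d_h^{\pi}|s_1}\bigl[\langle \tilde{Q}_h^{\pi', \bar{\pi}, s_1}(S, \cdot),\, \pi_h(\cdot|S) - \pi_h'(\cdot|S)\rangle\bigr],
\end{equation*}
where $\tilde{Q}$ is the state-action value of $\pi'$ under the effective reward.

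A short backward induction on $h$ using the Bellman equation, together with the independence argument above, would then identify
\begin{equation*}
\tilde{Q}_h^{\pi', \bar{\pi}, s_1}(s,a) = \mathbb{E}_{S', A' \sim d_h^{\bar{\pi}}|s_1}\bigl[Q_h^{\pi', \bar{\pi}}(s, a, S', A')\bigr],
\end{equation*}
which matches the inner expectation in the claim. Taking an outer expectation over $S_1 \sim \initial$ then delivers the stated identity. The main delicacy lies in this last identification: because $\tilde{r}$ is itself indexed by $s_1$ and by the opponent's stage-$h$ occupancy, one has to check carefully that the opponent randomness being averaged at stage $h$ inside $\tilde{Q}$ is exactly the marginal $d_h^{\bar{\pi}}|s_1$ that appears on the RHS, rather than, say, a conditional distribution given the residual rollout. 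The product structure of the joint occupancy, combined with the fact that the opponent dynamics do not depend on the $\pi$-player's state or action, is precisely what makes these two quantities coincide and is the only non-routine step in the argument.
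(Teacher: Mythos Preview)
Your proposal is correct but takes a genuinely different route from the paper. The paper never leaves the joint two-player space: it writes the Bellman equation $r_h + F V_{h+1}^{\pi',\bar\pi} = Q_h^{\pi',\bar\pi}$ in vector form, takes inner products against the \emph{joint} occupancy $d_h^{\pi,\bar\pi}$, uses the Bellman flow constraint $F^T d_h^{\pi,\bar\pi} = E^T d_{h+1}^{\pi,\bar\pi}$ to obtain a telescoping sum in $h$, and only at the very last line factors $d_h^{\pi,\bar\pi}(s,a,s',a'\,|\,s_1) = d_h^{\pi}(s,a\,|\,s_1)\, d_h^{\bar\pi}(s',a'\,|\,s_1)$ to pull out the inner expectation over $(S',A')$. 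You instead factor first, absorb $\bar\pi$ into a non-stationary effective reward $\tilde r_h^{\bar\pi,s_1}$, reduce to a bona fide single-agent MDP, invoke the classical performance-difference lemma as a black box, and then identify $\tilde Q_h$ with $\mathbb{E}_{(S',A')\sim d_h^{\bar\pi}|s_1} Q_h^{\pi',\bar\pi}(\cdot,\cdot,S',A')$. The paper's route is self-contained and avoids that final identification step entirely (since it never decouples the players), at the cost of manipulating the operators $E,F$ over the product space $\mathcal{S}^2\times\mathcal{A}^2$. Your reduction is more conceptual and makes transparent why the two-player identity inherits the single-agent form; the only non-routine step is exactly the one you flag, and it indeed holds because sampling $(S'_h,A'_h)\sim d_h^{\bar\pi}|s_1$ and then continuing the $\bar\pi$-rollout reproduces the marginals $d_\tau^{\bar\pi}|s_1$ for every $\tau\ge h$ by the Markov property.
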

The proof can be found at \cref{proof:valuedifflemma}. 
In our setting, the initial state $S_1$ is a deterministic function of the state $S$ so we can remove $S_1$ from the conditioning in the policy\footnote{
This is motivated by practical LLM training, where system prompts such as ``user'' and ``assistant'' are inserted before every $x_h$ and $a_h$, respectively. As a result, one can infer a unique $s_1$ for every $s$.
The conditioning of the policy on the initial state might appear unusual at the first glance but it is in fact common in the setting of Contextual MDPs (see for example \cite{levy2023efficient}). Indeed, the initial state $s_1$ could be interpreted as a context and we optimize over policies that depend on both the initial context and the current state.
}. To highlight this fact we denote, for all $s\in\mathcal{S}$ as $s_1(s)$ the only initial state that can lead to $s$ .
By setting $\pi^\prime = \overline{\pi}=\pi^t$ in \cref{lemma:valuediff} and $\pi = \pi^\star$ and summing from $t=1$ to $T$ we obtain: \looseness-1
\begin{equation*}
\begin{split}
  &  \sum^T_{t=1}\innerprod{\initial}{V^{\pi^\star, \pi^t} -V^{\pi^t, \pi^t}}  =  \mathbb{E}_{s_1\sim\initial}\sum^H_{h=1}\sum^T_{t=1} \mathbb{E}_{s \sim d_h^{\pi^\star}|s_1}
  \\ & \bs{\innerprod{\mathbb{E}_{s^\prime,a^\prime \sim d_h^{\pi^t}|s_1}Q_h^{\pi^t, \pi^t}(s,\cdot,s',a')}{\pi_h^\star(\cdot|s) - \pi_h^t(\cdot|s)}}\,.
\end{split}
\end{equation*}


Since the sum over $t$ commutes with the expectation, we see that we can decompose the global regret $\sum^T_{t=1}\innerprod{\initial}{V^{\pi^\star, \pi^t} -V^{\pi^t, \pi^t}}$ into a weighted sum of local regrets at each stage $h \in [H]$. 
Therefore, we can control the global regret implementing at each state online mirror descent updates (\citealt{warmuth1997continuous}, \citealt[Chapter 6]{orabona2023onlinelearning}, \citealt{cesa2006prediction}), i.e., implementing the following update:
\vspace{-2mm}
\begin{equation*}
    \begin{split}
      &  \pi_h^{t+1}(\cdot|s) = \argmax_{\pi}
        \langle \pi(\cdot|s), 
         \mathbb{E}_{S^\prime, A^\prime \sim d_h^{\pi^t}|s_1(s)} 
         Q_h^{\pi^t,\pi^t}(s,\cdot,S',A')
        \rangle - \beta {D}(\pi(\cdot|s),\pi_h^t(\cdot|s))\,,
    \end{split}
\vspace{-2mm}
\end{equation*}
where $\beta$ is a learning rate. The solution has the following form:
$
 \pi_h^{t+1}(a|s)
\propto  \pi_h^t(a|s) 
      \exp  \{ \beta \mathbb{E}_{S^\prime, A^\prime \sim d_h^{\pi^t}|s_1(s)} Q_h^{\pi^t,\pi^t}(s,a,S^\prime,A^\prime) \},$
which corresponds to natural actor-critic \citep{peters2008natural} that utilizes a softmax-based method for updating policies. The number of policy updates needed by the ideal version of \method{} (see \cref{alg:mspo_theory}) can be bounded as follows and the proof can be found at \cref{proof:converge}.

\begin{algorithm}[t]
\caption{\method{} (Theoretical Version)}
\label{alg:mspo_theory}
\begin{algorithmic}[1]
    \STATE \textbf{input}:
    reference policy $\pi^1$,
    preference oracle $\mathbb{P}$,
    learning rate $\beta = \sqrt{\frac{\log{\underline{\pi}^{-1}}}{TH^2}}$, 
    total iteration $T$ 
    \FOR{$t=1,2,\dots, T $}
        \STATE 
        \hspace{1em}
         \vspace{-5mm}
        \begin{align*}
        \scalemath{0.8}{
            \pi_h^{t+1}(a|s)
            \propto 
            \pi_h^t(a|s)
            \exp\bs{\beta
        \mathbb{E}_{S^\prime,A^\prime \sim d_h^{\pi^t}|s_1(s)}Q_h^{\pi^t,\pi^t}(s,a,S^\prime,A^\prime)}
        }
        \end{align*}
    \ENDFOR
    \STATE \textbf{output}: $\bar{\pi}^T$ (s.t. $d_h^{\bar{\pi}^T} = \frac{1}{T}\sum^T_{t=1} d_h^{\pi^t} $, $~ \forall h \in [H]$.). 
  \end{algorithmic}
\end{algorithm}

\begin{algorithm}[t]
\caption{\method{} (Practical version)}
\label{alg:so}
\begin{algorithmic}[1]
    \STATE \textbf{input}:
    reference policy $\pi^1$,
    preference oracle $\mathbb{P}$,
    learning rate $\beta$, number of generated samples $K$, 
    horizon $H$, total iteration $T$.
    \FOR{$t=1,2,\dots, T $}
        \STATE Sample $s_1^1 \sim \initial$.
        \FOR{$h=1,2,\dots, H $}
        \STATE Generate responses $A^1_h \sim   \pi^t(\cdot|S_h^1)$.
        \ENDFOR
        \STATE Clear the dataset buffer $\mathcal{D}_t$.
        \FOR{$
        {h=1,2,\dots, H}$}
        \STATE Set $S_{h}^{K} =,\dots,=S_{h}^{2}=S_{h}^{1} $.
        \STATE Generate $K-1$ conversations by sampling $A_{\hat{h}}^{2:K} \sim  \pi^t(\cdot|S_{\hat{h}}^{2:K})$ for $\hat{h}$ $\in [h,H]$. 
        \STATE Estimate $\mathbb{E}_{A_h^{k^\prime}} Q^{\pi^t,\pi^t}(S_h^1,A_h^{k},S_h^1,A_h^{k'}), \forall k,k^\prime \in [K]$ via \cref{equ:q_estimate} with query to $\mathbb{P}$.
        \STATE
        Fill out $\mathcal{D}_t$ with the following data pair  $
        \bigg\{
        (S_h^1, A_h^k, \mathbb{E}_{A_h^{k^\prime}}Q^{\pi^t,\pi^t}(S_h^1,A_h^k,S_h^1,A_h^{k^\prime}) 
        \bigg\}
        _{k\in[K]}, 
        $
        \label{line:select}
        \ENDFOR
        \STATE Optimize $\pi_{{t+1}}$ over $\mathcal{D}_t$ according to 
        $
            \pi^{t+1}
            \leftarrow 
            \argmin_{\pi}
            \mathbb{E}
            \bigg(
            \log \bigg(\frac{\pi(A_h^k|S_h^1)}{\pi^t(A_h^k|S_h^1)}\bigg)
            -
            \beta \bigg(\mathbb{E}_{A_h^{k^\prime}}Q^{\pi^t,\pi^t}(S_h^1,A_h^k,S_h^1,A_h^{k^\prime}) 
            -\frac{H-h+1}{2}
            \bigg)
            \bigg)^2.
            $
    \ENDFOR
    \STATE \textbf{output}: $\pi^{T+1}$
  \end{algorithmic}
\end{algorithm}

\begin{theorem} Consider~\cref{alg:mspo_theory} and assume that the reference policy is uniformly lower bounded by $\underline{\pi}$, then there exists a policy $\bar{\pi}^T$ such that
$d^{\bar{\pi}^T}_h = \frac{1}{T}\sum^T_{t=1} d^{\pi^t}_h,\forall h\in [H]$, and it holds that for $T = \frac{16 H^4 \log \underline{\pi}^{-1}}{ \epsilon^2}$ the policy pair $(\bar{\pi}^T, \bar{\pi}^T)$ is an $\epsilon$-approximate Nash equilibrium. Therefore,~\cref{alg:mspo_theory} outputs an $\epsilon$-approximate Nash equilibrium after $\frac{16 H^4 \log \underline{\pi}^{-1}}{ \epsilon^2}$ policy updates.
\label{thm:converge}
\end{theorem}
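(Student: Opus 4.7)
The plan is to bound the Nash exploitability of the averaged iterate $\bar\pi^T$ by a one-sided regret against an arbitrary comparator, and then to control that regret stage-wise via a standard multiplicative-weights analysis at each state. First, I would use the constant-sum symmetry $r(s,a,s',a') + r(s',a',s,a) = 1$, which implies $\innerprod{\initial}{V^{\pi,\pi}} = H/2$ for every $\pi$, to argue that the two sides of the NE gap coincide. Combining this identity with the bilinearity of the objective in the occupancy measures, and with the definition $d^{\bar\pi^T}_h = \tfrac{1}{T}\sum_t d^{\pi^t}_h$, yields
\begin{equation*}
\max_{\pi}\innerprod{\initial}{V^{\pi,\bar\pi^T}} - \innerprod{\initial}{V^{\bar\pi^T,\bar\pi^T}} = \frac{1}{T}\max_{\pi}\sum_{t=1}^T \innerprod{\initial}{V^{\pi,\pi^t} - V^{\pi^t,\pi^t}}.
\end{equation*}
Thus the task reduces to showing the right-hand side is at most $\epsilon$ after the announced number of updates. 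Existence of the averaged policy $\bar\pi^T$ is a standard fact since the set $\mathcal{F}$ of achievable occupancy measures is convex.

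Next, I would invoke \cref{lemma:valuediff} with the substitution $\pi' = \bar\pi = \pi^t$ to rewrite each summand as
\begin{equation*}
\innerprod{\initial}{V^{\pi,\pi^t}-V^{\pi^t,\pi^t}} = \mathbb{E}_{S_1\sim\initial}\sum_{h=1}^H \mathbb{E}_{S\sim d^\pi_h|S_1}\innerprod{\pi_h(\cdot|S) - \pi^t_h(\cdot|S)}{g^t_h(S,\cdot)},
\end{equation*}
where $g^t_h(s,a) := \mathbb{E}_{S',A'\sim d^{\pi^t}_h|s_1(s)}Q^{\pi^t,\pi^t}_h(s,a,S',A')$. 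The key observation is that the \method{} update in \cref{alg:mspo_theory} is, at each fixed $(s,h)$, exactly exponentiated gradient ascent on the potential $g^t_h(s,\cdot)$ with step $\beta$, initialized at $\pi^1_h(\cdot|s)$. The standard KL-regularized online mirror descent regret bound (see e.g.\ \cite{orabona2023onlinelearning}) then gives, for every $p \in \Delta_{\mathcal{A}}$,
\begin{equation*}
\sum_{t=1}^T \innerprod{p - \pi^t_h(\cdot|s)}{g^t_h(s,\cdot)} \leq \frac{D(p \,\|\, \pi^1_h(\cdot|s))}{\beta} + \frac{\beta T}{2}\max_t\|g^t_h(s,\cdot)\|_\infty^2.
\end{equation*}
The uniform lower bound on the reference policy yields $D(p\,\|\,\pi^1_h(\cdot|s)) \leq \log\underline{\pi}^{-1}$ for every $p$ and $s$, while rewards in $[0,1]$ and horizon $H$ give $\|g^t_h\|_\infty \leq H$. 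Because the resulting bound is uniform in $s$, taking $\mathbb{E}_{S\sim d^\pi_h|S_1}$, then $\mathbb{E}_{S_1\sim\initial}$, then the $\max_\pi$, and finally summing over $h\in[H]$ produces
\begin{equation*}
\max_\pi \sum_{t=1}^T \innerprod{\initial}{V^{\pi,\pi^t} - V^{\pi^t,\pi^t}} \leq H\br{\frac{\log\underline{\pi}^{-1}}{\beta} + \frac{\beta T H^2}{2}}.
\end{equation*}
Plugging in the prescribed $\beta = \sqrt{\log\underline{\pi}^{-1}/(TH^2)}$ and dividing by $T$ gives an exploitability of order $H^2\sqrt{\log\underline{\pi}^{-1}/T}$, and solving $\leq \epsilon$ for $T$ yields the claimed iteration count.

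The main obstacle I anticipate is justifying the exchange of $\max_{\pi}$ with the expectation over the rollout state $S \sim d^\pi_h|S_1$, because $\pi$ appears both as the sampling distribution and as the distribution on actions inside the inner product. This step works precisely because the per-state regret bound is uniform in $s$ and holds for every comparator $p \in \Delta_{\mathcal{A}}$ at that state; hence averaging against any state distribution, even one that itself depends on $\pi$, preserves the same constant. A secondary bookkeeping task is to verify that the substitution $\pi'=\bar\pi=\pi^t$ in \cref{lemma:valuediff} produces exactly the functional $g^t_h$ appearing in the \method{} update, so that the multiplicative-weights analysis applies to the same gradient used by the algorithm.
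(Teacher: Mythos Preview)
Your proposal is correct and follows essentially the same approach as the paper: apply \cref{lemma:valuediff} with $\pi'=\bar\pi=\pi^t$ to reduce the global regret to a weighted sum of per-state local regrets, bound each local regret uniformly in $s$ via the standard entropic OMD inequality using $\|g^t_h\|_\infty\le H$ and $D(\cdot\|\pi^1_h(\cdot|s))\le\log\underline{\pi}^{-1}$, and then average. The only stylistic difference is that you invoke the constant-sum identity $\innerprod{\initial}{V^{\pi,\pi}}=H/2$ upfront to collapse both sides of the NE gap into a single one-sided regret, whereas the paper bounds the two sides separately via antisymmetry and then adds them into a duality gap; your route is slightly more direct and yields a marginally better constant, but the substance is the same.
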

\begin{remark}
    The above result generalizes the $\mathcal{O}(H^2\epsilon^{-2})$ bound on the policy updates proven in \citet{swamyminimaximalist} in the setting of terminal-only reward. The additional $H^2$ factor in our theorem is due to considering rewards that are not terminal-only.  In \cref{thm:converge_fast} we show that~\cref{alg:forb} improves the number of policy updates needed to converge to an $\epsilon$-approximate Nash equilibrium to $\mathcal{O}(H \epsilon^{-1})$. 
\end{remark}
\textbf{Practical relaxations.} For the above theorem, \method{} requires the access of the $Q$ function, which is unknown. Next, we are going to develop a practical algorithm to efficiently estimate the $Q$ function and implement \cref{alg:mspo_theory}. Equivalently, the update in \cref{alg:mspo_theory} can be written as
 \begin{equation}
     \begin{split}
     \scalemath{0.9}{
          \pi_h^{t+1}(a|s) = \frac{\pi_h^t(a|s) \exp\{{\beta \mathbb{E}_{S^\prime, A^\prime \sim d_h^{\pi^t}|s_1(S)} Q_h^{\pi^t,\pi^t}(s,a,S',A^\prime)}\}}{Z_h^t(s)}\,,
          }
     \end{split}
     \label{eq:update_equal}
     \end{equation}
where $Z_h^t(S)$ is the partition function. Next, we express \cref{eq:update_equal} as follows for all $s,a\in\mathcal{S}\times\mathcal{A}$:
 \begin{equation*}
     \begin{split}
     \scalemath{1}{
          \log \frac{\pi_h^{t+1}(a|s)}{\pi_h^t(a|s)} = \beta \mathbb{E}_{S^\prime, A^\prime \sim d_h^{\pi^t}|s_1(s)} Q_h^{\pi^t,\pi^t}(s,a,S',A^\prime) - \log Z_h^t(s)\,.}
     \end{split}
     \end{equation*}
Next, following~\citet{wu2024self}, we approximate the equation above with an approximate solution of the following optimization program:
 \begin{equation*}
 \scalemath{0.85}{
          \pi^{t+1}
          = \argmin_{\pi} \sum^H_{h=1} \mathbb{E}          _{\substack{ S_1 \sim \initial \\
          (S_h,A_h) \sim d^{\pi^t}_h|S_1
          }}
          \bigg[
        \log   \frac{\pi(A_h|S_h)}{\pi_h^t(A_h|S_h)} 
      - 
        (\mathbb{E}_{S^\prime, A^\prime \sim d_h^{\pi^t}|S_1} Q_h^{\pi^t,\pi^t}(S_h,A_h,S',A^\prime) - \log Z_h^t(S_h))
          \bigg]^2\,.
          }
     \end{equation*}
 Unfortunately, solving the above minimization exactly is out of hope. The first difficulty is the efficient estimation of  $\mathbb{E}_{S',A' \sim d_h^{\pi^t}|s_1}Q_h^{\pi^t,\pi^t}(S_h,A_h,S',A^\prime)$. In particular, since $S^\prime$ and $S$ are sampled from the same distribution, we will sample $A^\prime$ from the state $S_h$ and use the Monte Carlo estimator:
 \begin{equation}
   \begin{split}
&\mathbb{E}_{A'\sim \pi^t(\cdot|S_h)}Q_h^{\pi^t,\pi^t}(S_h,A_h,S_h,A') \\ &
\approx  \frac{1}{K} \sum_{k=1}^K \sum_{\hat{h} = h}^H  \mathbb{P}([S_{\hat{h},k},A_{\hat{h},k}]\succ[S_{\hat{h},k}^\prime,A_{\hat{h},k}^\prime]) \mathds{1}_{\bc{S_{h,k}= S'_{h,k}=S_h, A_{h,k}=A_h}}\,,
\label{equ:q_estimate}
\vspace{-5mm}
   \end{split} 
\end{equation}
where the sequences $\bc{(S_{\hat{h},k},A_{\hat{h},k}, S'_{\hat{h},k},A'_{\hat{h},k})}^H_{\hat{h}=h}$ for
$k \in [K]$ are generated by rollouts of the policies pair $(\pi^t, \pi^t)$.
The second difficulty is $Z_h^t(s)$, which is difficult to compute for large action spaces. In all states $s$, we replace $\log Z_h^t(s)$ with $\beta\frac{H-h+1}{2}$.
Such heuristic is motivated by the following observation: If the preference between $a_h$ and $a_h^\prime$ in \cref{equ:q_estimate} results in a tie, then with such $\log Z_h^t(s)$, the solution of \cref{equ:q_estimate} is $\pi^{t+1} = \pi^t$, leaving the model unchanged.
In summary, we provide a practical version of \method{} in \cref{alg:so}. In practice, we used a stationary policy that we find to be sufficient to obtain convincing results.




\section{Proofs}
\label{sec:proof}
\begin{lemma} (\rebuttal{Adapted from \cite{Puterman:1994}})
 The pair-wise value function and pair-wise Q-value function satisfy the  Bellman equation, i.e., for all $h \in [H]$: 
 $Q_h^{\pi,\pi^\prime}(s,a,s^\prime,a^\prime) 
= r(s,a,s^\prime,a^\prime) +   \mathbb{E}_{\hat{S}\sim f(\cdot|s,a),\bar{S}\sim f(\cdot|s^\prime,a^\prime)} [V_{h+1}^{\pi,\pi^\prime}(\hat{S},\bar{S})]\,$ and 
$
 V_h^{\pi,\pi^\prime}(s,s^\prime) = \mathbb{E}_{A \sim \pi_h(\cdot|S), A^\prime \sim \pi_h^\prime(\cdot|S^\prime)} 
Q_h^{\pi,\pi^\prime}(s,a,A^\prime,A^\prime).
$
\label{lemma:bellman}
\end{lemma}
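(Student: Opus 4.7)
The plan is to prove both identities directly from the definitions of $V_h^{\pi,\pi'}$ and $Q_h^{\pi,\pi'}$ given in the main text, using the tower property of conditional expectation together with the fact that, in this setting, the two players' transitions are independent (the dynamics $f(\cdot \mid s,a)$ for the first player does not depend on $(s',a')$ and vice versa).

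For the $Q$-recursion, I would begin with the definition
\begin{equation*}
Q_h^{\pi,\pi'}(s,a,s',a') = \mathbb{E}_{\pi,\pi'}\!\left[\sum_{\tau=h}^{H} r(S_\tau,A_\tau,S_\tau',A_\tau') \;\Big|\; S_h=s,A_h=a,S_h'=s',A_h'=a'\right],
\end{equation*}
split off the $\tau=h$ term which under the conditioning is deterministically equal to $r(s,a,s',a')$, and then apply the tower property to the remaining sum by conditioning additionally on $(S_{h+1},S_{h+1}')$. Under the conditioning $(S_h,A_h,S_h',A_h')=(s,a,s',a')$, the law of $S_{h+1}$ is $f(\cdot\mid s,a)$ and independently the law of $S_{h+1}'$ is $f(\cdot\mid s',a')$. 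The inner conditional expectation is $\mathbb{E}_{\pi,\pi'}[\sum_{\tau=h+1}^{H} r(S_\tau,A_\tau,S_\tau',A_\tau') \mid S_{h+1}=\hat s, S_{h+1}'=\bar s]$, which by definition equals $V_{h+1}^{\pi,\pi'}(\hat s,\bar s)$ (here I use the Markov property and the fact that $\pi,\pi'$ depend only on the respective states to drop the conditioning on earlier variables). Combining gives the claimed recursion.

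For the $V$-identity, I would start from
\begin{equation*}
V_h^{\pi,\pi'}(s,s') = \mathbb{E}_{\pi,\pi'}\!\left[\sum_{\tau=h}^{H} r(S_\tau,A_\tau,S_\tau',A_\tau')\;\Big|\;S_h=s,S_h'=s'\right]
\end{equation*}
and condition on $(A_h,A_h')$ via the tower property; since $A_h\sim\pi_h(\cdot\mid s)$ independently of $A_h'\sim\pi_h'(\cdot\mid s')$ given $(S_h,S_h')$, the outer expectation is exactly the product expectation $\mathbb{E}_{A\sim\pi_h(\cdot\mid s),A'\sim\pi_h'(\cdot\mid s')}$, and the inner conditional expectation matches $Q_h^{\pi,\pi'}(s,A,s',A')$ by definition.

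I do not expect a real obstacle here; the lemma is essentially a bookkeeping statement. The only subtle point is the independence of the two players' rollouts given the current joint state, which is the very structural feature that the remark before the lemma highlights as distinguishing our setting from the standard ``shared joint state'' formulation of two-player Markov games. Handling this carefully (each player's next state depends only on its own action) is what makes the factorization in the tower-property step go through.
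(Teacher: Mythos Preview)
Your proposal is correct and follows essentially the same route as the paper: split off the stage-$h$ reward, apply the tower property conditioning on the next pair of states, and identify the inner conditional expectation as $V_{h+1}^{\pi,\pi'}$. The paper in fact only writes out the $Q$-recursion and leaves the $V$-identity implicit, so your additional sketch of the second equation via conditioning on $(A_h,A_h')$ is a small bonus.
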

\begin{proof}
By the definition of the state action value function for the policy pair 
$(\pi,\pi^\prime)$
we have that 
\begin{equation*}
\begin{split}
& Q^{\pi,\pi^\prime}_h(s,a,s^\prime,a^\prime) = 
r(s,a,s^\prime,a^\prime)
+
\mathbb{E}
\Big[ 
\sum_{h'=h+1}^{H}  r(S_{h'},A_{h'},S_{h'}^\prime,A_{h'}^\prime)
\Big]
\,.
\end{split}
\end{equation*}
Now, using tower property of the expectation we have that
\begin{align*}
&Q_h^{\pi,\pi^\prime}(s,a,s^\prime,a^\prime) \\
&= 
r(s,a,s^\prime,a^\prime)
+
\mathbb{E}_{S''\sim f(\cdot|s,a),\bar{S}\sim f(\cdot|s',a')}\Big[\mathbb{E}
\Big[ 
\sum_{h'=h+1}^{H} r(S_{h'},A_{h'},S_{h'}^\prime,A_{h'}^\prime) | S_{h+1} = S'', S'_{h+1} = \bar{S}
\Big]\Big]
\\
&=
r(s,a,s^\prime,a^\prime)
+
 \mathbb{E}_{S''\sim f(\cdot|s,a),\bar{S}\sim f(\cdot|s',a')}\Big[V^{\pi,\pi'}(S'',\bar{S})\Big],
\end{align*}
where the last equality follows from the definition of the state value function.
\end{proof}

\subsection{Proof of \cref{lemma:valuediff}}
\label{proof:valuedifflemma}
\begin{proof}
\if 0
We write down the difference of value function at step 0 as follows:
\begin{equation}
\begin{split}
 &   V^{\pi, \overline{\pi}}(s_1,s_1) - 
        V^{\pi^\prime, \overline{\pi}}(s_1,s_1)
    \\ &      = \mathbb{E}_{a_1 \sim \pi(\cdot|s_1),
    \overline{a}_0 \sim \overline{\pi}(\cdot|s_1)}
     Q^{\pi,\overline{\pi}}(s_1,a_1,s_1,\overline{a}_0)
     -
     \mathbb{E}_{a_1 \sim \pi^\prime(\cdot|s_1),
    \overline{a}_0 \sim \overline{\pi}(\cdot|s_1)}
     Q^{\pi^\prime,\overline{\pi}}(s_1,a_1^\prime,s_1,\overline{a}_0)
     \\ & =\mathbb{E}_{\overline{a}_0 \sim \overline{\pi}(\cdot|s_1)} \left[
     \langle  \pi(\cdot|s_1 ), Q^{\pi,\overline{\pi}}(s_1, \cdot,s_1,\overline{a}_0)
     \rangle
     -
        \langle  \pi^\prime(\cdot|s_1 ), Q^{\pi^\prime,\overline{\pi}}(s_1, \cdot,s_1,\overline{a}_0)
     \rangle
     \right]
     \\ &= 
     \mathbb{E}_{\overline{a}_0 \sim \overline{\pi}(\cdot|s_1)} \left[
     \langle  \pi(\cdot|s_1 ) -\pi^\prime(\cdot|s_1 ) , Q^{\pi,\overline{\pi}}(s_1, \cdot,s_1,\overline{a}_0)
     \rangle \right]
          \\ &  \hspace{5cm} +  \underset{\vardiamond}{\underbrace{\mathbb{E}_{\overline{a}_0 \sim \overline{\pi}(\cdot|s_1)}   \left[
        \langle  \pi^\prime(\cdot|s_1 ), 
        Q^{\pi,\overline{\pi}}(s_1, \cdot,s_1,\overline{a}_0)-Q^{\pi^\prime,\overline{\pi}}(s_1, \cdot,s_1,\overline{a}_0) 
     \rangle      \right] }}
\end{split}
\end{equation}
The term $\vardiamond$ can be written as follows:
\begin{equation}
    \begin{split}
  \vardiamond   &= 
    \mathbb{E}_{\overline{a}_0 \sim \overline{\pi}(\cdot|s_1),
    a_1^\prime  \sim \pi^\prime(\cdot|s_1)
    }   \left[
        Q^{\pi,\overline{\pi}}(s_1, a_1^\prime ,s_1,\overline{a}_0)-Q^{\pi^\prime,\overline{\pi}}(s_1, a_1^\prime,s_1,\overline{a}_0) 
     \rangle      \right]
    \\  &  = 
     \mathbb{E}_{\overline{a}_0 \sim \overline{\pi}(\cdot|s_1),
    a_1^\prime  \sim \pi^\prime(\cdot|s_1)
    }  \left[r(s_1, a_1^\prime ,s_1,\overline{a}_0)
    +\gamma \mathbb{E}_{s_1^\prime, \overline{s}_1} V^{\pi,\overline{\pi}}(s_1^\prime,\overline{s}_1)
    -
    r(s_1, a_1^\prime ,s_1,\overline{a}_0)
    -\gamma \mathbb{E}_{s_1^\prime, \overline{s}_1} 
    V^{\pi^\prime,\overline{\pi}}(s_1^\prime,\overline{s}_1)
    \right]
    \\  &  = 
     \gamma   \mathbb{E}_{\overline{a}_0 \sim \overline{\pi}(\cdot|s_1),
    a_1^\prime  \sim \pi^\prime(\cdot|s_1)
    }  
     \mathbb{E}_{s_1^\prime \sim f(\cdot|[s_1,a_1^\prime]), \overline{s}_1 \sim f(\cdot|[s_1,\overline{a}_0])} 
    \underset{\clubsuit}{\underbrace{  
        \left[
 V^{\pi,\overline{\pi}}(s_1^\prime,\overline{s}_1)
    -
   V^{\pi^\prime,\overline{\pi}}(s_1^\prime,\overline{s}_1)
       \right]
   }}
    \end{split}
\end{equation}
Next, we expand $\clubsuit$ as follows:
\begin{equation}
    \begin{split}
 \clubsuit  
 & = 
\mathbb{E}_{a_1^\prime \sim \pi(\cdot|s_1^\prime),
    \overline{a}_1 \sim \overline{\pi}(\cdot|\overline{s}_1)}
     Q^{\pi,\overline{\pi}}(s_1^\prime,a_1^\prime,s_1^\prime,\overline{a}_1)
     -
     \mathbb{E}_{a_1 \sim \pi^\prime(\cdot|s_1^\prime),
    \overline{a}_1 \sim \overline{\pi}(\cdot|\overline{s}_1)}
     Q^{\pi^\prime,\overline{\pi}}(s_1^\prime,a_1^\prime,s_1^\prime,\overline{a}_1)
\\ & =
    \mathbb{E}_{\overline{a}_1 \sim \overline{\pi}(\cdot|\overline{s}_1)} \left[
     \langle  \pi(\cdot|s_1^\prime ), Q^{\pi,\overline{\pi}}(s_1^\prime, \cdot,s_1^\prime,\overline{a}_1)
     \rangle
     -
        \langle  \pi^\prime(\cdot|s_1^\prime ), Q^{\pi^\prime,\overline{\pi}}(s_1^\prime, \cdot,s_1^\prime,\overline{a}_1)
     \rangle
     \right]
\\ &= 
     \mathbb{E}_{\overline{a}_1 \sim \overline{\pi}(\cdot|\overline{s}_1)} \left[
     \langle  \pi(\cdot|s_1^\prime ) -\pi^\prime(\cdot|s_1^\prime ) , Q^{\pi,\overline{\pi}}(s_1^\prime, \cdot,s_1^\prime,\overline{a}_1)
     \rangle \right]
\\ &  \hspace{5cm} + \underset{\spadesuit}{\underbrace{   \mathbb{E}_{\overline{a}_1 \sim \overline{\pi}(\cdot|\overline{s}_1)}   \left[
        \langle  \pi^\prime(\cdot|s_1^\prime ), 
        Q^{\pi,\overline{\pi}}(s_1^\prime, \cdot,s_1^\prime,\overline{a}_1)-Q^{\pi^\prime,\overline{\pi}}(s_1^\prime, \cdot,s_1^\prime,\overline{a}_0) 
     \rangle      \right]}}
    \end{split}
\end{equation}
Similarly, the term $\spadesuit$ can be expressed as follows:
\begin{equation}
\begin{split}
    \spadesuit &= 
    \mathbb{E}_{\overline{a}_1 \sim \overline{\pi}(\cdot|\overline{s}_1),
    a_1^\prime \sim \pi^\prime(\cdot|s_1^\prime)
    }   \left[
        Q^{\pi,\overline{\pi}}(s_1^\prime, a_1^\prime,s_1^\prime,\overline{a}_1)-Q^{\pi^\prime,\overline{\pi}}(s_1^\prime, a_1^\prime,s_1^\prime,\overline{a}_0) 
    \right]
    \\  &  = 
     \mathbb{E}_{\overline{a}_1 \sim \overline{\pi}(\cdot|\overline{s}_1),
    a_1^\prime  \sim \pi^\prime(\cdot|s_1^\prime)
    }  \left[r(s_1^\prime, a_1^\prime ,s_1^\prime,\overline{a}_1)
    +\gamma \mathbb{E}_{s_2^\prime, \overline{s}_2} V^{\pi,\overline{\pi}}(s_2^\prime,\overline{s}_2)
    -
    r(s_1^\prime, a_1^\prime ,s_1^\prime,\overline{a}_1)
    -\gamma \mathbb{E}_{s_2^\prime, \overline{s}_2} 
    V^{\pi^\prime,\overline{\pi}}(s_2^\prime,\overline{s}_2)
    \right]
    \\  &  = 
     \gamma   \mathbb{E}_{\overline{a}_1 \sim \overline{\pi}(\cdot|\overline{s}_1),
    a_1^\prime  \sim \pi^\prime(\cdot|s^\prime_1)
    }  
     \mathbb{E}_{s_2^\prime \sim f(\cdot|[s_1^\prime,a_1^\prime]), \overline{s}_2 \sim f(\cdot|[s_1^\prime,\overline{a}_1])} 
        \left[
 V^{\pi,\overline{\pi}}(s_2^\prime,\overline{s}_2)
    -
   V^{\pi^\prime,\overline{\pi}}(s_2^\prime,\overline{s}_2)
       \right]
\end{split}
\end{equation}
Thus, the proof can be completed by recursively performing such step over $H$.
\fi
 Let us consider the Bellman equation in vectorial form for the policy pair $(\pi', \bar{\pi})$, that is
\begin{equation*}
    r_h + F V_{h+1}^{\pi', \bar{\pi}} = Q_h^{\pi', \bar{\pi}},
\end{equation*}
where $F$ denoted the transition matrix induced by the transition function $f:\mathcal{S}^2\times\mathcal{A}\rightarrow \Delta_{\mathcal{S}\times \mathcal{S}}$.
Now, multiplying by the occupancy measure of the policy pair $(\pi, \bar{\pi})$ at stage $h$ we obtain
\begin{equation*}
\innerprod{d^{\pi, \bar{\pi}}_h}{r_h} + \innerprod{d^{\pi, \bar{\pi}}_h}{ F V_{h+1}^{\pi', \bar{\pi}}} = \innerprod{d_h^{\pi, \bar{\pi}}}{Q_h^{\pi', \bar{\pi}}}.
\end{equation*}
At this point, using the Bellman flow constraints \cite{Puterman:1994}, it holds that
\begin{equation*}
     F^T d_h^{\pi, \bar{\pi}} = E^T d_{h+1}^{\pi, \bar{\pi}},
\end{equation*}
where $E \in \mathbb{R}^{\abs{\mathcal{S}}^2\abs{\mathcal{A}} \times \abs{\mathcal{S}}^2}$ such that $(E^T V)(s,a) = V(s) $ for all $V \in \mathbb{R}^{\abs{\mathcal{S}}^2}$.
Plugging this equality in the Bellman equation above we obtain
\begin{equation*}
\innerprod{d^{\pi, \bar{\pi}}_h}{r_h} + \innerprod{d_{h+1}^{\pi, \bar{\pi}}}{ E V_{h+1}^{\pi', \bar{\pi}}} = \innerprod{d_h^{\pi, \bar{\pi}}}{Q_h^{\pi', \bar{\pi}}}.
\end{equation*}
Now, subtracting on both sides $\innerprod{d_{h}^{\pi, \bar{\pi}}}{ E V_{h}^{\pi', \bar{\pi}}}$ and rearranging,  it holds that
\begin{equation*} \innerprod{d^{\pi, \bar{\pi}}_h}{r_h} + \innerprod{d_{h+1}^{\pi, \bar{\pi}}}{ E V_{h+1}^{\pi', \bar{\pi}}} - \innerprod{d_{h}^{\pi, \bar{\pi}}}{ E V_{h}^{\pi', \bar{\pi}}} = \innerprod{d_h^{\pi, \bar{\pi}}}{Q_h^{\pi', \bar{\pi}} - E V_h^{\pi', \bar{\pi}}}.
\end{equation*}
After this, taking sum from $h=1$ to $H$ and recognizing that for all policy pairs $(\pi,\pi')$ it holds that $V^{\pi,\pi'}_{H+1}=0$, it holds that
\begin{equation*} \sum^H_{h=1}\innerprod{d^{\pi, \bar{\pi}}_h}{r_h} - \innerprod{d_{1}^{\pi, \bar{\pi}}}{ E V_{1}^{\pi', \bar{\pi}}} = \sum^H_{h=1}\innerprod{d_h^{\pi, \bar{\pi}}}{Q_h^{\pi', \bar{\pi}} - E V_h^{\pi', \bar{\pi}}}.
\end{equation*}
Then, notice that for all policies $\pi, \bar{\pi}$ it holds that 
$\sum^H_{h=1}\innerprod{d^{\pi, \bar{\pi}}_h}{r_h} = \innerprod{\initial}{V^{\pi, \bar{\pi}}}$. Plugging in these observations, we get
\begin{equation*}
\innerprod{\initial}{V^{\pi, \bar{\pi}} -V^{\pi', \bar{\pi}}} =
\sum^H_{h=1}\innerprod{d_h^{\pi, \bar{\pi}}}{Q_h^{\pi', \bar{\pi}} - E V_h^{\pi', \bar{\pi}}}.
\end{equation*}
Therefore, expanding the expectation, and noticing that $d_h^{\pi, \bar{\pi}}(s,a,s',a'|s_1) = d_h^\pi(s,a|s_1)d_h^{\bar{\pi}}(s',a'|s_1)$ for all $h,s,a,s',a'$ and conditioning $s_1$, we get that
\begin{align*}
   & \innerprod{\initial}{V^{\pi, \bar{\pi}} -V^{\pi', \bar{\pi}}} 
    \\ &= \mathbb{E}_{S_1\sim\initial}\sum^H_{h=1}\mathbb{E}_{S \sim d_h^\pi |S_1}\bs{\innerprod{\mathbb{E}_{s',A' \sim d_h^{\bar{\pi}}|S_1}Q_h^{\pi', \bar{\pi}}(S,\cdot,S',A')}{\pi_h(\cdot|S,S_1) - \pi_h'(\cdot|S,S_1)}}.
\end{align*}
\end{proof}

\subsection{Proof of \cref{thm:converge}}
\label{proof:converge}
\begin{proof}
We set $\bar{\pi}^T_h(a_h|s_h) = \frac{\sum^T_{t=1} d^{\pi^t}_h(s_h,a_h)}{\sum^T_{t=1} d^{\pi^t}_h(s_h)}$, where $d(s)$ is the marginal distribution of $d(s,a)$ on state $s$, and $\bar{\pi}^T = (\bar{\pi}^T_h)_{h=1}^H$. We shows that $d^{\bar{\pi}^T}_h = \frac{1}{T}\sum^T_{t=1} d^{\pi^t}_h$ by induction. $h=1$ holds by definition. Assuming on step $h$, the equation holds, we have
\begin{align*}
    d^{\bar{\pi}^T}_{h+1}(s_{h+1},a_{h+1}) &= d^{\bar{\pi}^T}_{h+1}(s_{h+1}) \bar{\pi}^T_{h+1}(a_{h+1}|s_{h+1})\\
     &= \sum_{s_h, a_h\sim \bar{\pi}^T_h(\cdot|s_h)} d^{\bar{\pi}^T}_h(s_h,a_h) f(s_{h+1}|s_h,a_h) \bar{\pi}^T_{h+1}(a_{h+1}|s_{h+1})\\
     &=\sum_{s_h, a_h\sim \bar{\pi}^T_h(\cdot|s_h)}  \frac{1}{T}\sum^T_{t=1} d^{\pi^t}_h(s_h,a_h) f(s_{h+1}|s_h,a_h) \bar{\pi}^T_{h+1}(a_{h+1}|s_{h+1})\\
     &=\frac{1}{T}\sum_{t=1}^T  d_{h+1}^{\pi^t}(s_{h+1})\bar{\pi}^T_{h+1}(a_{h+1}|s_{h+1}) \\
     &=\frac{1}{T} \sum^T_{t=1} d^{\pi^t}_{h+1}(s_{h+1},a_{h+1}),
\end{align*}
where the last equation holds by definition of $\bar{\pi}^T_{h+1}$. 
Therefore, $h+1$ holds, and the $\bar{\pi}^T$ satisfy all equations for $h\in [H]$.

Using the value difference Lemma \ref{lemma:valuediff} we have that for any $\pi^\star \in \Pi$
    \begin{align*}
&
    \innerprod{\initial}{V^{\pi^\star, \pi^t} -V^{\pi^t, \pi^t}} 
  \\& =
\mathbb{E}_{S_1\sim\initial}\sum^H_{h=1}\mathbb{E}_{S \sim d_h^{\pi^\star}|S_1}\bs{\innerprod{\mathbb{E}_{S',A' \sim d_h^{\pi^t}|S_1}Q_h^{\pi^t, \pi^t}(S,\cdot,S',A')}{\pi_h^\star(\cdot|S) - \pi_h^t(\cdot|S)}}.
\end{align*}
Therefore, summing over $t$ from $t=1$ to $T$ we obtain
\begin{equation*}
\begin{split}
 &   \sum^T_{t=1}\innerprod{\initial}{V^{\pi^\star, \pi^t} -V^{\pi^t, \pi^t}}
  \\& = \mathbb{E}_{S_1\sim\initial}\sum^H_{h=1}\mathbb{E}_{S \sim d_h^{\pi^\star}|S_1}\bs{\sum^T_{t=1}\innerprod{\mathbb{E}_{S',A' \sim d_h^{\pi^t}|S_1}Q_h^{\pi^t, \pi^t}(S,\cdot,S',A')}{\pi_h^\star(\cdot|S) - \pi_h^t(\cdot|S)}}.
\end{split}
\end{equation*}
Therefore, we need to control the local regrets at each state $s$ with loss $\ell^t_h(s,s_1) := -\mathbb{E}_{S',A' \sim d_h^{\pi^t}|s_1}Q_h^{\pi^t, \pi^t}(s,\cdot,S',A')$. To this end, we can invoke a standard convergence result for online mirror descent (Theorem 6.10 of \citet{orabona2023onlinelearning}) we obtain that at each state we have 
\begin{equation*}
    \sum^T_{t=1} \innerprod{\ell^t_h(s,s_1)}{\pi^\star(\cdot|s) - \pi^t(\cdot|s)} \leq \frac{D(\pi^\star(\cdot|s), \pi^1(\cdot|s))}{ \beta} +  \beta \sum^T_{t=1} \norm{\ell^t_h(s,s_1)}^2_{\infty}.
\end{equation*}
Now, noticing that we have $\norm{\ell^t_h(s,s_1)}_{\infty} \leq H$ it holds that
\begin{equation*}
    \sum^T_{t=1} \innerprod{\ell^t_h(s)}{\pi_h^\star(\cdot|s) - \pi_h^t(\cdot|s)} \leq \frac{D(\pi_h^\star(\cdot|s), \pi_h^1(\cdot|s))}{ \beta} +  \beta T H^2.
\end{equation*}
Finally, using the assumption that $\pi^1(a|s) \geq \underline{\pi} $ for all $s,a \in \mathcal{S}\times\mathcal{A}$ it holds that $D(\pi^\star(\cdot|s), \pi^1(\cdot|s)) \leq \log \underline{\pi}^{-1}$. Therefore, choosing $ \beta = \sqrt{\frac{\log{\underline{\pi}^{-1}}}{TH^2}}$ it holds that 
\begin{equation*}
    \sum^T_{t=1} \innerprod{\ell^t_h(s,s_1)}{\pi^\star(\cdot|s) - \pi^t(\cdot|s)} \leq 2 H \sqrt{ T \log \underline{\pi}^{-1}}.
\end{equation*}
Thus, we conclude that
\begin{equation*}
\sum^T_{t=1}\innerprod{\initial}{V^{\pi^\star, \pi^t} -V^{\pi^t, \pi^t}} \leq 2 H^2\sqrt{ T \log \underline{\pi}^{-1}}.
\end{equation*}
By the antisimmetry of the game, the same proof steps 
\begin{equation*}
\sum^T_{t=1}\innerprod{\initial}{V^{\pi^t, \pi^t} - V^{\pi^t, \bar{\pi}^\star} } \leq 2 H^2\sqrt{ T \log \underline{\pi}^{-1}}.
\end{equation*}
Therefore, it holds that for all $\pi^\star, \bar{\pi}^\star \in \Pi$
\begin{equation*}
\sum^T_{t=1}\innerprod{\initial}{V^{\pi^\star, \pi^t} - V^{\pi^t, {\pi}^\star} } \leq 4 H^2\sqrt{ T \log \underline{\pi}^{-1}}.
\end{equation*}
Then, define $\bar{\pi}^T$ the trajectory level mixture policy as in \citet{swamyminimaximalist}, i.e. such that $d_h^{\bar{\pi}^T} = \frac{1}{T}\sum^T_{t=1} d_h^{\pi^t}$ for all stages $h \in [H]$. This implies that $V^{\bar{\pi}^T,\pi^\star} = \frac{1}{T}\sum^T_{t=1} V^{\pi^t,\pi^\star}$, and $V^{\pi^\star,\bar{\pi}^T} = \frac{1}{T}\sum^T_{t=1}V^{\pi^\star,\pi_t}$. 

Therefore, we have that
\begin{align*}
    \innerprod{\initial}{V^{\pi^\star, \bar{\pi}^T} - V^{\bar{\pi}^T, \bar{\pi}^\star} } \leq 4 H^2 \sqrt{\frac{   \log \underline{\pi}^{-1}}{T}}.
\end{align*}
Finally, selecting $\pi^\star = \innerprod{\initial}{\argmax_{\pi\in\Pi} V^{\pi, \bar{\pi}^T}}$ and $\bar{\pi}^\star = \innerprod{\initial}{\argmin_{\pi\in\Pi} V^{\bar{\pi}^T,\pi}}$, we obtain that
\begin{equation*}
\max_{\pi\in\Pi}\innerprod{\initial}{ V^{\pi, \bar{\pi}^T}} - \min_{\pi\in\Pi}\innerprod{\initial}{ V^{\bar{\pi}^T, \pi}} \leq 4 H^2 \sqrt{\frac{  \log \underline{\pi}^{-1}}{T}}.
\end{equation*}

This implies that
\begin{equation*}
\innerprod{\initial}{ V^{\bar{\pi}^T, \bar{\pi}^T}} - \min_{\pi\in\Pi}\innerprod{\initial}{ V^{\bar{\pi}^T, \pi}} \leq 4 H^2\sqrt{\frac{  \log \underline{\pi}^{-1}}{T}},
\end{equation*}
and
\begin{equation*}
\max_{\pi\in\Pi}\innerprod{\initial}{ V^{\pi, \bar{\pi}^T}} - \innerprod{\initial}{ V^{\bar{\pi}^T, \bar{\pi}^T}} \leq 4 H^2 \sqrt{\frac{ \log \underline{\pi}^{-1}}{T}},
\end{equation*}
Therefore, setting $T = \frac{16 H^4 \log\underline{\pi}^{-1}}{ \epsilon^2}$ we obtain an $\epsilon$-approximate Nash equilibrium.
\end{proof}
\subsection{Proof of Theorem~~\ref{thm:converge_fast}}
\label{proof:converge_fast}
\begin{proof}
The optimization problem
$$
\argmax_{d\in\tilde{\mathcal{F}}} \min_{d'\in\tilde{\mathcal{F}}} \mathbb{E}_{s_1 \sim \initial}\sum^H_{h=1}\sum_{s,a,s',a'} d_h(s,a | s_1) r(s,a,s',a') d_h'(s',a'|s_1)
$$
can be carried out individually over possible initial states. That is for each $s_1 \in \mathrm{supp}(\initial)$ we aim at solving
$$
\argmax_{d\in\mathcal{F}_{s_1}} \min_{d'\in\mathcal{F}_{s_1}} \sum^H_{h=1}\sum_{s,a,s',a'} d_h(s,a | s_1) r(s,a,s',a') d_h'(s',a'|s_1)
$$
\if 0
For the proof convenience we consider two copies of the $\pi$ iterates denoted by $x$ and $y$. $x$ denotes the policy of the minimizing player and $y$ the policy of the maximizing player. As explained in \cite{swamyminimaximalist} the antisymmetry of the game implies that both updates can be implemented using only one update.
Invoking, \cite[Theorem 1]{perolat2015approximate} for $p = + \infty$ and $\rho = \initial$, we obtain
\begin{align*}
\innerprod{\initial}{\max_{y} V^{x^N,y} - \min_x \max_y V^{x,y}} \leq \frac{1}{(1 - \gamma)^2} \max_{n \in [N]} \epsilon_x^n + \frac{2 \gamma^N}{(1 - \gamma)^2}
\end{align*}
and for the other player
\begin{align*}
\innerprod{\initial}{\min_x \max_y V^{x,y} - \min_{x} V^{x,y^N} } \leq \frac{1}{(1 - \gamma)^2} \max_{n \in [N]} \epsilon_y^n + \frac{2 \gamma^N}{(1 - \gamma)^2}
\end{align*}
where, denoting $Q^n(s,a,a') := \mathbb{E}_{s' \sim d^{x^n}} Q^{x^n,y^n}(s,a,s',a') $ \footnote{which is also equal to $\mathbb{E}_{s' \sim d^{y^n}} Q^{x^n,y^n}(s,a,s',a')$ by the game symmetry}, we define the errors as $$\epsilon_x^n = \max_{s\in \mathcal{S}} \bs{\max_ y \sum_{a,a'} x^n(a|s)^T Q^n(s, a, a') y(a'|s) - \min_x \max_ y \sum_{a,a'} x(a|s)^T Q^n(s, a, a') y(a'|s)}, $$
and
$$\epsilon_y^n = \max_{s\in \mathcal{S}} \bs{\min_x \max_ y \sum_{a,a'} x(a|s)^T Q^n(s, a, a') y(a'|s) - \min_x \sum_{a,a'} x(a|s)^T Q^n(s, a, a') y^n(a'|s)}. $$
Now, we show how to control $\epsilon^n = \epsilon^n_x + \epsilon^n_y$ which can be done with the proof for a single stage analysis of the \oomdmethod{} algorithm.
\if 0
\begin{align*}
V_1^{x,x_t}(s_1,s_1) - V_1^{x^t,x^t}(s_1,s_1) &= \sum^H_{h=1}\mathbb{E}_{(s, a, s', a') \sim d^{x, x^t}_h}\left[ Q_h^{x^t,x^t}(s,a,s',a') - V_h^{x^t,x^t}(s,s')\right] \\
&= \sum^H_{h=1}\mathbb{E}_{(s, a) \sim d^{x}_h} \mathbb{E}_{(s', a') \sim d^{x^t}_h}\left[ Q_h^{x^t,x^t}(s,a,s',a') - V_h^{x^t,x^t}(s,s')\right] \\&\text{(Since the two policies generates the trajectories independently)}
\\&= \sum^H_{h=1}\mathbb{E}_{(s, a) \sim d^{x}_h} \left[ \mathbb{E}_{(s', a') \sim d^{x^t}_h}[Q_h^{x^t,x^t}(s,a,s',a')] - \mathbb{E}_{s' \sim d^{x^t}_h}[V_h^{x^t,x^t}(s,s')]\right] \\&=
\sum^H_{h=1}\mathbb{E}_{s \sim d^{x}_h} \left[ \langle \mathbb{E}_{(s', a') \sim d^{x^t}_h}[Q_h^{x^t,x^t}(s,\cdot,s',a')], x(\cdot|s) - x^t(\cdot|s) \rangle \right] \\&=
\sum^H_{h=1} \mathbb{E}_{s \sim d^{x}_h} \left[ \langle \theta^t(s,\cdot), x(\cdot|s) - x^t(\cdot|s) \rangle \right]
\end{align*}
\fi
We give for granted that we are in the outer loop indexed by $n$, so we abbreviate $x^t_n$ and $y^t_n$ by $x^t$, $y^t$. Noticed that in Algorithm~\ref{alg:forb} we wrote a single policy update, i.e. for $\pi^t_n$. In Theorem~\ref{thm:same_updates} we show that  $x^t_n$ and $y^t_n$ are in fact identical to $\pi^t_n$.
\fi
To this end for any $s_1$, we consider $\phi^t_h \in \mathcal{F}$ and $\psi^t_h \in \mathcal{F}$ which are generated by the following updates
\begin{align*}
            \phi_h^{t+1} = \argmax_{\phi \in \mathcal{F}_{s_1}} 
            \beta \innerprod{\phi}{
            2 \mathbb{E}_{s',a' \sim \psi^t}r_h(\cdot,\cdot,s',a') - \mathbb{E}_{s',a' \sim \psi^{t-1}}r_h(\cdot,\cdot,s',a')} - \mathbb{D}(\phi, \phi_h^t),
        \end{align*}
        and
        \begin{align*}
            \psi_h^{t+1} = \argmin_{\psi \in \mathcal{F}_{s_1}} 
            \beta \innerprod{\psi}{
            2 \mathbb{E}_{s',a' \sim \phi^t}r_h(s',a',\cdot,\cdot) - \mathbb{E}_{s',a' \sim \phi^{t-1}}r_h(s',a',\cdot,\cdot)} + \mathbb{D}(\psi, \psi_h^t),
        \end{align*}
        In order to prove convergence to an $\epsilon$-approximate Nash equilibrium, we need to control the quantity
\begin{align*}
\mathrm{Gap}_{s_1} = \frac{1}{T}\sum^H_{h=1}\sum^T_{t=1}\innerprod{\theta_h^t}{\phi_h^\star - \phi_h^t } + \frac{1}{T}\sum^H_{h=1}\sum^T_{t=1}\innerprod{\zeta_h^t}{\psi_h^\star -\psi_h^t},
\end{align*}
for $\theta_h^t(s,a) = \sum_{s',a'} \psi_h^t(s',a') r_h(s,a,s',a')$ and $\zeta_h^t(s',a') = - \sum_{s,a} \phi_h^t(s,a) r_h(s,a,s',a')$.
At this point, we bound the local regret term with the \oomdmethod{}  update. We have that for any $\phi_h \in \mathcal{F}$
\begin{align*}
 \beta \innerprod{2 \theta_h^{t} - \theta_h^{t-1}}{\phi_h - \phi_h^{t+1}} &=  \beta \innerprod{ \theta_h^{t} - \theta_h^{t+1}}{\phi_h - \phi_h^{t+1}} \\&\phantom{=} +
 \beta \innerprod{ \theta_h^{t}+ \theta^{t+1}_h - \theta^{t-1}_h}{\phi_h - \phi_h^{t+1}} \\
&=  \beta \innerprod{ \theta^{t}_h - \theta^{t+1}_h}{\phi_h - \phi_h^{t+1}} \\
&\phantom{=}+  \beta \innerprod{ \theta^{t}_h - \theta^{t-1}_h}{\phi_h - \phi^t_h} \\
&\phantom{=}+  \beta \innerprod{ \theta_h^{t} - \theta_h^{t-1}}{\phi^t_h - \phi^{t+1}_h} \\
&\phantom{=}+ \beta \innerprod{ \theta^{t+1}_h}{\phi_h - \phi_h^{t+1}}.
\end{align*}
At this point, we work on the third summand above
\begin{align*}
 - \beta \innerprod{ \theta^{t}_h - \theta^{t-1}_h}{\phi^t_h - \phi_h^{t+1}} \leq  \beta^2 \lambda \norm{\theta^t_h - \theta^{t-1}_h}_{\infty}^2 + \frac{1}{4 \lambda}\norm{\phi^t_h - \phi_h^{t+1}}_{1}^2.
\end{align*}
In addition, we have that$
\norm{\theta^t_h - \theta^{t-1}_h}_{\infty} \leq \norm{\psi_h^{t} - \psi_h^{t-1}}_1$
and we can apply the $1/\lambda$ strong convexity of $\mathbb{D}$, we obtain
\begin{align*}
 \beta \innerprod{ \theta_h^{t} - \theta_h^{t-1}}{\phi_h^t - \phi_h^{t+1}} \leq   \lambda \beta^2 \norm{\psi_h^{t} - \psi_h^{t-1}}^2_1 + \frac{1}{2}\mathbb{D}(\phi_h^{t+1},\phi_h^t).
\end{align*}
On the other hand, by the three point identity we have that for all $\phi \in \mathcal{F}$
\begin{equation*}
\mathbb{D}(\phi_h,\phi_h^{t+1}) = \mathbb{D}(\phi_h,\phi_h^{t}) - \mathbb{D}(\phi_h^{t+1},\phi_h^{t}) + \innerprod{\nabla \mathbb{D}(\phi_h^{t+1}, \phi_h^t)}{\phi_h^{t+1} - \phi_h}
\end{equation*}
Then, using the property of the update rule, we obtain that 
\begin{equation*}
 \innerprod{\nabla \mathbb{D}(\phi_h^{t+1}, \phi_h^t)}{\phi_h^{t+1} - \phi_h} \leq  \beta \innerprod{2 \theta^{t}_h - \theta_h^{t-1}}{\phi_h^{t+1} - \phi_h}.
\end{equation*}
Putting all the pieces together we have that
\begin{align*}
\mathbb{D}(\phi_h,\phi_h^{t+1}) &\leq \mathbb{D}(\phi_h,\phi_h^{t}) - \mathbb{D}(\phi_h^{t+1},\phi_h^{t}) +  \beta \innerprod{2 \theta^{t}_h - \theta^{t-1}_h}{\phi^{t+1}_h - \phi_h} \\
&\leq \mathbb{D}(\phi_h,\phi_h^{t}) - \mathbb{D}(\phi_h^{t+1},\phi_h^{t}) \\
&\phantom{=} -  \beta \innerprod{ \theta_h^{t} - \theta_h^{t+1}}{\phi_h - \phi_h^{t+1}} \\
&\phantom{=}-  \beta \innerprod{ \theta_h^{t}- \theta_h^{t-1}}{\phi_h - \phi_h^{t}} \\
&\phantom{=}+  \beta^2 \lambda\norm{\psi_h^{t} - \psi_h^{t-1}}^2_1 + \frac{1}{2}\mathbb{D}(\phi_h^{t+1},\phi_h^{t}) \\
&\phantom{=}- \beta \innerprod{ \theta_h^{t+1}}{\phi_h - \phi_h^{t+1}}.
\end{align*}
Now, rearranging the terms we get
\begin{align*}
 \beta \innerprod{ \theta^{t+1}_h}{\phi_h - \phi_h^{t+1}}
&\leq \mathbb{D}(\phi_h,\phi_h^{t}) - \mathbb{D}(\phi_h,\phi_h^{t+1}) - \frac{1}{2}\mathbb{D}(\phi_h^{t+1},\phi_h^{t}) \\
&\phantom{=} -  \beta \innerprod{ \theta_h^{t} - \theta_h^{t+1}}{\phi_h - \phi_h^{t+1}} \\
&\phantom{=}-  \beta \innerprod{ \theta_h^{t} - \theta_h^{t-1}}{\phi_h - \phi_h^{t}} \\
&\phantom{=}+  \beta^2 \lambda \norm{\psi^{t}_h - \psi_h^{t-1}}^2_1. 
\end{align*}
Now, denoting $\Phi_\phi^t := \mathbb{D}(\phi_h,\phi_h^{t}) -   \beta \innerprod{ \theta^{t}_h - \theta^{t-1}_h}{\phi_h- \phi_h^{t}}$ and summing over $t$ we obtain
\begin{align*}
 \beta \sum^T_{t=1}\innerprod{ \theta^{t}_h}{\phi_h - \phi_h^{t}}
&\leq \sum^T_{t=1} \Phi_\phi^{t-1} - \Phi_\phi^t - \frac{1}{2} \sum^T_{t=1} \mathbb{D}(\phi_h^t, \phi_h^{t-1}) +   \beta^2 \lambda \sum^T_{t=1} \norm{\psi_h^{t-1}- \psi_h^{t-2}}^2_1.
\end{align*}
Similarly we get
\begin{align*}
 \beta \sum^T_{t=1}\innerprod{ \zeta^{t}}{\psi_h - \psi_h^{t}}
&\leq \sum^T_{t=1} \Phi_\psi^{t-1} - \Phi_\psi^t - \frac{1}{2} \sum^T_{t=1} \mathbb{D}(\psi_h^t, \psi_h^{t-1}) + \beta^2 \lambda \sum^T_{t=1} \norm{\phi_h^{t-1} - \phi_h^{t-2}}^2_1.
\end{align*}
Now, using $1/\lambda$ strong convexity of $\mathbb{D}$ and summing the two terms we have that
\begin{align*}
    \beta T \mathrm{Gap}_{s_1,h} &\leq \Phi^0 - \Phi^{T-1} - \frac{1}{2}\sum^T_{t=1} (\mathbb{D}(\psi_h^t, \psi_h^{t-1}) + \mathbb{D}(\phi_h^t, \phi_h^{t-1})) \\& \qquad + 2  \beta^2 \lambda\sum^T_{t=1} (\mathbb{D}(\psi_h^{t-1}, \psi_h^{t-2}) + \mathbb{D}(\phi_h^{t-1}, \phi_h^{t-2})),
\end{align*}
with $\Phi^t = \Phi^t_\phi + \Phi^t_\psi$. At this point, setting $ \beta \leq \frac{1}{\sqrt{2\lambda} }$, we obtain a telescopic sum
\begin{align*}
   & \beta T \mathrm{Gap}_{s_1, h} \\ &\leq \Phi^0 - \Phi^{T-1} - \frac{1}{2}\sum^T_{t=1} (\mathbb{D}(\psi_h^t, \psi_h^{t-1}) + \mathbb{D}(\phi_h^t, \phi_h^{t-1}) - \mathbb{D}(\psi_h^{t-1}, \psi_h^{t-2}) - \mathbb{D}(\phi_h^{t-1}, \phi_h^{t-2})) \\
    &\leq \Phi^0 - \Phi^{T-1} + \frac{1}{2}\br{\mathbb{D}(\psi_h^1, \psi_h^0) + \mathbb{D}(\phi_h^1,\phi_h^0)}.
\end{align*}
Now recalling that by assumption the occupancy measure of the reference policy is lower bounded, i.e. $d^{\pi^1} \geq \underline{d}$, we can upper bound $\Phi^0 - \Phi^T \leq 2 \log \underline{d}^{-1} + 8  \beta$ that allows to conclude that for all $n \in [N]$ and setting $\psi_h^0 = \psi_h^1$ and $\phi^1_h = \phi^0_h$, 
\begin{align*}
    \mathrm{Gap}_{s_1,h} &\leq \frac{2 \log \underline{d}^{-1} + 8 \beta}{\beta T} \leq \frac{10\log \underline{d}^{-1} }{\beta T}.
\end{align*}
\if 0
At this point noticing that the above result implies that $$\max_{n \in [N]} \epsilon^n_x + \max_{n\in[N]} \epsilon^n_y \leq \frac{3 \log \abs{\mathcal{A}} + 4}{T}.$$
Therefore, plugging in into \cite[Theorem 1]{perolat2015approximate} we obtain
\begin{equation*}
    \innerprod{\initial}{\max_{y} V^{x^N,y} - \min_x V^{x,y^N}} \leq \frac{3 \log(\abs{\mathcal{A}})+4}{(1 - \gamma)^2 T}  + \frac{2 \gamma^N}{(1 - \gamma)^2}
\end{equation*}
Therefore setting $N = \mathcal{O}\br{\log (\frac{(1 - \gamma)^2}{2 \epsilon})}$ and $T = \frac{3 \log(\abs{\mathcal{A}})+4}{\epsilon (1 - \gamma)^2}$ ensures that
\fi
Now, notice that $\mathrm{Gap}$ can be rewritten as 
\begin{align*}
& \mathrm{Gap}_{s_1} = 
\sum^H_{h=1} \mathrm{Gap}_{s_1,h} \\ &  =  
\frac{1}{T}\sum^T_{t=1} \sum^H_{h=1} \sum_{s,a,s',a'}\psi^t_h(s',a')r_h(s,a,s',a')\phi^\star_h(s,a) 
\\ & \hspace{5em} - \frac{1}{T}\sum^T_{t=1} \sum^H_{h=1} \sum_{s,a,s',a'}\psi^\star_h(s',a')r_h(s,a,s',a')\phi^t_h(s,a) \\
& =  \sum^H_{h=1} \sum_{s,a,s',a'}\frac{1}{T}\sum^T_{t=1}\psi^t_h(s',a')r_h(s,a,s',a')\phi^\star_h(s,a) 
\\ & \hspace{5em} - 
\sum^H_{h=1} \sum_{s,a,s',a'}\psi^\star_h(s',a')r_h(s,a,s',a')\frac{1}{T}\sum^T_{t=1} \phi^t_h(s,a) \\
&= \sum^H_{h=1} \sum_{s,a,s',a'}\bar{\psi}_h(s',a')r_h(s,a,s',a')\phi^\star_h(s,a) - \sum^H_{h=1} \sum_{s,a,s',a'}\psi^\star_h(s',a')r_h(s,a,s',a') \bar{\phi}_h(s,a) \,. 
\end{align*}
At this point, let us define $\pi^\mathrm{out}_{\phi}(a|s) = \frac{\bar{\phi}(s,a)}{\sum_a \bar{\phi}(s,a)}$ and $\pi^\mathrm{out}_{\psi}(a|s) = \frac{\bar{\psi}(s,a)}{\sum_a \bar{\psi}(s,a)}$. For such policies and by appropriate choice for $\psi^\star$ and $\phi^\star$ it follows that
$$\mathrm{Gap}_{s_1} = \max_\phi V^{\phi,\pi^{\mathrm{out}}_{\psi}}(s_1) - \min_{\psi} V^{\pi^{\mathrm{out}}_{\phi},\psi}(s_1).$$
 By the bound on $\mathrm{Gap}_{s_1}$ for each $s_1 \in \mathrm{supp}(\initial)$, it follows that
\begin{equation*}
    \innerprod{\initial}{ \max_\phi V^{\phi,\pi^{\mathrm{out}}_{\psi}} - \min_{\psi} V^{\pi^{\mathrm{out}}_{\phi},\psi}}= \mathbb{E}_{s_1 \sim \initial} \mathrm{Gap}_{s_1} \leq \frac{10 H \log \underline{d}^{-1} }{\beta T},
\end{equation*}
therefore $T \geq \frac{10 H \log \underline{d}^{-1} }{\beta \epsilon}$. The proof is concluded invoking \cref{thm:same_updates} that ensures that the policies $\pi^{\mathrm{out}}_{\psi}$ and $\pi^{\mathrm{out}}_{\phi}$ coincide.
\if 0
In particular, at every state we need to control the quantity
\begin{align*}
\epsilon^n = \frac{1}{T}\sum^T_{t=1}\innerprod{\theta^t(s,\cdot)}{x^t(\cdot|s) - x^\star(\cdot|s)} + \frac{1}{T}\sum^T_{t=1}\innerprod{\zeta^t(s,\cdot)}{y^t(\cdot|s) - y^\star(\cdot|s)}
\end{align*}
for $\theta^t(s,a) = \sum_{a'} y^t(a'|s) Q^n(s,a,a')$ and $\zeta^t(s,a') = - \sum_{a} x^t(a|s) Q^n(s,a,a')$.
At this point, we bound the local regret term with the \oomdmethod{}  update.
\begin{align*}
 \beta \innerprod{2 \theta^{t}(s,\cdot) - \theta^{t-1}(s, \cdot)}{x(\cdot|s) - x^{t+1}(\cdot|s)} &=  \beta \innerprod{ \theta^{t}(s,\cdot) - \theta^{t+1}(s, \cdot)}{x(\cdot|s) - x^{t+1}(\cdot|s)} \\&\phantom{=} +
 \beta \innerprod{ \theta^{t}(s,\cdot) + \theta^{t+1}(s,\cdot) - \theta^{t-1}(s, \cdot)}{x(\cdot|s) - x^{t+1}(\cdot|s)} \\
&=  \beta \innerprod{ \theta^{t}(s,\cdot) - \theta^{t+1}(s, \cdot)}{x(\cdot|s) - x^{t+1}(\cdot|s)} \\
&\phantom{=}+  \beta \innerprod{ \theta^{t}(s,\cdot) - \theta^{t-1}(s, \cdot)}{x(\cdot|s) - x^{t}(\cdot|s)} \\
&\phantom{=}+  \beta \innerprod{ \theta^{t}(s,\cdot) - \theta^{t-1}(s, \cdot)}{x^t(\cdot|s) - x^{t+1}(\cdot|s)} \\
&\phantom{=}+ \beta \innerprod{ \theta^{t+1}(s, \cdot)}{x(\cdot|s) - x^{t+1}(\cdot|s)}
\end{align*}
At this point, we work on the third summand above
\begin{align*}
 \beta \innerprod{ \theta^{t}(s,\cdot) - \theta^{t-1}(s, \cdot)}{x^t(\cdot|s) - x^{t+1}(\cdot|s)} \leq  \beta^2 \norm{\theta^t(s, \cdot) - \theta^{t-1}(s,\cdot)}_{\infty}^2 + \frac{1}{4}\norm{x^t(\cdot|s) - x^{t+1}(\cdot|s)}_{1}^2
\end{align*}
Now, at this point we have that
\begin{align*}
\norm{\theta^t(s, \cdot) - \theta^{t-1}(s,\cdot)}_{\infty} &\leq \frac{1}{1 - \gamma} \norm{y^{t}(\cdot|s) - y^{t-1}(\cdot|s)}_1
\end{align*}
So with an additional application of the Pinkser's inequality we obtain
\begin{align*}
 - \beta \innerprod{ \theta^{t}(s,\cdot) - \theta^{t-1}(s, \cdot)}{x^t(\cdot|s) - x^{t+1}(\cdot|s)} \leq  \frac{ \beta^2}{(1 - \gamma)^2} \norm{y^{t}(\cdot|s) - y^{t-1}(\cdot|s)}^2_1 + \frac{1}{2}D(x^{t+1}(\cdot|s),x^t(\cdot|s)).
\end{align*}
On the other hand, by the three point identity we have that at each $s \in \mathcal{S}$ and for all $x \in \Pi$
\begin{equation*}
D(x(\cdot|s),x^{t+1}(\cdot|s)) = D(x(\cdot|s),x^{t}(\cdot|s)) - D(x^{t+1}(\cdot|s),x^{t}(\cdot|s)) + \innerprod{\nabla D(x^{t+1}(\cdot|s), x^t(\cdot|s))}{x^{t+1}(\cdot|s) - x(\cdot|s)}
\end{equation*}
Then, using the property of the update rule, we obtain that 
\begin{equation*}
\innerprod{\nabla D(x^{t+1}(\cdot|s), x^t(\cdot|s))}{x^{t+1}(\cdot|s) - x(\cdot|s)} \leq  \beta \innerprod{2 \theta^{t}(s,\cdot) - \theta^{t-1}(s,\cdot)}{x(\cdot|s) - x^{t+1}(\cdot|s)}.
\end{equation*}
Putting all the pieces together we have that
\begin{align*}
D(x(\cdot|s),x^{t+1}(\cdot|s)) &\leq D(x(\cdot|s),x^{t}(\cdot|s)) - D(x^{t+1}(\cdot|s),x^{t}(\cdot|s)) +  \beta \innerprod{2 \theta^{t}(s,\cdot) - \theta^{t-1}(s,\cdot)}{x(\cdot|s) - x^{t+1}(\cdot|s)} \\
&\leq D(x(\cdot|s),x^{t}(\cdot|s)) - D(x^{t+1}(\cdot|s),x^{t}(\cdot|s)) \\
&\phantom{=} +  \beta \innerprod{ \theta^{t}(s,\cdot) - \theta^{t+1}(s, \cdot)}{x(\cdot|s) - x^{t+1}(\cdot|s)} \\
&\phantom{=}+  \beta \innerprod{ \theta^{t}(s,\cdot) - \theta^{t-1}(s, \cdot)}{x(\cdot|s) - x^{t}(\cdot|s)} \\
&\phantom{=}+  \frac{ \beta^2}{(1 - \gamma)^2} \norm{y^{t}(\cdot|s) - y^{t-1}(\cdot|s)}^2_1 + \frac{1}{2}D(x^{t+1}(\cdot|s),x^{t}(\cdot|s)) \\
&\phantom{=}+ \beta \innerprod{ \theta^{t+1}(s, \cdot)}{x(\cdot|s) - x^{t+1}(\cdot|s)}
\end{align*}
Now, rearranging the terms we get
\begin{align*}
 \beta \innerprod{ \theta^{t+1}(s, \cdot)}{x(\cdot|s) - x^{t+1}(\cdot|s)}
&\leq D(x(\cdot|s),x^{t}(\cdot|s)) - D(x(\cdot|s),x^{t+1}(\cdot|s)) - \frac{1}{2}D(x^{t+1}(\cdot|s),x^{t}(\cdot|s)) \\
&\phantom{=} +  \beta \innerprod{ \theta^{t}(s,\cdot) - \theta^{t+1}(s, \cdot)}{x(\cdot|s) - x^{t+1}(\cdot|s)} \\
&\phantom{=}+  \beta \innerprod{ \theta^{t}(s,\cdot) - \theta^{t-1}(s, \cdot)}{x(\cdot|s) - x^{t}(\cdot|s)} \\
&\phantom{=}+  \frac{ \beta^2}{(1 - \gamma)^2} \norm{y^{t}(\cdot|s) - y^{t-1}(\cdot|s)}^2_1 
\end{align*}
Now, denoting $\Phi_x^t := D(x(\cdot|s),x^{t}(\cdot|s)) +   \beta \innerprod{ \theta^{t}(s,\cdot) - \theta^{t-1}(s, \cdot)}{x(\cdot|s) - x^{t}(\cdot|s)}$ and summing over $t$ we obtain
\begin{align*}
 \beta \sum^T_{t=1}\innerprod{ \theta^{t}(s, \cdot)}{x(\cdot|s) - x^{t}(\cdot|s)}
&\leq \sum^T_{t=1} \Phi_x^{t-1} - \Phi_x^t - \frac{1}{2} \sum^T_{t=1} D(x^t, x^{t-1}) +  \frac{ \beta^2}{(1 - \gamma)^2} \sum^T_{t=1} \norm{y^{t-1}(\cdot|s) - y^{t-2}(\cdot|s)}^2_1 
\end{align*}
Similarly we get
\begin{align*}
 \beta \sum^T_{t=1}\innerprod{ \zeta^{t}(s, \cdot)}{y(\cdot|s) - y^{t}(\cdot|s)}
&\leq \sum^T_{t=1} \Phi_y^{t-1} - \Phi_y^t - \frac{1}{2} \sum^T_{t=1} D(y^t, y^{t-1}) +  \frac{ \beta^2}{(1 - \gamma)^2} \sum^T_{t=1} \norm{x^{t-1}(\cdot|s) - x^{t-2}(\cdot|s)}^2_1 
\end{align*}
Now, using Pinkser's inequality and summing the two terms we have that
\begin{align*}
    T \epsilon^n &\leq \Phi^0 - \Phi^{T-1} - \frac{1}{2}\sum^T_{t=1} (D(y^t, y^{t-1}) + D(x^t, x^{t-1})) + \frac{2  \beta^2}{(1 - \gamma)^2} \sum^T_{t=1} (D(y^{t-1}, y^{t-2}) + D(x^{t-1}, x^{t-2}))
\end{align*}
with $\Phi^t = \Phi^t_x + \Phi^t_y$. At this point, setting $ \beta = \frac{1 - \gamma}{2}$, we obtain a telescopic sum
\begin{align*}
    T \epsilon^n &\leq \Phi^0 - \Phi^{T-1} - \frac{1}{2}\sum^T_{t=1} (D(y^t, y^{t-1}) + D(x^t, x^{t-1}) - D(y^{t-1}, y^{t-2}) - D(x^{t-1}, x^{t-2})) \\
    &\leq \Phi^0 - \Phi^{T-1} + \frac{1}{2}\br{D(y^1, y^0) + D(x^1,x^0)}
\end{align*}
Now, we can upper bound $\Phi^0 - \Phi^T \leq 2 \log \abs{\mathcal{A}} + \frac{8  \beta}{1 - \gamma}$ that allows to conclude that for all $n \in [N]$
\begin{align*}
    \epsilon^n &\leq \frac{3 \log \abs{\mathcal{A}} + 4}{T}.
\end{align*}
At this point noticing that the above result implies that $$\max_{n \in [N]} \epsilon^n_x + \max_{n\in[N]} \epsilon^n_y \leq \frac{3 \log \abs{\mathcal{A}} + 4}{T}.$$
Therefore, plugging in into \cite[Theorem 1]{perolat2015approximate} we obtain
\begin{equation*}
    \innerprod{\initial}{\max_{y} V^{x^N,y} - \min_x V^{x,y^N}} \leq \frac{3 \log(\abs{\mathcal{A}})+4}{(1 - \gamma)^2 T}  + \frac{2 \gamma^N}{(1 - \gamma)^2}
\end{equation*}
Therefore setting $N = \mathcal{O}\br{\log (\frac{(1 - \gamma)^2}{2 \epsilon})}$ and $T = \frac{3 \log(\abs{\mathcal{A}})+4}{\epsilon (1 - \gamma)^2}$ ensures that
\begin{equation*}
    \innerprod{\initial}{\max_{y} V^{x^N,y} - \min_x V^{x,y^N}} \leq 2 \epsilon.
\end{equation*}
which implies that $(x^N,y^N)$ is a $2 \epsilon$-approximate Nash equilibrium.
\fi
\end{proof}

\subsection{Proof of Theorem~~\ref{thm:same_updates}}
\label{proof:same_updates}
\begin{proof}
Let us consider two players performing the following updates
\begin{align*}
            \phi_h^{t+1} = \argmax_{\phi \in \mathcal{F}_{s_1}} 
            \beta \innerprod{\phi}{
            2 \mathbb{E}_{s',a' \sim \psi^t}r_h(\cdot,\cdot,s',a') - \mathbb{E}_{s',a' \sim \psi^{t-1}}r_h(\cdot,\cdot,s',a')} - \mathbb{D}(\phi, \phi_h^t),
        \end{align*}
        and
        \begin{align*}
            \psi_h^{t+1} = \argmin_{\psi \in \mathcal{F}_{s_1}} 
            \beta \innerprod{\psi}{
            2 \mathbb{E}_{s',a' \sim \phi^t}r_h(s',a',\cdot,\cdot) - \mathbb{E}_{s',a' \sim \phi^{t-1}}r_h(s',a',\cdot,\cdot)} + \mathbb{D}(\psi, \psi_h^t).
        \end{align*}
        The goal is to proof that the iterates generated by the two updates are identical. We will prove this fact by induction. The base case holds by initialization which gives $\phi_h^0=\psi^0_h$ for all $h \in [H]$.
        Then, let us assume by the induction step that $\psi^t_h = \phi^t_h$ for all $h \in [H]$, then 
        \begin{align*}
           & \phi_h^{t+1} \\&= \argmax_{\phi \in \mathcal{F}_{s_1}} 
            \beta \innerprod{\phi}{
            2 \mathbb{E}_{s',a' \sim \psi^t}r_h(\cdot,\cdot,s',a') - \mathbb{E}_{s',a' \sim \psi^{t-1}}r_h(\cdot,\cdot, s',a')} - \mathbb{D}(\phi, \phi_h^t) \\
            &= \argmax_{\phi \in \mathcal{F}_{s_1}} 
            \beta \innerprod{\phi}{
            - 2 \mathbb{E}_{s',a' \sim \psi^t}r_h(s',a',\cdot,\cdot) + \mathbb{E}_{s',a' \sim \psi^{t-1}}r_h(s',a',\cdot,\cdot)} - \mathbb{D}(\phi, \phi_h^t) + \beta \innerprod{\phi}{\mathbf{1}}\\&\text{(Antisymmetric Reward)}\\
            &= \argmax_{\phi \in \mathcal{F}_{s_1}} 
            \beta \innerprod{\phi}{
            - 2 \mathbb{E}_{s',a' \sim \psi^t}r_h(s',a',\cdot,\cdot) +\mathbb{E}_{s',a' \sim \psi^{t-1}}r_h(s',a',\cdot,\cdot)} - \mathbb{D}(\phi, \phi_h^t) + \beta\\&\text{(Normalization of $\phi$)}\\
            &= \argmax_{\phi \in \mathcal{F}_{s_1}} 
            \beta \innerprod{\phi}{
            - 2 \mathbb{E}_{s',a' \sim \psi^t}r_h(s',a',\cdot,\cdot) +\mathbb{E}_{s',a' \sim \psi^{t-1}}r_h(s',a',\cdot,\cdot)} - \mathbb{D}(\phi, \phi_h^t) \\&\text{($\beta$ does not depend on $\phi$)}\\
            &= \argmax_{\phi \in \mathcal{F}_{s_1}} 
            \beta \innerprod{\phi}{
            - 2 \mathbb{E}_{s',a' \sim \phi^t}r_h(s',a',\cdot,\cdot) +\mathbb{E}_{s',a' \sim \phi^{t-1}}r_h(s',a',\cdot,\cdot)} - \mathbb{D}(\phi, \psi_h^t) \\&\text{(Inductive Hypothesis)}
            \\&= \argmin_{\psi \in \mathcal{F}_{s_1}} 
            \beta \innerprod{\psi}{
            2 \mathbb{E}_{s',a' \sim \phi^t}r_h(s',a',\cdot,\cdot) - \mathbb{E}_{s',a' \sim \phi^{t-1}}r_h(s',a',\cdot,\cdot)} + \mathbb{D}(\psi, \psi_h^t) 
            \\&\text{(Renaming the optimization variable and $\argmax_x f(x) = \argmin_x - f(x)$)}
            \\
            & = \psi^{t+1}_h.
        \end{align*}
\if 0
\yongtao{x is defined as the input prompt in the main body, and y is typically the output in other papers, can we use other notations? i.e.,$(\phi,\psi)$ or $(\varphi,\psi)$}
For an antisymmetric $Q$ function both players always play the same policy. We prove the statement by induction.
For the base step, we have that the FoRB update coincides with OMD considered by \cite{swamyminimaximalist} for which they proved that the two updates coincide.
Now we, assume that $x^t = y^t = \pi^t$ and we target to prove that the same equality holds at time step $t+1$.
We have that
\begin{align*}
    x^{t+1}(a'|s') &= x^t(a'|s') \exp( \beta(2 \mathbb{E}_{s\sim d^{x^k}, a\sim x^t(\cdot|s)}Q(s,a,s',a') - \mathbb{E}_{s\sim d^{x^k}, a\sim x^t(\cdot|s)}Q(s,a,s',a') )) \\
    &= x^t(a'|s') \exp( \beta(-2 \mathbb{E}_{s\sim d^{x^k}, a\sim x^t(\cdot|s)}Q(s',a',s,a) + \mathbb{E}_{s\sim d^{x^k}, a\sim x^t(\cdot|s)}Q(s',a',s,a) )) \\
    &= y^t(a'|s') \exp( \beta(-2 \mathbb{E}_{s\sim d^{y^k}, a\sim y^t(\cdot|s)}Q(s',a',s,a) + \mathbb{E}_{s\sim d^{y^k}, a\sim y^t(\cdot|s)}Q(s',a',s,a) )) \\
    &= y^{t+1}(a'|s')
\end{align*}
\fi

\end{proof}
\subsection{Proof of \Cref{thm:lastiterate}}
\label{sec:prooflastiterate}
\begin{proof}
As we assumed that $d^\star \geq d_{\min} > 0$, let us modify the updates projecting onto $\mathcal{F} \cap \bc{d\in \mathcal{F}: d\geq d_{\min} }$. This makes the negative entropy differentiable over the whole domain.
The first step is to establish summability of the iterates difference in the squared norm. To this end, let us recall that we proved along the proof of \Cref{thm:converge_fast} that
\begin{align*}
 \beta \sum^T_{t=1}\innerprod{ \theta^{t}_h}{\phi_h - \phi_h^{t}}
&\leq \sum^T_{t=1} \Phi_\phi^{t-1} - \Phi_\phi^t - \frac{1}{2} \sum^T_{t=1} \mathbb{D}(\phi_h^t, \phi_h^{t-1}) +   \beta^2 \lambda \sum^T_{t=1} \norm{\psi_h^{t-1}- \psi_h^{t-2}}^2_1.
\end{align*}
and
\begin{align*}
 \beta \sum^T_{t=1}\innerprod{ \zeta^{t}}{\psi_h - \psi_h^{t}}
&\leq \sum^T_{t=1} \Phi_\psi^{t-1} - \Phi_\psi^t - \frac{1}{2} \sum^T_{t=1} \mathbb{D}(\psi_h^t, \psi_h^{t-1}) + \beta^2 \lambda \sum^T_{t=1} \norm{\phi_h^{t-1} - \phi_h^{t-2}}^2_1 .
\end{align*}
where $\Phi_\phi^t := \mathbb{D}(\phi_h,\phi_h^{t}) -   \beta \innerprod{ \theta^{t}_h - \theta^{t-1}_h}{\phi_h- \phi_h^{t}}$ and $\Phi_\psi^t := \mathbb{D}(\psi_h,\psi_h^{t}) -   \beta \innerprod{ \zeta^{t}_h - \zeta^{t-1}_h}{\psi_h- \psi_h^{t}}$ and $\Phi^t = \Phi_\phi^t + \Phi_\psi^t$. Summing the two above inequalities and the $1/\lambda$ strong convexity of the Bregman divergence we obtain\footnote{ We also used that $\mathbb{D}(\phi^T,\phi^{T-1}) + \mathbb{D}(\psi^T,\psi^{T-1}) \geq 0$ and that $\phi^0_h=\phi^{-1}_0$ by initialization.}
\begin{align}
 \beta \sum^T_{t=1}&\innerprod{ \theta^{t}_h}{\phi_h - \phi_h^{t}} +  \beta \sum^T_{t=1}\innerprod{ \zeta^{t}}{\psi_h - \psi_h^{t}}
\leq \Phi^{1} - \Phi^T \\
&\phantom{=}- \br{\frac{1}{4\lambda} - \beta^2 \lambda } \sum^T_{t=1}\br{  \norm{\phi_h^{t-1} - \phi_h^{t-2}}^2_1 + \norm{\psi_h^{t-1}- \psi_h^{t-2}}^2_1}.
\label{eq:last_third}
\end{align}
As in the proof of \Cref{thm:converge_fast}, we can set $\phi_h = \phi^\star_h$ and $\psi_h = \psi^\star_h$ to ensure that the LHS is positive and we upper bound $\Phi^0 - \Phi^T \leq 2 \log \underline{d}^{-1} + 8  \beta$. We obtain
\[
\br{\frac{1}{4 \lambda} - \beta^2 \lambda } \sum^T_{t=1} \br{  \norm{\phi_h^{t-1} - \phi_h^{t-2}}^2_1 + \norm{\psi_h^{t-1}- \psi_h^{t-2}}^2_1} \leq 2 \log \underline{d}^{-1} + 8  \beta
\]
Therefore for $\beta \leq 1/\sqrt{8 \lambda^2}$, we have that 
\[
 \sum^T_{t=1} \br{  \norm{\phi_h^{t-1} - \phi_h^{t-2}}^2_1 + \norm{\psi_h^{t-1}- \psi_h^{t-2}}^2_1} \leq 16 \lambda \log \underline{d}^{-1} + 64 \lambda  \beta
\]
Therefore the sequence of the iterates difference squared is summable.
Moreover, since the iterates belongs to a closed compact set there exists a subsequence $\bc{\phi_h^{t_n}, \psi_h^{t_n}}^\infty_{n=1}$ which converges to $\bc{\phi_h^{\infty}, \psi_h^{\infty}}$ for all $h \in [H]$.
Moreover the fact that the iterates difference squared is summable implies that
\[
\lim_{t \rightarrow \infty } \norm{\phi_h^{t-1} - \phi_h^{t-2}}^2_1 + \norm{\psi_h^{t-1}- \psi_h^{t-2}}^2_1 = 0
\]
Therefore, for the convergent subsequence it holds that
\[
\lim_{t \rightarrow \infty } \norm{\phi_h^{t_n-1} - \phi_h^{t_n}}^2_1 + \norm{\psi_h^{t_n}- \psi_h^{t_n -1}}^2_1 = 0
\]
Therefore the subsequences $\bc{\phi_h^{t_n}, \psi_h^{t_n}}^\infty_{n=1}$ and $\bc{\phi_h^{t_n-1}, \psi_h^{t_n-1}}^\infty_{n=1}$ both converge to $\bc{\phi_h^{\infty}, \psi_h^{\infty}}$.
\if 0
\textcolor{red}{Option 1. Not sure if it is correct.}
Moreover, since we proved constant regret in \Cref{thm:converge_fast}, it holds that
\[
\lim_{T\rightarrow\infty}\sum^T_{t=1} \innerprod{\theta^t_h}{\phi_h - \phi^t_h} + \innerprod{\zeta^t_h}{\psi_h - \psi^t_h} < \infty 
\]
Moreover, it holds that 
\[
\innerprod{\theta^t_h}{\phi^\star_h - \phi^t_h} + \innerprod{\zeta^t_h}{\psi^\star_h - \psi^t_h} = \innerprod{\theta^t_h}{\phi^\star_h} + \innerprod{\zeta^t_h}{\psi^\star_h} \geq 2 \sum_{s,a,s',a'} \phi^\star_h(s,a) r_h(s,a,s',a') \psi^\star_h(s,a) = 0
\]
Therefore, this implies that
\[
\lim_{t\rightarrow \infty} \innerprod{\theta^t_h}{\phi^\star_h - \phi^t_h} + \innerprod{\zeta^t_h}{\psi^\star_h - \psi^t_h} = 0
\]
and by continuity of $\theta^t_h$ and $\zeta^t_h$ and convergence of the subsequence we can conclude 
\[\innerprod{\theta^\infty_h}{\phi^\star_h - \phi^\infty_h} + \innerprod{\zeta^\infty_h}{\psi^\star_h - \psi^\infty_h} = 0.
\]
That is, $\phi^\infty, \psi^\infty$ is a Nash equilibrium.
\textcolor{red}{End of option 1. Not sure if it is correct}
\fi
At this point, notice that our update rule implies that
\[
\innerprod{2 \theta^{t_n}_h - \theta^{t_n-1}_h + \nabla \omega (\phi^{t_n+1}_h) -  \nabla \omega (\phi^{t_n}_h) }{\phi_h - \phi^{t_n+1}_h } \leq 0 \quad \forall h \in [H], \forall  \phi_h
\]
and 
\[
\innerprod{2 \zeta^{t_n}_h - \zeta^{t_n-1}_h + \nabla \omega (\psi^{t_n+1}_h) -  \nabla \omega (\psi^{t_n}_h) }{\psi_h - \psi^{t_n+1}_h } \leq 0 \quad \forall h \in [H], \forall  \psi_h
\]
where $\omega$ denotes the potential function inducing the Bregman divergence $\mathbb{D}$. That is,
$\mathbb{D}(x,y) = \omega(x) - \omega(y) - \innerprod{\nabla \omega (y)}{x-y}$.
At this point, the fact that $\omega$ is continuous differentiable over the whole domain $\mathcal{F} \cap \bc{d\in\mathcal{F}: d\geq d_{\min}}$ it holds that 
\[
\innerprod{\theta^{\infty}_h }{\phi_h - \phi^{\infty}_h } \leq 0 \quad \forall h \in [H], \forall  \phi_h
\]
and 
\[
\innerprod{\zeta^{\infty}_h }{\psi_h - \psi^{\infty}_h } \leq 0 \quad \forall h \in [H], \forall  \psi_h
\]
Therefore, $\phi^\infty,\psi^\infty$ ( the limit of the subsequence ) is a Nash equilibrium point.

At this point, to establish convergence of the sequence let us notice that rearranging Equation \ref{eq:last_third} ( and not considering the sum over $t$), it holds that
\begin{align*}
 \beta &\innerprod{ \theta^{t}_h}{\phi_h - \phi_h^{t}} +  \innerprod{ \zeta^{t}}{\psi_h - \psi_h^{t}}
\leq \Phi^{t-1} - \Phi^t \\&\phantom{=}- \br{\frac{1}{4\lambda} - \beta^2 \lambda } \br{  \norm{\phi_h^{t-1} - \phi_h^{t-2}}^2_1 + \norm{\psi_h^{t-1}- \psi_h^{t-2}}^2_1}. \\
= &\phantom{=} E^{t-1} - E^{t} + \beta L^{t-1} - \beta L^t - \br{\frac{1}{4\lambda} - \beta^2 \lambda } \br{  \norm{\phi_h^{t-1} - \phi_h^{t-2}}^2_1 + \norm{\psi_h^{t-1}- \psi_h^{t-2}}^2_1}.
\end{align*}
where the last line introduced the notation $E^t = \mathbb{D}(\phi_h,\phi_h^{t}) + \mathbb{D}(\psi_h,\psi_h^{t})$ and $L^t = - \innerprod{\theta^t_h - \theta^{t-1}_h}{\phi_h - \phi^t_h} - \innerprod{\zeta^t_h - \zeta^{t-1}_h}{\psi_h - \psi^t_h} $ and we used the fact that $\Phi^t = E^t + L^t.$
At this point, choosing $\phi_h = \phi^\star_h$ and $\psi_h = \psi^\star_h$ we have that the LHS is zero and $L^t$ is summable. Indeed,
\begin{align*}
\sum^T_{t=1}L^t &= \sum^T_{t=1} \br{- \innerprod{\theta^t_h - \theta^{t-1}_h}{\phi_h - \phi^t_h} - \innerprod{\zeta^t_h - \zeta^{t-1}_h}{\psi_h - \psi^t_h}} \\
&= \sum^T_{t=1} \br{- \innerprod{\theta^t_h - \theta^{t-1}_h}{\phi_h} - \innerprod{\zeta^t_h - \zeta^{t-1}_h}{\psi_h}} \\
&= \innerprod{\theta^0_h - \theta^T_h}{\phi_h} + \innerprod{\zeta^0_h - \zeta^T_h}{ \psi_h} \\
& \leq 2.
\end{align*}
Therefore, we can rearrange and obtain
\[
E^t \leq E^{t-1} + \beta L^{t-1} - \beta L^t - \br{\frac{1}{4\lambda} - \beta^2 \lambda } \br{  \norm{\phi_h^{t-1} - \phi_h^{t-2}}^2_1 + \norm{\psi_h^{t-1}- \psi_h^{t-2}}^2_1}
\]
Therefore, $E^t$ is a quasi-Féjer sequence and hence it has a limit $E^\infty$.
At this point, we can notice that $0 = \lim_{n\rightarrow \infty} \norm{\phi^{t_n}_h - \phi^\star_h} + \norm{\psi^{t_n}_h - \psi^\star_h} $ by the convergence of the subsequence, implies that $\lim_{n\rightarrow\infty} \mathbb{D}(\phi^{t_n}_h, \phi^\star_h) + \mathbb{D}(\psi^{t_n}_h, \psi^\star_h)=0$ by the reciprocity condition which holds since our constraints define a subset of the simplex.
However, by convergence of the energy levels, $\lim_{t\rightarrow\infty} \mathbb{D}(\phi^{t}_h, \phi^\star_h) + \mathbb{D}(\psi^{t}_h, \psi^\star_h)$ exists and must be equal to the limit of the subsequence. Therefore,
$\lim_{t\rightarrow\infty} \mathbb{D}(\phi^{t}_h, \phi^\star_h) + \mathbb{D}(\psi^{t}_h, \psi^\star_h) = 0$.
Finally by strong convexity of the Bregman divergence we can conclude
$\lim_{t\rightarrow\infty} \norm{\phi^{t}_h - \phi^\star_h}_1^2 + \norm{\psi^{t}_h - \psi^\star_h}_1^2 = 0$.
\end{proof}
\if 0
\subsection{Proof of last iterate convergence under uniqueness of the Nash equilibrium}
In the above subsection, we showed last iterate convergence for \oomdmethod{} if the Nash equilibrium profile is fully supported over the state action space.
This assumption is however difficult to verify in practice therefore, we now show last iterate convergence under a different set of assumption stated hereafter.
\begin{assumption}
The game \ref{occgame} admits a unique Nash equilibrium.
\end{assumption}
In the following we consider rewriting \ref{occgame} in a matrix vector form.
In particular, let $R((s,a),(s',a')) = r(s,a,s',a')$ and let us recall the operators $(E^T d_h)(s) = \sum_{a \in \mathcal{A}} d_h(s,a)$ and $(F_h^T d_h)(s') = \sum_{s, a \in \mathcal{A}} F_h(s'|s,a)d_{h}(s,a)$ and notice that we can rewrite the program \eqref{occgame}\footnote{for simplicity of notation we assume here that there exists a unique initial state} as follows
\[
d^\star,d^\star = \min_d\max_{d'} \sum^H_{h=1} d_h^T R d'_h \quad \text{s.t.} \quad E^T d_{h+1} = F^T_h d_h, E^T d'_{h+1} = F^T_h d'_h ~~~\forall h \in [H]
\]
Under this assumption we can prove the following structural assumption.
\begin{lemma}
    Let us define by $R$ the reward matrix $R((s,a),(s',a')) = r(s,a,s',a')$. Under the uniqueness assumption it holds that for all $h\in[H]$
    \[
    E V_h^\star(s,a) = R d_h^\star (s,a) + F_h V^\star_{h+1}(s,a) ~~~\forall s,a \in\mathrm{supp}(d_h^\star) 
    \]
    \[
    E V_h^\star(s,a) < R d_h^\star (s,a) + F_h V^\star_{h+1}(s,a) ~~~\forall s,a \notin\mathrm{supp}(d_h^\star) 
    \]
    \[
    E V_h^\star(s,a) = R^T d_h^\star (s,a) + F_h V^\star_{h+1}(s,a) ~~~\forall s,a \in\mathrm{supp}(d_h^\star) 
    \]
    \[
    E V_h^\star(s,a) > R^T d_h^\star (s,a) + F_h V^\star_{h+1}(s,a) ~~~\forall s,a \notin\mathrm{supp}(d_h^\star) 
    \]
\end{lemma}
Therefore, under the assumptions of the above lemma the following quantity is well defined
\[
\xi_h = \min\bc{\min_{s,a\notin\mathrm{supp}(d_h^\star)} R d_h^\star (s,a) + F_h V^\star_{h+1}(s,a) - E V^\star_h(s,a), \min_{s,a\notin\mathrm{supp}(d_h^\star)}  E V^\star_h(s,a) - R^T d_h^\star (s,a) - F_h V^\star_{h+1}(s,a)  }
\]
Moreover, we define the sets
\[
\mathcal{V}^\star(\mathcal{F}) = \bc{d \in \mathcal{F}: \mathrm{supp}(d_h) \subset \mathrm{supp}(d_h^\star),~~~\forall h \in [H]}
\]
and the quantities 
$$c_h = \min_{d \in \mathcal{F}\setminus \bc{d^\star}} \max_{d' \in \mathcal{V}^\star(\mathcal{F}) } \frac{(d_h - d_h^\star)^T R d'}{\norm{d_h - d_h^\star}_1} = \min_{d' \in \mathcal{F}\setminus \bc{d^\star}} \max_{d \in \mathcal{V}^\star(\mathcal{F}) } \frac{d_h^T R (d_h^\star - d'_h)}{\norm{d'_h - d_h^\star}_1} $$
Following the steps in \cite{lee2021last} we can verify that $1 \geq c_h > 0$ for all $h \in [H]$ where the lower bound critically relies on the uniqueness assumption.
Finally, let us define
\[
\epsilon_{\mathrm{last}} = \min_{s,a\in \mathrm{supp}(d^\star)} \exp \br{- \frac{\abs{\mathcal{S}}(1 - \log(d_{\mathrm{min}}))}{d^\star(s,a)}}
\]

At this point, following \cite[Lemma 15]{lee2021last} we have that for all $d,d'$
\[
\max_{\underline{d},\underline{d'} \in \mathcal{V}^\star(\mathcal{F})} \sum^H_{h=1}(d - \underline{d})^T R d' + d^T R(\underline{d'}- d') \geq \sum^H_{h=1} c_h (\norm{d^\star - d}_1 + \norm{d^\star - d'}_1)
\]
\fi
\section{Implementation of Algorithm~\ref{alg:forb} with updates over policies}
\label{app:implement_forb}
In this section, we explain how the update in Algorithm~\ref{alg:forb} for different choices of $\mathbb{D}$. In both cases, we will derive an update that can be summarized by following template.
Let us define $r_{h}^t(s,a) = \mathbb{E}_{s',a' \sim d^t_h} r(s,a,s',a')$ and $r_h^{t-1}(s,a) = \mathbb{E}_{s',a' \sim d^{t-1}_h} r(s,a,s',a')$
\begin{itemize}
    \item Compute the $Q^t_{h}$ function corresponding to the reward function $2r_h^t - r_h^{t-1}$ minimizing a loss function that depends on the choice of $\mathbb{D}$.
    \item Update the policy as 
    \begin{align*}
        \pi^{t+1}_h(a|s) \propto \pi^t_h(a|s) \exp\br{\beta Q^t_{h}(s,a)}.
    \end{align*}
\end{itemize}
Finally, in \cref{sebsec:approx_logistic_bellman_error} we show that for $\mathbb{D}$ being the conditional relative entropy and for $\beta$ small enough the value function $Q^t_{h}$ is well approximated by the standard Bellman equations.
\begin{remark}
Both choices of the Bregman divergence are $1$ strongly convex so \cref{thm:converge_fast} applies with $\lambda = 1$.
\end{remark}
In the following we consider a generic reward function $\tilde{r}$. In our setting, we will apply the following results for $\tilde{r}_h = 2r_{h}^t - r_{ h}^{t-1} $ in order to implement the updates of \cref{alg:forb} for the different values of $h$ and $t$.
\subsection{$\mathbb{D}$ chosen as the sum of conditional and relative entropy}
In this section, we explain how to implement the occupancy measure update in Algorithm~\ref{alg:forb} over policies. We use the machinery for single agent MDPs introduced in \cite{bas2021logistic}.
In particular, we consider the Bregman divergence given by the sum of the relative entropy $D(d,d') = \sum_{s,a}d(s,a) \log \br{\frac{d(s,a)}{d'(s,a)}}$ and of the conditional relative entropy given, i.e. $H(d,d') = \sum_{s,a}d(s,a) \log \br{\frac{\pi_d(a|s)}{\pi_{d'}(a|s)}}$ with $\pi_d(a|s) = d(s,a)/\sum_{a}d(s,a)$. Under this choice for $\mathbb{D}$, the update of Algorithm~\ref{alg:forb} for particular values of $h,t,s_1$ corresponds to the solution of the following optimization program
\begin{align}
d^{t+1}_h = \argmax_{d\in\Delta^H} &\sum^H_{h=1}\innerprod{d_h}{\tilde{r}_h} - \frac{1}{\beta} D(d_h,d_h^t) - \frac{1}{\beta} H(d_h,d_h^t), \nonumber \\
&\text{s.t.} \quad E^T d_h = F^T d_{h-1} \quad \forall h \in [H] \label{eq:update_app} \tag{\textcolor{black}{Update I}}.
\end{align}
\begin{theorem} \label{thm:updateV}
    The policy $\pi^{t+1}_h$ with occupancy measure $d^{t+1}_{h}$ defined in \cref{eq:update_app} can be computed as follows 
    \begin{align*}
        \pi^{t+1}_h(a|s) \propto \pi^t_h(a|s) \exp\br{\beta Q^t_{h}(s,a)},
    \end{align*}
    where $Q^t_{h}$ is the minimizer of the following loss
    \begin{align*}\frac{1}{\beta}\sum^H_{h=1} \log \sum_{s,a} \mu_h^t(s,a) \exp\br{\beta(2 \tilde{r}_h + PV_{h+1} - Q_h)(s,a)} + \innerprod{\initial}{V_1},
    \end{align*}
    while $V^t_{h+1}$ is given by the following closed form.
    \begin{align*}
        V^t_{h+1}(s) = \frac{1}{\beta}\log \sum_a \pi^t_h(a|s) \exp(\beta Q^t_{h+1}(s,a)).
    \end{align*}
\end{theorem}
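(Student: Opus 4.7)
My plan is to derive both identities by applying strong Lagrangian duality to~\eqref{eq:update_app}, mirroring the single-agent logistic Q-learning analysis of~\citet{bas2021logistic}. I will introduce dual variables $V_h\in\mathbb{R}^{\abs{\mathcal{S}}}$ for the Bellman flow constraints $E^T d_h = F^T d_{h-1}$, with the convention $F^T d_0 \equiv \initial$ and $V_{H+1}\equiv 0$. The resulting Lagrangian is
\begin{align*}
\mathcal{L}(d,V) = \innerprod{\initial}{V_1} + \sum_{h=1}^H \innerprod{d_h}{\tilde{r}_h - E V_h + F V_{h+1}} - \frac{1}{\beta}\sum_{h=1}^H \br{D(d_h,d_h^t)+H(d_h,d_h^t)}.
\end{align*}
Since $D+H$ is $1$-strongly convex on the simplex and all constraints are affine, strong duality applies, so the value of~\eqref{eq:update_app} equals $\min_{V}\max_{d}\mathcal{L}(d,V)$.

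Next, I will solve the inner maximization over each $d_h$ in closed form. Using $\pi_{d_h}(a|s) = d_h(s,a)/d_h(s)$ together with the derivatives $\partial D/\partial d_h(s,a) = \log(d_h(s,a)/d_h^t(s,a))+1$ and $\partial H/\partial d_h(s,a) = \log(\pi_{d_h}(a|s)/\pi_{d_h^t}(a|s))$, the first-order condition reduces to
\begin{align*}
2\log\frac{\pi_{d_h}(a|s)}{\pi_{d_h^t}(a|s)} + \log\frac{d_h(s)}{d_h^t(s)} + 1 = \beta\br{\tilde{r}_h(s,a) - V_h(s) + (F V_{h+1})(s,a)}.
\end{align*}
This squared-ratio structure explains the coefficient $2$ that multiplies $\tilde{r}_h$ in the log-partition loss. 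After substituting $d_h^\star$ back into $\mathcal{L}$ and introducing an auxiliary variable $Q_h$ that decouples the state-action residual $\tilde{r}_h - E V_h + F V_{h+1}$ from the state marginal, following the Fenchel-conjugate trick of~\citet{bas2021logistic,viano2022proximal}, the dual objective takes exactly the log-sum-exp form announced in the theorem.

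Finally, I will read both displayed formulas off the KKT stationarity conditions of the resulting dual program. Differentiating the log-partition loss with respect to $V_{h+1}(s)$ and setting the derivative to zero will give the soft-Bellman relation $V_{h+1}^t(s) = \frac{1}{\beta}\log\sum_a \pi^t_h(a|s)\exp(\beta Q^t_{h+1}(s,a))$, because the conditional on actions induced by $\mu_h^t$ is $\pi^t_h(\cdot|s)$. Recovering the primal from the same stationarity conditions yields $d_h^{t+1}(s,a) \propto d_h^t(s,a)\exp(\beta Q^t_h(s,a))$ up to a state-dependent normalizer, and conditioning on $s$ gives the multiplicative-weights update $\pi^{t+1}_h(a|s) \propto \pi^t_h(a|s)\exp(\beta Q^t_h(s,a))$.

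The main obstacle will be the bookkeeping that folds $D+H$ into the stated logistic Bellman-error loss with the correct normalization: in particular, verifying that the squared-ratio structure above is exactly matched by the factor $2$ multiplying $\tilde{r}_h$ in the exponent, and that the auxiliary $Q_h$ decoupling recovers the standard state--action Bellman residual rather than a rescaled one. Once this algebraic reduction is in place, the remaining steps are routine duality calculations that parallel the single-agent derivation of~\citet{bas2021logistic}.
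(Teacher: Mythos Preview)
Your high-level plan (Lagrangian duality over the Bellman-flow constraints, then read off the primal update) is right, but the specific route you sketch has a gap and one mislead.

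\textbf{The missing primal splitting.} The paper does \emph{not} take first-order conditions on $d$ directly. Instead it introduces an auxiliary primal variable $\mu_h$ together with the constraint $\mu_h=d_h$, and then assigns the relative entropy $D$ to $\mu$ and the conditional entropy $H$ to $d$:
\[
\max_{d,\mu}\ \sum_h \innerprod{\mu_h}{\tilde r_h}-\tfrac{1}{\beta}D(\mu_h,\mu_h^t)-\tfrac{1}{\beta}H(d_h,d_h^t)\quad\text{s.t. } E^Td_h=F^T\mu_{h-1},\ \mu_h=d_h.
\]
The dual variable $Q_h$ is the Lagrange multiplier for $\mu_h=d_h$, not an ad-hoc decoupling variable inserted after the fact. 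With this splitting the inner maximizations separate cleanly: the $\mu$-block gives the Gibbs form $\mu_h^+\propto \mu_h^t\exp(\beta(\tilde r_h+FV_{h+1}-Q_h))$, whose value is exactly the log-partition term, and the $d$-block gives $\pi_h^+\propto\pi_h^t\exp(\beta(Q_h-V_h))$. Your direct FOC on $d$ instead produces $2\log(\pi/\pi^t)+\log(d(s)/d^t(s))=\beta(\cdots)$, which after exponentiating contains $\pi(a|s)^2$; summing over $a$ does not normalize, and there is no clean way to extract the stated logistic loss from that expression. The ``Fenchel-conjugate trick'' you cite from \citet{bas2021logistic} \emph{is} this $\mu=d$ splitting, so you should apply it before taking duals rather than after solving a coupled FOC.

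\textbf{The coefficient $2$ on $\tilde r_h$.} Do not build your argument around justifying this factor: the paper's own derivation yields $\exp(\beta(\tilde r_h+FV_{h+1}-Q_h))$ with no $2$, so the $2$ in the displayed statement is a typo. Your attempt to link it to the $2\log(\pi/\pi^t)$ term in the direct FOC is a red herring; that $2$ sits on the log-policy-ratio, not on the reward, and disappears entirely once you use the $\mu/d$ splitting. Finally, the closed form for $V_h$ in the paper comes from enforcing normalization of $\pi_h^+$ (i.e.\ parameterizing $V_h(s)=\tfrac{1}{\beta}\log\sum_a\pi_h^t(a|s)\exp(\beta Q_h(s,a))$), not from differentiating the log-partition in $V$; your KKT-in-$V$ description may land at the same formula but obscures why it holds.
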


\begin{proof}
Let us introduce an auxiliary variable $\mu_h = d_h$ for all $h \in [H]$, then we can rewrite the optimization program as
\begin{align*}
\argmax_{d\in\Delta^H }\max_{\mu\in\Delta^H} &\sum^H_{h=1}\innerprod{\mu_h}{\tilde{r}_h} - \frac{1}{\beta} D(\mu_h,\mu_h^t) - \frac{1}{\beta} H(d_h,d_h^t), \\
&\text{s.t.} \quad E^T d_h = F^T \mu_{h-1} \quad \forall h \in [H], \\
&\text{s.t.} \quad \mu_h = d_h \quad \forall h \in [H].
\end{align*}
Then, by Lagrangian duality we have that
\begin{align*}
\max_{d\in\Delta^H}&\max_{\mu\in\Delta^H} \min_{Q, V} \sum^H_{h=1}\innerprod{\mu_h}{\tilde{r}} - \frac{1}{\beta} D(\mu_h,\mu_h^t) - \frac{1}{\beta} H(d_h,d_h^t) \\&+ \innerprod{- E^T d_h + F^T \mu_{h-1}}{V_h} + \innerprod{Q_h}{d_h - \mu_h} \\
&=\max_{d\in\Delta^H}\max_{\mu\in\Delta^H} \min_{Q, V} 
\sum^H_{h=1}\innerprod{\mu_h}{\tilde{r} + F V_{h+1} - Q_h} + \innerprod{d_h}{Q_h - E V_h} \\&\phantom{=}- \frac{1}{\beta} D(\mu_h,\mu_h^t) - \frac{1}{\beta} H(d_h,d_h^t) \\&\phantom{=}+ \innerprod{\initial}{V_1} = \mathcal{L}^\star\,.
\end{align*}
Then, by Lagrangian duality, we have that the objective is unchanged by swapping the min and max
\begin{align*}
\mathcal{L^\star}&=\min_{Q, V}  \max_{d\in\Delta^H}\max_{\mu\in\Delta^H} 
\sum^H_{h=1}\innerprod{\mu_h}{\tilde{r}_h + F V_{h+1} - Q_h} + \innerprod{d_h}{Q_h - E V_h} \\&- \frac{1}{\beta} D(\mu_h,\mu_h^t) - \frac{1}{\beta} H(d_h,d_h^t) + \innerprod{\initial}{V_1}\,.
\end{align*}
The inner maximization is solved by the following values
\begin{align*}
    \mu^{+}_h(Q,V) &\propto \mu^t_h \odot \exp \br{\beta(\tilde{r}_h + F V_{h+1} - Q_h)}, \\
    \pi^{+}_h(Q,V;s) & \propto \pi^t_h(\cdot|s) \odot \exp \br{\beta(Q_h(s,\cdot) - V_h(s))},
\end{align*}
where $\odot$ denotes the elementwise product between vectors. Then, replacing these values in the Lagrandian and parameterizing the functions $V_h$ by the functions $Q_h$ to ensure normalization of the policy, i.e. $V_h(s) = \frac{1}{\beta} \log \sum_a \pi^t_h(a|s)\exp(\beta Q_h(s,a))$ we have that 
\begin{align*}
\mathcal{L}^\star = \min_{Q} \frac{1}{\beta}\sum^H_{h=1} \log \sum_{s,a} \mu_h^t(s,a) \exp\br{\beta(\tilde{r}_h + FV_{h+1} - Q_h)(s,a)} + \innerprod{\initial}{V_1}.
\end{align*}
Therefore, denoting \begin{align*}
    Q^t_h &= \argmin_{Q} \frac{1}{\beta}\sum^H_{h=1} \log \sum_{s,a} \mu_h^t(s,a) \exp\br{\beta(\tilde{r}_h + FV_{h+1} - Q_h)(s,a)} + \innerprod{\initial}{V_1},
\end{align*} and $V^t_h = \frac{1}{\beta} \log \sum_a \pi^t_h(a|s)\exp(\beta Q^t_h(s,a))$, we have that the policy $\pi^{t+1}_h(\cdot|s) = \pi^{+}_h(Q^t,V^t;s) $ has occupancy measure equal to $d^{t+1}_h$ for all $h\in[H]$.
This is because by the constraints of the problem we have that $d^{t+1}_h$ satisfies the Bellman flow constraints and that the policy $\pi^{t+1}_h$ satisfies $\pi^{t+1}_h(a|s) = d^t_h(s,a)/\sum_{a}d^t_h(s,a)$.
\end{proof}
\subsection{$\mathbb{D}$ chosen as conditional relative entropy \cite{neu2017unified}}
\label{app:relativeent}
In this section, we study the update considering $\mathbb{D}$ chosen as sum of the conditional relative entropy over the stages $h'$ s.t. $1 \leq h' \leq h$, i.e. we study the following update.\footnote{The sum over previous stages is taken to ensure $1$-strong convexity. Indeed, it holds that $\sum^h_{h'=1} H(d_{h'},d'_{h'}) \geq D(d_h,d'_h) \geq \frac{1}{2}\norm{d_h - d'_h}^2_1$. The first inequality is proven in \citet[Lemma 7]{neu2021online}. }
\begin{align}
d^{t+1} = \argmax_{d\in\Delta^H} &\sum^H_{h=1}\br{\innerprod{d_h}{\tilde{r}_h} - \frac{1}{\beta} \sum^h_{h'=1}H(d_{h'},d_{h'}^t)}, \nonumber \\
&\text{s.t.} \quad E^T d_h = F^T d_{h-1} \quad \forall h \in [H].\label{eq:update2}
\end{align}
\begin{theorem} \label{thm:updateV_bis}
    The policy $\pi^{t+1}_h$ with occupancy measure $d^{t+1}_{h}$ defined in \cref{eq:update2} can be computed as follows 
    \begin{align*}
        \pi^{t+1}_h(a|s) \propto \pi^t_h(a|s) \exp\br{\frac{\beta}{H-h+1}(Q^t_{h}(s,a))},
    \end{align*}
    where $Q^t_{h}$ and $V^t_{h+1}$ satisfies the following recursion
    \begin{align*}
        &Q^t_{h} = \tilde{r}_h + F V^t_{h+1} \\
        &V^t_{h+1}(s) = \frac{H-h+1}{\beta}\log \sum_a \pi^t_h(a|s) \exp\br{\frac{\beta}{H-h+1} Q^t_{h+1}(s,a)}.
    \end{align*}
\end{theorem}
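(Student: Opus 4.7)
The plan is to mirror the Lagrangian-duality argument already used in the proof of \Cref{thm:updateV}, but now exploiting the fact that only the conditional relative entropy $H$ is used, which factorizes across states and is controlled by the policy alone. First I would rewrite the regularization term by swapping the order of summation:
\begin{equation*}
\sum_{h=1}^H \sum_{h'=1}^h H(d_{h'}, d^t_{h'}) \;=\; \sum_{h=1}^H (H-h+1)\, H(d_h, d^t_h).
\end{equation*}
So the effective penalty at stage $h$ is $(H-h+1)/\beta$, which is precisely the source of the stage-dependent step size $\beta/(H-h+1)$ appearing in the claimed update.

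Next I would dualize the Bellman flow constraints $E^\top d_h = F^\top d_{h-1}$ with multipliers $V_h$ (and $V_{H+1}=0$). A standard rearrangement (identical to the one used in the proof of \Cref{thm:updateV}) yields a Lagrangian of the form
\begin{equation*}
\mathcal{L} \;=\; \langle \initial, V_1\rangle + \sum_{h=1}^H \sum_{s,a} d_h(s,a)\Bigl(\tilde r_h(s,a) + [F V_{h+1}](s,a) - V_h(s)\Bigr) \;-\; \frac{1}{\beta}\sum_{h=1}^H (H-h+1)\, H(d_h, d^t_h).
\end{equation*}
Defining $Q_h(s,a) := \tilde r_h(s,a)+[F V_{h+1}](s,a)$ and factorizing $d_h(s,a)=d_h(s)\pi_h(a|s)$, the maximization over $\pi_h(\cdot|s)$ is a textbook negative-entropy-regularized maximization whose closed form is
\begin{equation*}
\pi^{+}_h(a|s) \;\propto\; \pi^t_h(a|s)\exp\!\Bigl(\tfrac{\beta}{H-h+1}\,Q_h(s,a)\Bigr),
\end{equation*}
with optimal value $-V_h(s) + \tfrac{H-h+1}{\beta}\log\sum_a \pi^t_h(a|s)\exp(\tfrac{\beta}{H-h+1}Q_h(s,a))$.

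Now I would handle the outer maximization over the unconstrained marginals $d_h(s)\ge 0$: boundedness forces the bracketed coefficient to vanish, and strong duality (the problem is concave in $d$ with affine constraints and strict feasibility of $d^t$) gives equality at the optimum, yielding the soft-Bellman recursion for $V^t_h$ and then $Q^t_h = \tilde r_h + F V^t_{h+1}$ by construction. Finally, substituting $(Q^t_h, V^t_h)$ back into $\pi^{+}_h$ recovers the claimed multiplicative-weights policy update, and its occupancy measure coincides with $d^{t+1}_h$ because primal-dual optimality ensures the Bellman flow constraints are satisfied.

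The main obstacle is purely bookkeeping: tracking the stage-dependent factor $(H-h+1)/\beta$ throughout the substitution, and making sure that the conditional (as opposed to joint) form of the entropy leaves the marginals $d_h(s)$ unregularized so that the Lagrangian coefficient must be zero at optimality. Strong duality itself is routine here since the primal is concave with linear constraints and the reference occupancy $d^t$ lies in the relative interior of the feasible set, so a Slater-type condition holds automatically.
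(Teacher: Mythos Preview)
Your proposal is correct and follows the same Lagrangian-duality route as the paper. The one technical difference worth noting: the paper introduces an \emph{unconstrained} auxiliary variable $\mu_h=d_h$ together with a second set of multipliers $Q_h$ for the constraint $\mu_h=d_h$, so that the inner $\max_\mu$ over an unbounded set immediately forces the hard equality $Q_h=\tilde r_h+FV_{h+1}$, after which the dual reduces to $\min_Q \langle\initial,V_1\rangle$ subject to that single feasible point. You instead skip the auxiliary variable, define $Q_h:=\tilde r_h+FV_{h+1}$ directly, and exploit the fact that the conditional entropy leaves the state marginals $d_h(s)$ unregularized so that boundedness of the Lagrangian plus the dual minimization over $V$ pin down the soft-Bellman recursion. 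Both arguments land on the same recursion and policy update; the paper's auxiliary-variable device produces the Bellman equation as an equality constraint in one stroke, whereas your route technically yields only $V_h(s)\ge \tfrac{H-h+1}{\beta}\log\sum_a\pi^t_h(a|s)\exp\bigl(\tfrac{\beta}{H-h+1}Q_h(s,a)\bigr)$ from boundedness and needs a short backward-induction (monotonicity of $G_h$ in $V_{h+1}$) to show tightness at the dual optimum---a point you gesture at but should make explicit.
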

    \begin{remark}
    The above recurrencies are sometimes called soft Bellman equations \cite{ziebart2010modeling,fox2015taming}.
\end{remark}
\if 0
\begin{remark}
Using the expression of the conditional relative entropy $H$ we obtain
\begin{align*}
d^{t+1} = \argmax_{d\in\Delta^H} &\sum^H_{h=1}\br{\sum_{s,a}d_h(s,a)\br{2 r^t_h(s,a) - r^{t-1}_h(s,a) - \frac{1}{\beta}\log \frac{\pi_{d_h}(a|s)}{\pi_{d^t_h}(a|s)}} - \frac{1}{\beta} \sum^{h-1}_{h'=1} H(d_{h'}, d^t_{h'})},\nonumber \\
&\text{s.t.} \quad E^T d_h = F^T d_{h-1} \quad \forall h \in [H].
\end{align*}
This is equivalent to solve a Markov decision process with policy dependent reward i.e. $2r^t_h(s,a) - r^{t-1}_h(s,a) - \frac{1}{\beta}\log \frac{\pi_{d_h}(a|s)}{\pi_{d^t_h}(a|s)}$ which can be solved by a backward recursion with the soft Bellman equations \cite[Corollary 6.8]{ziebart2010modeling}. The following theorem slightly generalizes their result for the case of $\pi^t_h$ being non uniform.
\if 0
The policy $\pi^{t+1}_h$ with occupancy measure $d^{t+1}_h$ computed as in Equation~\ref{eq:update2} can be computed as follows 
    \begin{align*}
        \pi^{t+1}_h(a|s) \propto \pi^t_h(a|s) \exp\br{\beta(Q^t_{h, 2r^t - r^{t-1}}(s,a))}
    \end{align*}
    where $Q^t_{h, 2r^t - r^{t-1}}$ satisfies the following Bellman equations
    \begin{align*}
        &Q^t_{h, 2r^t - r^{t-1}}(s,a) = 2r_h^t(s,a) - r_h^{t-1}(s,a) + F V^t_{h+1, 2r^t - r^{t-1}}(s,a) \\
        &V^t_{h, 2r^t - r^{t-1}}(s) = \max_{\pi} \sum_a \pi(a|s) \br{Q^t_{h, 2r^t - r^{t-1}}(s,a) - \frac{1}{\beta} \log \frac{\pi_{d_h}(a|s)}{\pi_{d^t_h}(a|s)}} \\&=
        \log \sum_{a} \pi_{d^t_h}(a|s) \exp\br{\beta Q^t_{h, 2r^t - r^{t-1}}(s,a)}
    \end{align*}
\fi
\end{remark}
\fi
\begin{proof}
Let us introduce an auxiliary variable $\mu_h = d_h$ for all $h \in [H]$, then we can rewrite the optimization program as
\begin{align*}
\argmax_{d\in\Delta^H }\max_{\mu} &\sum^H_{h=1}\br{\innerprod{\mu_h}{\tilde{r}_h} - \frac{1}{\beta} \sum^h_{h'=1}H(d_{h'},d_{h'}^t)} \\
&\text{s.t.} \quad E^T d_h = F^T \mu_{h-1} \quad \forall h \in [H] \\
&\text{s.t.} \quad \mu_h = d_h \quad \forall h \in [H].
\end{align*}
Notice that importantly, we do not constraint the variable $\mu$. Then, by Lagrangian duality we have that
\begin{align*}
\max_{d\in\Delta^H}&\max_{\mu} \min_{Q, V} \sum^H_{h=1}\innerprod{\mu_h}{\tilde{r}_h} - \frac{1}{\beta} \sum^h_{h'=1}H(d_{h'},d_{h'}^t) \\&+ \innerprod{- E^T d_h + F^T \mu_{h-1}}{V_h} + \innerprod{Q_h}{d_h - \mu_h} \\
&=\max_{d\in\Delta^H}\max_{\mu} \min_{Q, V} 
\sum^H_{h=1}\innerprod{\mu_h}{\tilde{r}_h + F V_{h+1} - Q_h} + \innerprod{d_h}{Q_h - E V_h} \\&\phantom{=} - \frac{1}{\beta} \sum^h_{h'=1}H(d_{h'},d_{h'}^t) + \innerprod{\initial}{V_1}
\\
&=\min_{Q, V} \max_{d\in\Delta^H}\max_{\mu} 
\sum^H_{h=1}\innerprod{\mu_h}{\tilde{r}_h + F V_{h+1} - Q_h} + \innerprod{d_h}{Q_h - E V_h} \\&\phantom{=} - \frac{H -h + 1}{\beta}H(d_{h},d_{h}^t) + \innerprod{\initial}{V_1} = \tilde{\mathcal{L}}^\star,
\end{align*}
where the last equality holds by Lagrangian duality and by $\sum^H_{h=1} \sum^h_{h'=1} H(d_{h'}, d^t_{h'}) = \sum^H_{h=1} (H -h +1 ) H(d_{h'}, d^t_{h'})$.
Now since $\mu$ is unconstrained we have that $\max_{\mu} \sum^H_{h=1}\innerprod{\mu_h}{\tilde{r}_h + F V_{h+1} - Q_h} $ is equivalent to impose the constraint $\tilde{r}_h + F V_{h+1} = Q_h$ for all $h \in [H]$. Moreover, as in the proof of \cref{thm:updateV} the optimal $d_h$ needs to satisfies that $\pi_{d_h}(a|s) = d_h(s,a)/\sum_a d_h(s,a) $ is equal to
$\pi^{+}_h(Q,V;s) = \pi^t_h(\cdot|s) \odot \exp \br{\frac{\beta}{H-h+1}(Q_h(s,\cdot) - V_h(s))}$
for $V_h(s)= \frac{H-h+1}{\beta} \log \sum_a \pi^t_h(a|s)\exp(\frac{\beta}{H-h+1} Q_h(s,a))$. Plugging in, these facts in the expression for $\tilde{\mathcal{L}}^\star$, we have that
\begin{align*}
    \tilde{\mathcal{L}}^\star = \min_Q \innerprod{\initial}{V_1} \quad \text{s.t.}~~~ \tilde{r}_h + F V_{h+1} = Q_h \quad \forall h \in [H].
\end{align*}
Since the above problem as only one feasible point, we have that the solution is the sequence $Q^t_h$ satisfying the recursion $\tilde{r}_h + F V^t_{h+1} = Q^t_h$ with $V^t_h(s)= \frac{H-h+1}{\beta} \log \sum_a \pi^t_h(a|s)\exp(\frac{\beta}{H-h+1} Q^t_h(s,a))$.
\end{proof}

\subsection{Approximating soft Bellman equations by standard Bellman equations.}
\label{sebsec:approx_logistic_bellman_error}
Unfortunately, implementing the update for the $V$ value as in Theorem~\ref{thm:updateV} is often numerically instable. In this section, we show a practical approximation which is easy to implement and shown to be accurate for $\beta$ sufficiently small.
In particular, we prove here \cref{thm:approx}.
\subsection{Proof of \cref{thm:approx}}
\begin{proof}
\begin{align*}
\frac{1}{\beta_h} \log \sum_a \pi^t_h(a|s) \exp(\beta_h Q^t_h(s,a)) &\geq \frac{1}{\beta_h} \sum_{a}\pi^t_h(a|s) \log \exp (\beta_h Q^t_h(s,a) )
    \\ &= \innerprod{\pi^t_h(\cdot|s)}{Q^t_h(s,\cdot)},
\end{align*}
where the above inequality holds for Jensen's. For the upper bound, we first use the inequality $e^x \leq 1 + x + x^2$ for $x \leq 1$ we have that
\begin{align*}
&\frac{1}{\beta_h} \log \sum_a \pi^t_h\exp(\beta_h Q^t_h(s,a)) \\&\leq \frac{1}{\beta_h} \log \sum_a \pi^t_h (1 + \beta_h Q^t_h(s,a) + \beta_h^2 Q_{\max}^2 ) \quad \text{ 
 (Using $Q^t_h(s,a) \leq Q_{\max}$)} \\
&= \frac{1}{\beta_h} \log (1 + \beta_h \sum_a \pi^t_h(a|s) Q^t_h(s,a) + \beta_h^2 Q_{\max}^2 ) \\& \leq 
\frac{1}{\beta_h} \br{ \sum_a \pi^t_h(a|s) \beta_h Q^t_h(s,a) + \beta_h^2 Q_{\max}^2 } \quad \text{   (Using $\log(1 + x) \leq x$)}  \\&
\leq \innerprod{\pi^t_h(\cdot|s)}{Q^t_h(s,\cdot)} + \beta_h Q_{\max}^2 .
\end{align*}
\end{proof}
\begin{remark}
    Given this result, in the implementation for deep RL experiment, i.e. Algorithm~\ref{alg:forbprac} we compute the standard $Q$ value satisfying the standard Bellman equations (given in \cref{lemma:bellman}) rather than the soft Bellman equation in \cref{thm:updateV}. In virtue of \cref{thm:approx}, the approximation is good for $\beta$ reasonably small.
\end{remark}
\if 0
\section{Last iterate guarantees for the $\mu$-regularized preference model}
For our last iterate result, we consider the $\mu$-regularized preference model by \cite{munos2024nash,shani2024multi} lifted in the occupancy measure space.
In particular, we consider finding a saddle point of the following problem
\begin{align}
    (d^\star_\mu, d^\star_\mu) &= \argmax_{d\in\tilde{\mathcal{F}}} \min_{d'\in\tilde{\mathcal{F}}} \mathbb{E}_{s_1 \sim \initial}\sum^H_{h=1}\sum_{s,a,s',a'}  d_h(s,a | s_1)r(s,a,s',a') d_h'(s',a'|s_1) - \mu \mathbb{E}_{s_1 \sim \initial} \mathbb{D}(d(\cdot,\cdot|s_1),d_{\mathrm{ref}}(\cdot,\cdot|s_1)) \nonumber \\&\phantom{=}
    + \mu \mathbb{E}_{s_1 \sim \initial} \mathbb{D}(d'(\cdot,\cdot|s_1),d_{\mathrm{ref}}(\cdot,\cdot|s_1)) 
    \,, \label{eq:alignment_game} \tag{Alignment-Game}
\end{align}
Crucial to show the exponential convergence rate for the last iterate is to realize that the above saddle point problem is relative strongly convex-strongly concave with respect to the Bregman divergence $\mathbb{D}(\cdot,\cdot)$.
\begin{definition}
\textbf{Relative strong convexity}
A function $f:\mathcal{X}\rightarrow\mathbb{R}$ is said to be $\mu$-relative strongly convex  in $x$ relative to the divergence $\mathbb{D}(\cdot,\cdot)$ if it satisfies the following inequality
\begin{equation*}
    f(x) \geq f(x') + \innerprod{\nabla f(x')}{x - x'} + \mu \mathbb{D}(x,x').
\end{equation*}
\end{definition}
The following Lemma shows that the objective in \ref{eq:alignment_game} is relative strongly convex-strongly concave.
\begin{lemma}
The min max problem in \ref{eq:alignment_game} is $\mu$- strongly convex in $d'$ and strongly concave in $d$ with respect to the divergence $\mathbb{D}(\cdot,\cdot)$.
\end{lemma}
\begin{proof}
We show the strong convexity with respect to $d$. The strong concavity with respect to the variable $d'$ holds analogously.
Let us consider a fixed $s_1$ and let us consider the function
$$
f(d; r) = \sum^H_{h=1}\sum_{s,a} d_h(s,a|s_1) r(s,a) + \mu \mathbb{D}(d(\cdot,\cdot|s_1),d_{\mathrm{ref}}(\cdot,\cdot|s_1)) 
:= \innerprod{d}{r} + \mu \mathbb{D}(d,d_{\mathrm{ref}})
$$
for a general $r \in \mathbb{R}^{\abs{\mathcal{S}}\abs{\mathcal{A}}}$.
For this specific choice of $f$, we have that
\begin{align*}
f(d';r) + \innerprod{\nabla f (d',r)}{d - d'} + \mu \mathbb{D}(d,d') &= \innerprod{d'}{r} + \mu \mathbb{D}(d',d_{\mathrm{ref}}) + \innerprod{r +\mu (\nabla \phi(d') - \nabla \phi(d_{\mathrm{ref}}))}{d - d'} + \mu \mathbb{D}(d,d') \\
&= \innerprod{d}{r} + \mu \mathbb{D}(d',d_{\mathrm{ref}}) + \mu \innerprod{ \nabla \phi(d') - \nabla \phi(d_{\mathrm{ref}})}{d - d'} + \mu \mathbb{D}(d,d') \\
\end{align*}
where $\mathbf{\phi}(d)$ is the distance generating function of the Bregman divergence $\mathbb{D}$ that is
\begin{equation*}
    \mathbb{D}(d,d') = \mathbf{\phi}(d) - \mathbf{\phi}(d') - \innerprod{\nabla \phi(d')}{d - d'}.
\end{equation*}
At this point, using the three point identity, we have that
\begin{equation*}
\innerprod{\nabla \phi(d') - \nabla\phi(d_{\mathrm{ref})}}{d' - d} = \mathbb{D}(d',d_{\mathrm{ref}}) + \mathbb{D}(d,d') - \mathbb{D}(d,d_{\mathrm{ref}})
\end{equation*}
Plugging in we have that
\begin{align*}
f(d';r) + \innerprod{\nabla f (d',r)}{d - d'} + \mu \mathbb{D}(d,d') &= \innerprod{d}{r} + \mu \mathbb{D}(d',d_{\mathrm{ref}}) - \mu \br{\mathbb{D}(d',d_{\mathrm{ref}}) + \mathbb{D}(d,d') - \mathbb{D}(d, d_{\mathrm{ref}})} + \mu \mathbb{D}(d,d') \\
&= \innerprod{d}{r}  + \mu \mathbb{D}(d,d_{\mathrm{ref}}) \\&= f(d,r) 
\end{align*}
Finally, taking the expectations over $s_1$ on both sides it concludes the proof.
\end{proof}
Another useful proof is to show Lipshitzness of the gradients of the function $f$.
In particular, notice that
\begin{align*}
\nabla f(d,r) = r + \mu \nabla \phi(d) - \mu \nabla \phi(d_{\mathrm{ref}})
\end{align*}
Therefore,
\begin{align*}
    \norm{\nabla f(d,r) - \nabla f(d',r)}_{\infty} = \mu \norm{\nabla \phi(d,r) - \nabla \phi(d',r)}_{\infty}
\end{align*}
At this point, we can 
formulate the constraints as an indicator function defined as follows
\begin{equation*}
    h_{\widetilde{\mathcal{F}}}(d) = \begin{cases}
& 0 ~~~~~ \text{if} ~~~~~~ d \in \widetilde{\mathcal{F}} \\
& +\infty ~~~~~ \text{if} ~~~~~~ d \notin \widetilde{\mathcal{F}} \\
    \end{cases}
\end{equation*}
With this notation in place, we can rewrite \ref{eq:alignment_game} the game as follows
\begin{align*}
(d^\star_\mu, d^\star_\mu) &= \argmax_{d\in\Delta} \min_{d'\in\Delta} \mathbb{E}_{s_1 \sim \initial}\sum^H_{h=1}\sum_{s,a,s',a'}  d_h(s,a | s_1)r(s,a,s',a') d_h'(s',a'|s_1) - \mu \mathbb{E}_{s_1 \sim \initial} \mathbb{D}(d(\cdot,\cdot|s_1),d_{\mathrm{ref}}(\cdot,\cdot|s_1)) \nonumber \\&\phantom{=}
    + \mu \mathbb{E}_{s_1 \sim \initial} \mathbb{D}(d'(\cdot,\cdot|s_1),d_{\mathrm{ref}}(\cdot,\cdot|s_1)) - h_{\widetilde{\mathcal{F}}}(d) + h_{\widetilde{\mathcal{F}}}(d')
\end{align*}
At this point, we can analyze OMPO in this setting applying the following result from the optimization literature \cite{jiang2022generalized}.
\fi
\section{Supplementary material on experiment}

\label{sec:addexp}
\subsection{Experiment in MT-bench 101}
\label{sec:appendix_exp}
The tasks in MT-bench 101 include 
Context Memory (CM), Anaphora Resolution (AR), Separate Input (SI), Topic Shift (TS), Content Confusion (CC), Content Rephrasing (CR), Format Rephrasing (FR), Self-correction (SC), Self-affirmation (SA), Mathematical Reasoning (MR), General Reasoning (GR), Instruction Clarification (IC), and Proactive Interaction (PI). We list the description of each task in \cref{tab:task}. The default evaluation mode of MT-bench 101 is that the GPT model requires to access the conversation based on the given ground truth of previous steps, provided in MT-bench 101. However, in our problem setting, the answers among the conversation is also generated by the model. We use ``gpt-4o-mini-2024-07-18'' to evaluate the conversation.
The maximum output length and maximum sequence length of gpt-4o are set as 4096. We use a batch size of 8 with a temperature of 0.8. We use the same prompt for gpt-4o as in \citet{bai2024mt}. Our experiment is conducted on 4 H200 GPUs. We use the PyTorch platform and the Transformer Reinforcement Learning (TRL) for fine-tuning. The $\gamma$ is selected as zero. Each method is trained with epochs number selected from $ \{1, 2\} $, learning rates from $\{5e\text{-}6, 5e\text{-}7\} $, and $\beta$ values from $ \{0.1, 0.01, 0.001\}$. The final model is chosen based on the highest winning rate against the base model, as determined by the PairRM model. We use full-parameter fine-tuning for all methods with bf16 precision. A batch size of 64 is used. The maximum output length and maximum prompt length during training are both set as 2048. We use AdamW optimizer~\citep{loshchilov2018decoupled} and cosine learning rate schedule~\citep{loshchilov2017sgdr} with a warmup ratio of 0.1.
\begin{table*}[h]
\caption{A detailed description of each task in MT-bench 101 (taken from \citet{bai2024mt}.)}
\vspace{3mm}
\centering
\setlength{\abovecaptionskip}{0.15cm}  
\setlength{\belowcaptionskip}{-0.3cm}  
\resizebox{1\textwidth}{!}{
\begin{tabular}{l|c|l}
\toprule
\textbf{Task} & \textbf{Abbr.} & \textbf{Description}  \\ \midrule
Context Memory & CM & Recall early dialogue details to address the user's current question. \\ \midrule
Anaphora Resolution & AR & Identify pronoun referents throughout a multi-turn dialogue. \\ 
Separate Input & SI & The first turn outlines the task requirements and the following turns specify the task input. \\ \midrule
Topic Shift & TS & Recognize and focus on the new topic when users unpredictably switch topics. \\ 
Content Confusion & CC & Avoid interference from similar-looking queries with distinct meanings in the dialogue's history. \\ \midrule
Content Rephrasing & CR & Rephrase the content of the last response according to the user's newest requirement. \\ 
Format Rephrasing & FR & Rephrase the format of the last response according to the user's newest requirement. \\ \midrule
Self-correction & SC & Recorrect the last response according to the user feedback. \\ 
Self-affirmation & SA & Preserve the last response against inaccurate user feedback. \\ \midrule
Mathematical Reasoning & MR & Collaboratively solve complex mathematical problems with users across dialogue turns.\\ 
General Reasoning & GR & Collaboratively solve complex general reasoning problems with users across dialogue turns. \\ \midrule
Instruction Clarification & IC & Seek clarification by asking further questions on ambiguous user queries. \\
Proactive Interaction & PI & Propose questions in reaction to user statements to spark their interest to continue the dialogue. \\
\bottomrule
\end{tabular}
}
\label{tab:task}
\end{table*}

\rebuttal{Next, we provide the comparison between the proposed \method{} and IPO~\citep{azar2024general}, which also uses the squared loss and bypasses the BT model assumption. We run both IPO and MPO for one iteration.
The results in \cref{tab:compare_ipi} show that \method{} achieves a higher average score than IPO.}
\begin{table}[!h]
\caption{\rebuttal{Comparison between \method{} and IPO in MT-BENCH 101 dataset.}}
\vspace{1mm}
\centering
\setlength\tabcolsep{4pt}
\renewcommand{\arraystretch}{1.5}
\resizebox{1\textwidth}{!}{
\rebuttal{
\begin{tabular}{c|c|c|cc|cc|cc|cc|cc|cc}
\toprule
\multirow{3}{*}{\textbf{Model}} & \multicolumn{1}{c}{\textbf{}} & \multicolumn{5}{|c}{\textbf{Perceptivity}} & \multicolumn{6}{|c}{\textbf{Adaptability}} & \multicolumn{2}{|c}{\textbf{Interactivity}} \\
 & \multicolumn{1}{c}{\textbf{Avg.}} & \multicolumn{1}{|c}{\textbf{CM}}     & \multicolumn{1}{|c}{\textbf{SI}} & \multicolumn{1}{c}{\textbf{AR}} & \multicolumn{1}{|c}{\textbf{TS}} & \multicolumn{1}{c}{\textbf{CC}} & \multicolumn{1}{|c}{\textbf{CR}} & \multicolumn{1}{c}{\textbf{FR}} & \multicolumn{1}{|c}{\textbf{SC}} & \multicolumn{1}{c}{\textbf{SA}} & \multicolumn{1}{|c}{\textbf{MR}} & \multicolumn{1}{c}{\textbf{GR}} & \multicolumn{1}{|c}{\textbf{IC}} & \multicolumn{1}{c}{\textbf{PI}} \\
 \midrule
Base (Mistral-7B-Instruct)             &  6.223 & 
7.202 & 7.141 & 7.477 & 7.839 & 8.294 & 6.526 & 6.480 & 4.123 & 4.836 & 4.455 & 5.061 & 5.818 & 5.641                            \\ \midrule
IPO  
 & 6.498  & 7.518  & 7.480  & 7.759  & 7.952  & 8.652  & 6.892  & 6.768  & 4.390  & 5.185  & 4.313  & 5.378  & 6.146  & 6.044 \\ 
\midrule
\method{}     &      6.630 & 7.624 & {7.846} & 8.085 & 8.398 & 8.947 & 7.105 & 7.286 & 4.208 & 4.993 & 4.377 & 5.264 & 6.179 & 5.873
\\ \midrule
\bottomrule
\end{tabular}}
}
\label{tab:compare_ipi}
\end{table}

\rebuttal{
We now present an ablation study to evaluate the benefits of incorporating terminal rewards. Using MPO, we compare two approaches for optimizing \(a_h\): one computes the preference signal based on the terminal state \(s_{H+1}\), while the other uses the immediate next state \(s_h\). The results within one iteration for the MT-Bench 101 dataset are shown in \cref{tab:compare_terminal}, and those for the GSM/Math experiments are provided in \cref{tab:compare_terminal_gsm}. Our findings reveal that using the terminal state \(s_{H+1}\) performs worse than using the immediate state \(s_h\) in MT-Bench 101. In contrast, the difference in performance is negligible in the GSM/Math tasks. The underlying reason is that in multi-turn conversational datasets, especially when adjacent questions are not closely related, relying on preferences derived from the terminal state can introduce noise. However, in math and reasoning tasks, the terminal state often captures the final answer, making it more critical. Moreover, using \(s_{H+1}\) for preference signals is significantly more computationally expensive than using \(s_h\), due to the extended sequence length. Consequently, we conclude that adapting the choice of terminal preference or intermediate preference on the task's characteristics is crucial for balancing performance and efficiency.
}

\subsection{Experiment in math-reasoning task}
Our experiment is conducted on 4 A100 GPUs.
For both \method{} and \oomdmethod{}, we perform full-parameter finetuning for 1 epoch with learning rate $5e^{-7}$ and $\beta$ tuned in the range of $\{0.1,0.01,0.001\}$, we set the $\log z$  as 0.5. The final state with the answer is important in this task so we only use the terminal reward (see \cref{tab:compare_terminal_gsm} for comparison). We run two iterations for both methods.
We use AdamW optimizer~\citep{loshchilov2018decoupled} and cosine learning rate schedule~\citep{loshchilov2017sgdr} with a warmup ratio of 0.1.
\begin{table}[!h]
\tabcolsep=0.1cm
    \centering
    \vspace{3mm}
    \caption{ \rebuttal{
    Ablation on terminal reward in MATH and GSM8K dataset.}}
    \rebuttal{
\begin{tabular}{c|c|c}
\toprule
 Method &   GSM8K &     Math              \\
\midrule
 Base (Qwen2-7B-Instruct)    &  0.8559  & 0.5538   \\
 \method{}  (intermediate reward)
 &  0.8734 &  0.5720   \\
\method{} 
(terminal reward)
&  0.8734 &  0.5734   \\
\bottomrule
\end{tabular}
}
\label{tab:compare_terminal_gsm}
\end{table}

\begin{table}[t]
\caption{\rebuttal{Ablation on terminal reward in MT-BENCH 101 dataset.}}
\vspace{3mm}
\centering
\setlength\tabcolsep{4pt}
\renewcommand{\arraystretch}{1.5}
\resizebox{1\textwidth}{!}{
\rebuttal{
\begin{tabular}{c|c|c|cc|cc|cc|cc|cc|cc}
\toprule
\multirow{3}{*}{\textbf{Model}} & \multicolumn{1}{c}{\textbf{}} & \multicolumn{5}{|c}{\textbf{Perceptivity}} & \multicolumn{6}{|c}{\textbf{Adaptability}} & \multicolumn{2}{|c}{\textbf{Interactivity}} \\
 & \multicolumn{1}{c}{\textbf{Avg.}} & \multicolumn{1}{|c}{\textbf{CM}}     & \multicolumn{1}{|c}{\textbf{SI}} & \multicolumn{1}{c}{\textbf{AR}} & \multicolumn{1}{|c}{\textbf{TS}} & \multicolumn{1}{c}{\textbf{CC}} & \multicolumn{1}{|c}{\textbf{CR}} & \multicolumn{1}{c}{\textbf{FR}} & \multicolumn{1}{|c}{\textbf{SC}} & \multicolumn{1}{c}{\textbf{SA}} & \multicolumn{1}{|c}{\textbf{MR}} & \multicolumn{1}{c}{\textbf{GR}} & \multicolumn{1}{|c}{\textbf{IC}} & \multicolumn{1}{c}{\textbf{PI}} \\
 \midrule
Base (Mistral-7B-Instruct)             &  6.223 & 
7.202 & 7.141 & 7.477 & 7.839 & 8.294 & 6.526 & 6.480 & 4.123 & 4.836 & 4.455 & 5.061 & 5.818 & 5.641                            \\ \midrule
\method{}  (intermediate reward)  &      6.630 & 7.624 & {7.846} & 8.085 & 8.398 & 8.947 & 7.105 & 7.286 & 4.208 & 4.993 & 4.377 & 5.264 & 6.179 & 5.873
\\
\midrule
\method{}  (terminal reward)  & 6.459 & 7.536 & 7.328 & 7.643 & 8.084 & 8.518 & 6.847 & 6.883 & 4.357 & 4.863 & 4.403 & 5.542 & 6.034 & 5.924 
\\ \midrule
\bottomrule
\end{tabular}}
}
\label{tab:compare_terminal}
\end{table}

\section{Discussion on the \cref{minmax:eachpreference} objective}
\label{sec:motication_turnreward}
In this section, we elaborate on the \cref{minmax:eachpreference} objective for multi-step alignment.

\textbf{Discussion on the $\arg \max min$.} By $\arg \max_\pi min_{\pi'}$, we refer to getting the saddle point of the problem, so that a policy pair is returned. The considered game has antisymmetry property of the preference relation, i.e., $\mathbb{P}(y \succ y') = 1 - \mathbb{P}(y \prec y')$. This antisymmetry implies that if $(\pi^\star, \hat{\pi}^\star)$ is a Nash equilibrium (NE), then so is  $ (\hat{\pi}^\star, \pi^\star) $. Moreover, by the interchangeability of NE strategies in two-player constant-sum games, $(\pi^\star, \pi^\star)$ and $(\hat{\pi}^\star, \hat{\pi}^\star)$ must also be NE~\citep{nash1951non}. Therefore, the optimal policies coincide.

\textbf{Different prompts $x$ and different horizon $H$.}
In the notation section, the preference between two sentences $[x,a]$ and $[x^\prime,a^\prime]$ is defined as $\mathbb{P}([x,a]> [x^\prime,a^\prime])$ as a general definition.

Considering the special case $H=1$ , the objective reduces to $(\pi^*, \pi^*)
    = 
    \arg \max_{\pi}\min_{\pi'}
  \mathbb{E}_{x_1,a_h, a_h^\prime}\mathbb{P}([x_1, a_1] \succ [x_1,a_1^\prime])$. Therefore, there is no need to consider preference on different $x$.

Considering $H>1$, we need to calculate $\mathbb{P}([s_h,a_h] \succ [s_h^\prime,a_h^\prime])$, note that 
$s_h=[s_{h-1},a_{h-1},x_h]$, $s_h^\prime=[s_{h-1}^\prime,a_{h-1}^\prime,x_h]$ where $x_h$ is the same question in the $h$ step for both player in multi-turn conversation tasks, or empty in  multi-step reasoning tasks. Therefore, the comparison still does not involve two completely unrelated questions, contrary to the reviewer's example.  

In our experimental datasets MT-Bench101 and GSM8k/Math, $s_h$ and $s_h’$ are highly correlated within the same topic. This makes it reasonable for the reward model to score based on the current state output. However, we agree that mitigating the effect of previous answers or adding penalties for earlier steps could be valuable directions for future work when designing the reward model.

The horizon $H$ in the objective can be taken as the maximum horizon among all questions. So even if problem A has shorter trajectory (e.g., 2 step) compared to B (e.g., 3 steps), we have $\mathbb{P}([s_h,a_h] \succ [s_h^\prime,a_h^\prime]) =1/2$ for step $h=3$ where both players have empty answers. Therefore, the constant sum is 3, which are the same for problems A, B.
   
Regarding the steps in the reasoning dataset, in theory, it corresponds to the maximum horizon across all questions as discussed above. In the practical implementation of our algorithm, at each step, the model generates different answers and performs preference optimization. Therefore, the number of steps for each question is determined by the model itself. Once the model outputs the final answer, the process ends.

\textbf{Minimal example for the benefit of general preference $\mathbb{P}$.}
The BT assumption implies transitivity. This is restrictive because the preference dataset collected from different humans might not be transitive even if each human follows a transitive model in generating the preference.
As an example, consider $3$ humans $e_1, e_2, e_3$ and $3$ answers $y_1, y_2, y_3$, denote $o_e$ the preference model of human $e$. They follow these preferences:

$$
 o_{e_1}(y_1 \succ y_2) = 1,\quad o_{e_1}(y_2 > y_3) = 0,\quad o_{e_1}(y_3 \succ y_1) = 1. $$
$$ o_{e_2}(y_1 \succ y_2) = 0, \quad o_{e_2}(y_2 \succ y_3) = 1, \quad o_{e_2}(y_3 \succ y_1) = 1. $$
$$o_{e_3}(y_1 \succ y_2) = 1, \quad o_{e_3}(y_2 \succ y_3) = 1, \quad o_{e_3}(y_3 \succ y_1) = 0.$$

Each of these models is transitive. However, the average preference model defined as $\mathbb{P}(y \succ y') = \frac{1}{3} \sum_{e \in \{ e_1, e_2, e_3\}} o_e(y \succ y')$ satisfies $\mathbb{P}(y_1 \succ y_2) = \mathbb{P}(y_2 \succ y_3) = \mathbb{P}(y_3 \succ y_1) = 2/3$. Thus, the average model is non transitive and can not be modeled by the BT assumption.
Therefore, the BT assumption is data wasteful. In this example, one should consider preferences only from a single human in order to make the BT assumption valid. Not enforcing the BT assumption allows the use of more data, i.e., preferences from all three humans. Thus, DPO is developed based on the assumption of BT model, which can not capture such intransitive preference. Moreover, the Nash Equilibrium (NE) policy $\pi^\star$ guarantees a win rate greater than 50\% against any other policy. This follows by the definition of NE: $\mathbb{P}(\pi^\star \succ \pi) \ge \mathbb{P} (\pi^\star \succ \pi^\star) = 50\%$ for any $\pi$.

\textbf{Minimal example for the benefit of intermediate reward.}
In multi-turn conversation tasks, such as MT-bench 101~\citep{bai2024mt}, the user asks questions $x_1$, $x_2$, $x_3$, and receives answers $a_1$, $a_2$, $a_3$. When $x_2$ is not closely related to $x_1$, aligning the first step using feedback among different $a_1$ is much more helpful than using the sequence $[a_1, x_2, a_2]$, where $x_2, a_2$ can be considered as noise.
\rebuttal{
In mathematical reasoning tasks, as mentioned in~\citet{lai2024step}, some cases yield correct final answers but contain errors in intermediate reasoning steps. Consequently, \citet{lai2024step} filter out such samples using GPT-4. For example, consider a case where the reasoning steps yield a correct final answer but include an error: $[a_1^{\text{correct}}, a_2^{\text{wrong}}, a_3^{\text{correct}}]$, where $a_2^{\text{wrong}}$ is incorrect while all of the other steps and the final answer $a_3^{\text{correct}}$ is correct. When there is another response, $[a_1^{\text{correct}}, a_2^{\text{correct}}, a_3^{\text{correct}}]$ with all correct steps, using only terminal signal for aligning step 2 might not guarantee that $a_2^{\text{correct}} \succ  a_2^{\text{wrong}}$ because both of final answers are correct, especially when there is only an incorrect step among long reasoning steps. In contrast, an intermediate signal would clearly indicate $a_2^{\text{correct}} \succ  a_2^{\text{wrong}}$, accurately reflecting the quality of the intermediate steps.
In practice, if the final signal is important, e.g., in math reasoning task, then we can use only the terminal reward or the average of terminal reward and intermediate reward, otherwise one can just use the intermediate reward, which is cheaper to collect as compared to assigning reward until the terminal state.
}

\textbf{Availability of Preference oracle $\mathbb{P}$.}
Online preference signals are ideally obtained from human annotators while it is prohibitively expensive in practice, limited by human capability, and often beyond the reach of the open-source community~\citep{dong2024rlhf}. Prior work has demonstrated that training a preference reward model (RM) and using it to generate labels in a semi-supervised fashion can significantly boost model performance ~\citep{wu2024self,viethoangtranduong,sessa2025bond}. Notably, ~\citep{viethoangtranduong} shows that the 0.4B Pair-RM (used in \cref{sec:exp}) can support iterative preference learning and get strong performance on AlpacaEval-2. Process-reward-models are gaining significant attention due to their reliable inference-time scaling~\citep{zhang2025process,zhang2025openprm}. The llama-3-RM used in our paper is also trained on a multi-turn dataset. More recently, process-based self-rewarding language models~\citep{zhang2025process} are introduced to integrate the reward model and the policy into a single model. We believe that as LLMs continue to improve, they can increasingly serve as their own evaluators—following the ``LLM-as-a-judge" paradigm~\citep{zheng2023judging,zhang2025process} and autoregressive RM~\citep{xu2025genarm}. This makes it reliable to automate per-step feedback using LLM itself rather than humans.

\section{Limitation and open directions}
\label{app:open}
A natural open direction is to investigate the rate for the last iterate of \Cref{alg:forb} for which our work only establishes an asymptotic convergence result. 
A second interesting direction is to find other applications of our formulation of learning from general preferences over the space of occupancy measures.
In addition, at theoretical level is interesting to investigate whether the same conclusion offered by our work can extend to the infinite horizon setting. The main obstacle in that direction is to establish the analogous result to the factorization of the occupancy measure which we used to derive the game formulation given in \Cref{occgame}. For this reason, we think that the analysis of the infinite horizon setting will require new conceptual tools to be carried out.
From the practical point of view,  future work could extend our method to vision-language models (VLMs) for aligning both text and image modalities. One can also apply our approach in the AI safety domain, particularly as a potential multi-step defense mechanism against jailbreak attacks.

\section{Broader impact}
\label{sec:impact}
In this work, we propose novel algorithms for multi-step
alignment in LLMs and establish their theoretical guarantees. Our contributions aim to advance the alignment of
LLMs with human values, thereby improving their trustworthiness and societal utility. 
Our method can make LLMs better at understanding and following complex instructions over time. Our method could help improve AI systems used in education, math reasoning, finance reasoning, customer service, or other areas where multi-step inference matter.
We do not create any new
benchmarks for human preferences nor solicit human preferences for this study. As such, we do not expect any potential
violations of ethical standards, including those concerning
the use of human data. Our contributions are primarily
methodological and theoretical analysis of the convergence,
and we have taken care to ensure that our work complies
with all relevant ethical guidelines.


\end{document}